\definecolor{light-gray}{gray}{0.65}
\theoremstyle{plain}
\newtheorem{theorem}{Theorem}[section]
\newtheorem{proposition}[theorem]{Proposition}
\newtheorem{lemma}[theorem]{Lemma}
\theoremstyle{definition}
\newtheorem{assumption}[theorem]{Assumption}
\renewcommand{\a}{\boldsymbol{a}}
\newcommand{\e}{\boldsymbol{e}}
\newcommand{\f}{\boldsymbol{f}}
\renewcommand{\v}{\boldsymbol{v}}
\newcommand{\g}{\boldsymbol{g}}
\newcommand{\h}{\boldsymbol{h}}
\newcommand{\s}{\boldsymbol{s}}
\renewcommand{\d}{\boldsymbol{d}}
\renewcommand{\r}{\boldsymbol{r}}
\renewcommand{\u}{\boldsymbol{u}}
\newcommand{\w}{\boldsymbol{w}}
\newcommand{\y}{\boldsymbol{y}}
\newcommand{\x}{\boldsymbol{x}}
\newcommand{\z}{\boldsymbol{z}}
\newcommand{\btheta}{\boldsymbol{\theta}}
\newcommand{\bphi}{\boldsymbol{\phi}}
\newcommand{\brho}{\boldsymbol{\rho}}
\newcommand{\D}{\boldsymbol{D}}
\newcommand{\W}{\boldsymbol{W}}
\newcommand{\X}{\boldsymbol{X}}
\renewcommand{\H}{\boldsymbol{H}}
\newcommand{\J}{\boldsymbol{J}}
\newcommand{\I}{\boldsymbol{I}}
\newcommand{\A}{\boldsymbol{A}}
\newcommand{\B}{\boldsymbol{B}}
\newcommand{\extr}{{\rm extr}}
\newcommand{\bd}{{\rm bd}}
\newcommand{\conv}{{\rm conv}}
\newcommand{\bPi}{\bm{\Pi}}
\newcommand{\cB}{\mathcal{B}}
\newcommand{\cE}{\mathcal{E}}
\newcommand{\cF}{\mathcal{F}}
\newcommand{\cL}{\mathcal{L}}
\newcommand{\cN}{\mathcal{N}}
\newcommand{\cO}{\mathcal{O}}
\newcommand{\cP}{\mathcal{P}}
\newcommand{\cR}{\mathcal{R}}
\newcommand{\cS}{\mathcal{S}}
\newcommand{\cX}{\mathcal{X}}
\def\diag{\mathrm{diag}}
\newcommand{\bbR}{\mathbb{R}}
\newcommand{\bbE}{\mathbb{E}}
\DeclareMathOperator*{\maximize}{\textrm{maximize}}
\DeclareMathOperator{\tr}{Tr}
\definecolor{shadecolor}{RGB}{220,220,220}
\newcommand{\cG}{\mathcal{G}}
\newcommand{\bm}{\boldsymbol}
\title{Diverse Influence Component Analysis: A Geometric Approach to Nonlinear Mixture Identifiability}
\author{
  \textbf{Hoang-Son Nguyen} and \textbf{Xiao Fu} 
   \\
  School of Electrical Engineering and Computer Science\\
  Oregon State University\\
  \texttt{\{nguyhoa3, xiao.fu\}@oregonstate.edu}
}
\begin{document}

\maketitle

\begin{abstract}
Latent component identification from unknown \textit{nonlinear} mixtures is a foundational challenge in machine learning, with applications in tasks such as disentangled representation learning and causal inference. Prior work in \textit{nonlinear independent component analysis} (nICA) has shown that auxiliary signals---such as weak supervision---can support \textit{identifiability} of conditionally independent latent components. More recent approaches explore structural assumptions, e.g., sparsity in the Jacobian of the mixing function, to relax such requirements. In this work, we introduce \textit{Diverse Influence Component Analysis} (DICA), a framework that exploits the convex geometry of the mixing function’s Jacobian. We propose a \textit{Jacobian Volume Maximization} (J-VolMax) criterion, which enables latent component identification by encouraging diversity in their influence on the observed variables. Under reasonable conditions, this approach achieves identifiability without relying on auxiliary information, latent component independence, or Jacobian sparsity assumptions. These results extend the scope of identifiability analysis and offer a complementary perspective to existing methods.
\end{abstract}

\section{Introduction}

\textit{Nonlinear mixture model identification} (NMMI) seeks to uncover latent components transformed by \textit{unknown} nonlinear functions. A typical data model of interest in the context of NMMI is as follows:
\begin{align}\label{eq:nmi_model}
    \x = \bm f(\s), ~\bm s\in\mathbb{R}^d,~\bm x\in\mathbb{R}^m,
\end{align}
where $\bm s=(s_1,\ldots,s_d) \sim p({\s})$ is a random vector following a certain distribution $p({\s})$ with support ${\cal S}$, $\bm f:\mathbb{R}^d \rightarrow \mathbb{R}^m$ is an unknown, nonlinear mixing function, $\x=(x_1,\ldots,x_m)\in \mathbb{R}^m$ is the observed data, and $s_i$ for $i\in [d]$ and $x_j$ for $j\in [m]$ represent the $i$th latent component and the $j$th observed feature, respectively.
The function $\f$ is modeled a diffeomorphism that maps a latent variable to a $d$-dimensional Riemannian manifold $\cX$ embedded in $\bbR^{m}$, where $m \geq d$ \cite{khemakhem2020vae,brady2023objcentric,zheng2023generalss}.
Given the observations (samples) of $\x$, NMMI amounts to recovering $\bm s$ and $\bm f$ up to certain acceptable or inconsequential ambiguities.
NMMI and variants play a fundamental role in understanding many machine learning tasks, e.g., latent disentanglement \cite{khemakhem2020vae,higgins2017beta}, causal representation learning \cite{scholkopf2021crl, kugelgen2023thesis}, object-centric learning \cite{brady2023objcentric}, and self-supervised learning \cite{lyu2022understanding,kugelgen2021ssl}.

The NMMI task is clearly ill-posed---in general, an infinite number of different $(\bm f,\bm s)$ could be found from observations of $\x$ under \eqref{eq:nmi_model}. Hence, establishing {\it identifiability} of $\bm f$ and $\bm s$ becomes a central topic in NMMI.
This identifiability challenge was extensively studied under the umbrella of \textit{nonlinear independent component analysis} (nICA) \cite{hyvarinen1999nica,hyvarinen2016tcl,hyvarinen2017stationary,hyvarinen2019auxvar}. A key take-home point is that nICA poses a much more challenging identification problem relative to ICA (where $\f$ is a linear system). That is, even if $s_1,\ldots,s_d$ are statistically independent, the model in \eqref{eq:nmi_model} is not identifiable \cite{hyvarinen1999nica}.

In recent years, much progress has been made in understanding identifiability of \eqref{eq:nmi_model}. The line of work \cite{hyvarinen2016tcl,hyvarinen2017stationary,hyvarinen2019auxvar,khemakhem2020vae} showed that if $s_1,\ldots,s_d$ are {\it conditionally} independent given a certain auxiliary variable $\bm u$ (which can be understood as additional side information), then $\bm f$ and $\bm s$ can be recovered to reasonable extents. Another line of work tackles identifiability by assuming that the mixing function $\bm f$ is \textit{structured} other than completely unknown. For example, the works \cite{achard2005pnl,lyu2022pnl,lyu2021pnl,ziehe2003pnl} make an explicit structural assumption that $\bm f$ is a post-nonlinear mixing function, the work \cite{hyvarinen1998conformal} assumes that $\bm f$ is conformal, and the more recent work \cite{kivva2022dgm} assumes that $\bm f$ is piecewise linear. Using these structures, the auxiliary information can be circumvented for establishing identifiability.

More recent advances propose to exploit \textit{implicit} structures (other than \textit{explicit} structures like post-nonlinearity) of $\bm f$.
Notably, the works \cite{zheng2022ss,zheng2023generalss,gresele2021ima} utilize structures defined over the Jacobian of $\bm f$ for identifying the model \eqref{eq:nmi_model}.
In particular, it was shown that as long as the Jacobian of $\bm f$ follows certain sparsity patterns, then the model \eqref{eq:nmi_model} is identifiable \cite{zheng2023generalss,lachapelle2022sparsity,brady2023objcentric}.
Imposing structural constraints on the Jacobian of $\bm f$ is arguably less restrictive compared to assuming explicit parameterization of $\bm f$. Structures of Jacobian reflect how the latent variables $s_1,\ldots,s_d$ affect the observed features $x_1,\ldots,x_m$, making the related assumptions admit intuitive physical meaning. Using structured Jacobian to model such influences is also advocated by several causal representation learning paradigms (see, e.g., \textit{independent mechanism analysis} (IMA) \cite{gresele2021ima} under the \textit{independent causal mechanism} (ICM) principle \cite{scholkopf2017causal}), and other frameworks, e.g, object-centric representation learning \cite{brady2023objcentric, lachapelle2023additive, brady2025interaction}.

\noindent
{\bf Open Question.}
The advancements have been encouraging, yet understanding to NMMI identifiability remains to be deepened.
In particular, the Jacobian sparsity-based approaches, e.g., \cite{zheng2022ss,zheng2023generalss,lachapelle2022sparsity,brady2023objcentric}, provided intriguing ways of establishing identifiability of the model in \eqref{eq:nmi_model}---without using auxiliary variables, latent component independence, or relatively restrictive structural constraints of $\bm f$.
However, the Jacobian sparsity assumptions in these works essentially assume that the observed features are only generated from a subset of $s_1,\ldots,s_d$, which sometimes may not hold.
It is tempting to circumvent using such strict sparsity-based assumptions in NMMI. 

{\bf Contributions.} In this work, we propose to make use of an alternative condition of $\bm f$'s Jacobian for NMMI.
We leverage the fact that, as long as the influences of $\bm s$ imposed on each $x_j$ for $j=1,\ldots,m$ are \textit{sufficiently diverse}, $\bm f$'s Jacobian exhibits an interesting convex geometry. This geometry is similar to the classical ``\textit{sufficiently scattered condition} (SSC)'' in the \textit{structured matrix factorization} (SMF) literature \cite{fu2015volmin,huang2014ssc,fu2018nmf,gillis2020nmf,fu2019nmf} and, particularly, the more recent developments in \textit{polytopic matrix factorization} (PMF) \cite{tatli2021polytopic} (also see \cite{hu2023l1, hu2023bca, sun2025dict}). As a consequence, taking intuition from volume-regularized SMF \cite{fu2015volmin,fu2016robust,miao2007endmember,tatli2021polytopic,hu2023bca,hu2023l1,lin2018mvie,sun2025dict}, we show that fitting the data model in \eqref{eq:nmi_model} together with maximizing the learned $\bm f$'s Jacobian volume \textit{provably} recovers $\bm f$ and $\bm s$ up to acceptable ambiguities.
The proposed \textit{Jacobian volume maximization} (J-VolMax) approach does not rely on auxiliary variables or statistical independence. More notably, an $\bm f$ with a \textit{dense} Jacobian can still be provably identified under our formulation. These advantages make our identifiability results applicable to a wide range of scenarios that are not covered by the existing literature, substantially expanding the understanding of model identifiability under \eqref{eq:nmi_model}. We tested the proposed approach on synthetic data and in a single-cell transcriptomics application. The results corroborate with our NMMI theory.

\noindent
{\bf Notation.} Please refer to Appendix \ref{app:notation} for details.

\section{Background}

\noindent
{\bf From ICA to nICA.} ICA
is arguably one of the most influential latent component identification approaches.
Classic ICA \cite{jutten1991bss, comon1994ica} assumes that $\bm x=\bm A\s$ with a nonsingular $\A$ and that 
\begin{align}\label{eq:indep}
    \textstyle p(\s) = \prod_{i=1}^{d}p_{i}(s_{i}),
\end{align}
where at most one $s_i$ is a Gaussian variable.
Then, the identifiability of $(\A,\s)$ up to permutation and scaling ambiguities can be established by finding a linear filter to output independent estimates.
Unfortunately, the nICA model, i.e., \eqref{eq:nmi_model} with condition \eqref{eq:indep}, is in general non-identifiable \cite{hyvarinen1999nica}. 

\noindent
{\bf nICA with Auxiliary Information.} More recent breakthroughs propose to use the following conditional independence model, i.e.,
\begin{align}
    \textstyle p(\s | \u) = \prod_{i=1}^{d}p_{i}(s_{i} | \u),
\end{align}
to establish identifiability of \eqref{eq:nmi_model}.
The side information $\u$ can be time frame labels \cite{hyvarinen2016tcl, hyvarinen2017stationary,lachapelle2022sparsity, lachapelle2024nonparametric,klindt2021sparse}, observation group indices \cite{morioka2024grouping}, or view indices \cite{gresele2020multiview}.  The takeaway from this line of work is that, using the \textit{variations} of $\bm u$, one can fend against negative effects (e.g., the existence of measure-preserving automorphism (MPA) \cite{kugelgen2023thesis,shrestha2024towards}) leading to non-identifiability under \eqref{eq:nmi_model}. The results are elegant and inspiring. Nonetheless, both $\bm u$ and conditional independence might not always be available.

{\bf nICA with Structured $\bm f$.} Another route to establish identifiability is to exploit prior structural information of $\bm f$.
For example, the works \cite{hyvarinen1999nica,buchholz2022conformal, horan2021isometry,zhang2008minimal,taleb1999pnl,ziehe2003pnl,kivva2022dgm,kori2024objcentric,lachapelle2023additive} used conformal, local isometry, close-to-linear, post-nonlinear, piecewise affine structures, and additive structures of $\bm f$, respectively.
Recent works, e.g., \cite{lyu2021pnl,lyu2022pnl,kivva2022dgm,kori2024objcentric,lachapelle2023additive}, also showed that some of these structures can be used for dependent component identification. Nonetheless, such \textit{explicit} structures of $\bm f$, e.g., post-nonlinear mixtures, only make sense when they are used for suitable applications (e.g., hyperspectral imaging \cite{lyu2021pnl} and audio separation \cite{ziehe2003pnl}), yet most problems in generative model learning and representation learning may not have such structures.

{\bf Exploiting Jacobian Structures.} Instead of imposing explicit structures directly on $\f$, it is also plausible to exploit the structures of the Jacobian of $\bm f$. Note that $[\J_{\f}(\s)]_{i,j} = \nicefrac{\partial x_i}{\partial s_j}$ characterizes how $x_i$ is influenced by the change of $s_j$.

The aforementioned IMA approach \cite{gresele2021ima}, inspired by the principle of ICM \cite{scholkopf2017causal}, assumes orthogonal columns of $\bm J_{\f}(\bm s)$ at each point $\s \in \cS$. 
But these developments lack comprehensive identifiability characterizations (see \cite{buchholz2022conformal}). On the other hand, the works \cite{moran2022sparsevae, zheng2022ss, zheng2023generalss, brady2023objcentric} assume that $\bm J_{\bm f}(\bm s)$ exhibits a certain type of sparsity pattern. 
These works formulate the NMMI problem as Jacobian sparsity-regularized data fitting problems.
For example, the work \cite{brady2023objcentric} proposed the following:
\begin{align}
    \min_{\bm f,\bm g}~\mathbb{E}_{\x} \big[ \| \bm f(\bm g(\x)) -\x \|_{2}^2 + \lambda_{\rm sp} c_{\rm sp}( \bm J_{\bm f} (\g(\x)) \big],\label{eq:sparsity_formulation}
\end{align}
where the first term finds a diffeomorphism $\bm f$ and its ``inverse'' $\bm g$ (in which $\g(\x)$ is supposed to recover $\s$), and the second term $c_{\rm sp}(\cdot)$ promotes sparsity of $\bm J_{\bm f}$---see \cite{rhodes2021l1reg, moran2022sparsevae, zheng2022ss,zheng2023generalss,brady2023objcentric} for their respective ways of sparsity promotion. The most notable feature of this line of work lies in its relatively relaxed conditions on $\bm s$ for establishing identifiability of the model.
For instance, the work \cite{brady2023objcentric} showed that identifiability can be established without using auxiliary variables or statistical independence of $\bm s$. On the other hand,  ${\bm J}_{\bm f}$ being sparse means that the observed features in $\bm x$ are only generated by subsets of $\bm s$. This assumption is justifiable in some applications, e.g., object-centric image generation \cite{brady2023objcentric, lachapelle2023additive, brady2025interaction}, but can be violated in other settings.

\section{Proposed Approach: Diverse Influence Component Analysis}
\label{sec:dica}

We present an alternative way to establish identifiability of the nonlinear mixture model in \eqref{eq:nmi_model}, without relying on statistical assumptions of $\s$, sparsity of $\J_{\f}$, or auxiliary information. Our approach is built upon \textit{convex geometry} of $\bm J_{\bm f}(\s)$ and an underlying connection between NMMI and SMF models \cite{fu2015volmin,fu2018nmf,fu2019nmf,gillis2020nmf,huang2014ssc}, particularly the recent advancements in \cite{tatli2021polytopic} and variants in \cite{hu2023bca,hu2023l1,sun2025dict}.

\noindent
{\bf Preliminaries of Convex Geometry.}
In Appendix \ref{app:prelim}, we give a brief introduction to the notions (e.g., convex hull, polar set, \textit{maximal volume inscribed ellipsoid} (MVIE)) that we use in our context.

\subsection{Sufficiently Diverse Influence}
\label{subsec:sdi}

\noindent {\bf Motivation.} We are interested in understanding how the influences of $\bm s$ on $\bm x$ affect the identifiability of \eqref{eq:nmi_model}, beginning with a close examination of existing Jacobian assumptions.
First, the sparsity assumptions on $\bm J_{\bm f}$ in \cite{peebles2020hessian, rhodes2021l1reg, moran2022sparsevae, zheng2022ss, zheng2023generalss, brady2023objcentric} embody key principles in generative modeling and causal representation learning. For instance, the ICM principle \cite{scholkopf2017causal} promotes generative models where latent variables exert distinct influences on observed features \cite{gresele2021ima}. Sparse $\bm J_{\bm f}$ models share this view when the sparsity patterns of $[\bm J_{\bm f}]_{i,:}$ for $i=1,\ldots,m$ (or $[\bm J_{\bm f}]_{j,:}$ for $j=1,\ldots,d$) differ sufficiently.
However, sparsity is a stringent assumption---the hard constraint that each $x_i$ depends only on a subset of $s_1,\ldots,s_d$ may not always hold in practice.
Second, the IMA method \cite{ghosh2023ima} embodies the ICM principle differently: by enforcing column orthogonality in $\bm J_{\bm f}$, it assumes that the influences of $s_i$ and $s_j$ on $\bm x$ are mutually perpendicular. This circumvents sparsity-based constraints and permits all $x_i$ to depend on all latent components. Nonetheless, exact orthogonality is often difficult to justify, and IMA only guarantees local identifiability of \eqref{eq:nmi_model} (see \cite{buchholz2022conformal}), while global identifiability remains unresolved.
These observations motivate us to develop an alternative framework that flexibly models diverse interactions between $\bm s$ and $\bm x$, while ensuring global identifiability of nonlinear mixtures.

\noindent {\bf Diverse Influence.} To formally underpin the intuitive notion of ``diverse influence'' to an exact definition, we use convex geometry to characterize $\nabla f_1(\s), ..., \nabla f_m(\s)$ (i.e., the rows of $\bm J_{\bm f}$). Note that when the rows of $\bm J_{\bm f}$ are sufficiently distinct, it means that $\bm s$'s changes render different variations of $x_1,\ldots,x_m$. In particular, we will exploit the cases where some observed dimensions $x_i = f_i(\bm s)$ for $i\in [m]$ are affected by the changes of $s_1, ..., s_d$ in a sufficiently different way. 

\begin{mdframed}[backgroundcolor=gray!10,topline=false, rightline=false, leftline=false, bottomline=false]
\begin{assumption}[Sufficiently Diverse Influence (SDI)]\label{as:sdi}
At $\s \in \mathcal{S}$, there exists an $\s$-dependent weighted $L_1$ norm ball $\cB_{1}^{\w(\s)}$ such that $\nabla f_1(\s), ..., \nabla f_m(\s) \in \cB^{\w(\s)}_{1}$. In addition, the following two conditions hold:
\begin{enumerate}[itemsep=0pt, topsep=0pt, parsep=0pt, partopsep=0pt]
    \item $\cE(\cB_{1}^{\w(\s)}) \subseteq \conv\{\nabla f_1(\s), ..., \nabla f_m(\s)\} \subseteq \cB_{1}^{\w(\s)}$, and
    \item $\conv\{\nabla f_1(\s), ..., \nabla f_m(\s)\}^{*} \cap \bd(\cE(\cB_{1}^{\w(\s)})^{*}) = \extr(\cB^{\w(\s)}_{\infty})$,
\end{enumerate}
where ${\rm conv}\{\cdot\}$ denotes the convex hull of a set of vectors, and for a given polytope ${\cal P}$,
${\cal E}({\cal P})$ is its MVIE, ${\cal P}^\ast$ stands for its polar set, ${\rm extr}({\cal P})$ contains its extreme points, and ${\rm bd}({\cal P})$ is the polytope's boundary.
\end{assumption}
\end{mdframed}

The SDI condition describes the scattering geometry of $\nabla f_1(\s), ..., \nabla f_m(\s)$, which originates from the \textit{sufficiently scattered condition} (SSC) in the matrix factorization literature. The SSC first appeared in identifiability analysis of \textit{nonnegative matrix factorization} (NMF) \cite{huang2014ssc} and has various forms in the SMF literature; see \cite{fu2015volmin,fu2018nmf,fu2019nmf,gillis2020nmf,lin2018mvie,tatli2021polytopic}.
Here, SDI takes the form of SSC from \textit{polytopic matrix factorization} (PMF) \cite{tatli2021polytopic}; also see \cite{hu2023bca,hu2023l1,sun2025dict}. The key difference between SDI and SSC is that the former specifies the row-scattering pattern of a Jacobian $\bm J_{\bm f}(\bm s)$ at every $\bm s$ over a continuous manifold ${\cal S}$, yet SSC only characterizes the latent factors of a data matrix (e.g., $\bm W,\bm H$ in $\bm X=\bm W\bm H$) that do not involve nonlinear functions or derivatives. But their geometries are essentially the same.

{\bf Geometry of SDI.} Fig. \ref{fig:dica} [Middle] illustrates Condition 1 of Assumption~\ref{as:sdi} for $d=2$. One can see that SDI is about the spread of $\{\nabla f_1(\s), ..., \nabla f_m(\s)\}$ in an $\s$-dependent weighted $L_1$-norm ball $\cB_{1}^{\w(\s)}$. We note that $\cB_{1}^{\w(\s)}$ always exists, as long as $\{\nabla f_1(\s), ..., \nabla f_m(\s)\}$ are finite. Geometrically, Condition 1 requires that the gradient vectors' convex hull contains the MVIE of $\cB_{1}^{\w(\s)}$. Condition 2 is imposed on the polar sets $\conv\{\nabla f_1(\s), ..., \nabla f_m(\s)\}^{*}$, $\cE(\cB_1^{\w(\s)})^{*}$ and $\cB_{\infty}^{1/\w(\s)}$ (note that $\cB_{\infty}^{1/\w(\s)}$, weighted by $1/w_1(\s), ..., 1/w_d(\s)$, is the polar of $\cB_{1}^{\w(\s)}$). The condition implies that $\conv\{\nabla f_1(\s), ..., \nabla f_m(\s)\}^{*}$ touches the boundary of $\cE(\cB_1^{\w(\s)})^{*}$ at the vertices of $\cB_{\infty}^{1/\w(\s)}$, which is a vector of the form $[\pm w_1(\s)^{-1}, ..., \pm w_d(\s)^{-1}]^{\top} \in \bbR^{d}$. In the original domain, this means that the ellipsoid $\cE(\cB_{1}^{\w(\s)})$ must touch the convex hull $\conv\{\nabla f_1(\s), ..., \nabla f_m(\s)\}$ at the {facets} of $\cB_{1}^{\w(\s)}$.
Fig. \ref{fig:dica} [Right] shows the polar set view of Fig. \ref{fig:dica} [Middle]. Readers are referred to \cite{tatli2021polytopic} for a visualization of SSC under PMF, which further clarifies the connection between SSC and SDI.

\begin{figure}[t!]
    \centering
    \includegraphics[width=\linewidth]{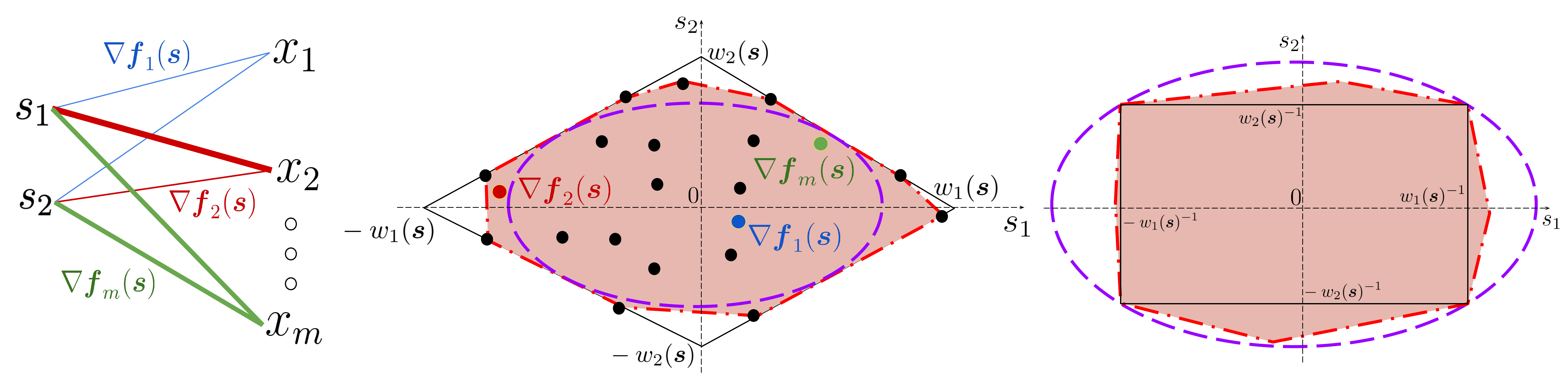}
    \caption{\textbf{[Left]} $\s$, $\x$, and $\nabla f_i(\s)\in\bbR^d$ for $d=2$, $\forall i\in[m]$; line thickness indicates the magnitude of influence of $s_i$ on $x_j$.
\textbf{[Middle]} Condition 1 in Assumption \ref{as:sdi} for $d=2$: axes represent $\partial f_i/\partial s_1$ and $\partial f_i/\partial s_2\in\bbR$; the pink region is ${\rm conv}\{\nabla f_1(\s),\ldots,\nabla f_m(\s)\}$, the dashed purple ellipse is $\cE(\cB_{1}^{\w(\s)})$, and the solid black diamond is $\cB_{1}^{\w(\s)}$.
\textbf{[Right]} Condition 2 in Assumption \ref{as:sdi}: shaded region shows ${\rm conv}\{\nabla f_1(\s),\ldots,\nabla f_m(\s)\}^{*}$, dashed ellipse shows $\cE(\cB^{\w(\s)})^{}$, and solid rectangle shows $({\cal B}_{1}^{\w(\s)})^\ast = {\cal B}_{\infty}^{\w(\s)}$.
}
\label{fig:dica}
\end{figure}

{\bf Physical Meaning of SDI.} 
SDI reflects how \textit{$\s$ diversely affects $x_1,\ldots,x_m$}. Under SDI, there exist some observed features positively influenced by $s_j$ (i.e., $\partial x_i / \partial s_j > 0$), while also some features negatively influenced by $s_j$ (i.e., $\partial x_{i'} / \partial s_j < 0$). This pattern likely holds in many applications with high-dimensional data (i.e. $m \gg d$). 
This is because each $x_i$ can only either be positively or negatively influenced by $s_j$. 
As $m$ increases there would be more features that are affected differently by $s_j$. Note that the positive and negative influences need not be symmetric (i.e., $\partial x_i / \partial s_j \neq -\partial x_{i'} / \partial s_j$ {is allowed}), as illustrated in Fig. \ref{fig:dica} [Middle].

In addition, SDI assumes that each latent component $s_k$ \textit{dominantly} influences at least two observed features, $x_{i_k^+}$ and $x_{i_k^-}$; here, ``+'' and ``-'' indicate that the features are positively and negatively affected by $s_k$, respectively. In other words, features satisfying the following should exist:
\begin{align}
    \partial x_{i_k^+} / \partial s_{k}\gg \partial x_i / \partial s_j,~\forall j\neq i_k^+~~~\text{and}~~~  \partial x_{i_k^-} / \partial s_{k} \ll \partial x_{i^-} / \partial s_j,~\forall j\neq i_k^-.\label{eq:sdi_meaning}
\end{align}

{\bf Discussion.} Beyond the basic geometric and physical interpretations, some additional remarks on other aspects of SDI are as follows:

{\it (i) Dependent $\s$ and Dense Jacobian Can Satisfy SDI.} Notably, SDI does not impose statistical assumptions on $\s$ (e.g., conditionally independent given auxiliary variables like in \cite{hyvarinen2016tcl,hyvarinen2017stationary,hyvarinen2019auxvar}). In addition, SDI does not rely on the sparsity of $\bm J_{\bm f}$ as in \cite{zheng2022ss, zheng2023generalss, brady2023objcentric}. In fact, $\bm J_{\bm f}$ {\it can be completely dense} and at the same time satisfies SDI (see Fig.~\ref{fig:dica} [Middle]). 

{\it (ii) SDI Favors $m\gg d$ Cases.}
The SDI condition is arguably easier to satisfy when $m\gg d$ (which is justified in many applications, such as image/video generation \cite{locatello2019challenging, brady2023objcentric, klindt2021sparse}).
It is evident that SDI requires at least $m = 2d$, which corresponds to the case where $\nabla f_1(\s), ..., \nabla f_m(\s)$ are located at $\pm w_1(\s)\e_1, ..., \pm w_d(\s)\e_d$.  
To see why SDI is in favor of larger $m$'s, consider a case where the rows of $\bm J_{\bm f}(\bm s)$, i.e., $\nabla f_i(\s)$, are drawn from a certain continuous distribution supported on $\cB_{1}^{\w(\s)}$. Then, a large $m$ means that more realizations of $\nabla f_i(\s)$'s are available. Therefore, for a fixed $d$, $m \rightarrow \infty$ leads to $\conv\{\nabla f_1(\s), ..., \nabla f_m(\s)\}$ increasingly covering more of $\cB_{1}^{\w(\s)}$ and eventually becoming $ {\cal B}_{1}^{\w(\s)}$, which ensures that SDI condition is met. A related note is that a matrix factor $\bm W\in \mathbb{R}^{m\times d}$ with larger $m$ under fixed $d$ would have higher probabilities to satisfy SSC, which was formally studied in \cite{ibrahim2019crowdsourcing}, using exactly the same insight.

{\it (iii) SDI Encodes Influence Variations.}
Lastly, the SDI condition allows the gradient pattern to vary from point to point on $\cS$: at different $\s \in \cS$, both the ball $\cB_{1}^{\w(\s)}$ and the position pattern of $\nabla f_1(\s), ..., \nabla f_m(\s)$ in $\cB_{1}^{\w(\s)}$ can be different, as long as the gradients are sufficiently spread out in different directions as in Assumption \ref{as:sdi}.

\subsection{Proposed Learning Criterion}
\label{subsec:jvolmax}

Similar to the NMMI literature, e.g., \cite{hyvarinen2016tcl,hyvarinen2017stationary,hyvarinen2019auxvar,zheng2022ss,zheng2023generalss}, the goal of identifiability-guaranteed learning in this work is to find an invertible function (over the $\mathbb{R}^d$ manifold) $\widehat{\f}$ such that $\widehat{\bm s}=\widehat{\f}^{-1} (\x)=\widehat{\bm f}^{-1}\circ \bm f(\s)$ recovers $\s$ to a reasonable extent. In practice, we use a neural network (supposedly a universal function representer) $\bm f_{\bm \theta}$ as our learning function. To ensure the invertibility of $\bm f_{\bm \theta}:\mathbb{R}^{d}\rightarrow \mathbb{R}^m$ over the manifold ${\cal S}\subseteq \mathbb{R}^d$, we use another neural network $\bm g_{\bm \phi}:\mathbb{R}^{m}\rightarrow \mathbb{R}^d$ and enforce
\begin{align}\label{eq:autoencoder}
    \x = \f_{\bm \theta}(\g_{\bm \phi}(\x)),~\forall \x\in {\cal X}.
\end{align}
Note that if the above holds, both $\f_{\bm \theta}$ and $\g_{\bm \phi}$ are invertible over the data-generating $d$-dimensional manifold. Eq. \eqref{eq:autoencoder} is nothing but a stacked autoencoder, which by itself does not ensure identifiability of the model \eqref{eq:nmi_model}.
To utilize SDI for establishing identifiability, we propose the following learning criterion, namely, \textit{Jacobian volume maximization} (J-VolMax):

\begin{mdframed}[backgroundcolor=gray!10,topline=false, rightline=false, leftline=false, bottomline=false]
\begin{subequations}\label{eq:jvolmax}
\begin{align}
    {\text{(J-VolMax)}} \quad \maximize_{\btheta, \bphi} ~ & \mathbb{E}[\log\det(\J_{\f_{\bm \theta}}(\g_{\bm \phi}(\x))^{\top}\J_{\f_{\bm \theta}}(\g_{\bm \phi}(\x)))]  \label{eq:jvolmax_obj}\\
    \text{subject~to:} ~ & ||\J_{\f_{\bm \theta}}(\g_{\bm \phi}(\x))_{i, :}||_{1} \leq C,~~\forall i = 1, ..., m, \label{eq:jvolmax_polytope}\\
    &\x = \f_{\bm \theta}(\g_{\bm \phi}(\x)), ~~\forall \x \in \cX \label{eq:jvolmax_recons}
\end{align}
\end{subequations}
\end{mdframed}

The term $\log\det(\J_{\f_{\bm \theta}}(\g_{\bm \phi}(\x))^{\top}\J_{\f_{\bm \theta}}(\g_{\bm \phi}(\x)))$ represents the squared volume of the convex hull spanned by the columns of $\J_{\f_{\bm \theta}}$. 
 
Using this criterion, the partial derivatives contained in $\bm J_{\f_{\bm \theta}}(\widehat{\bm s})$ (where $\widehat{\bm s}=\g_{\bm \phi}(\x)$) are encouraged to scatter in space---reflecting the belief that the influences of $\bm s$ on different $x_i$'s are diverse.

\textbf{Identifiability Result.} Under the model in \eqref{eq:nmi_model} and Assumption \ref{as:sdi}, we show our main result:

\begin{mdframed}[backgroundcolor=gray!10,topline=false, rightline=false, leftline=false, bottomline=false]
\begin{restatable}[Identifiability of J-VolMax]{theorem}{mainthm}
\label{thm:infinitethm}
Denote any optimal solution of Problem \eqref{eq:jvolmax} as $(\widehat{\bm \theta}, \widehat{\bm \phi})$. Assume $\widehat{\bm f} =\bm f_{\widehat{\bm \theta}}$ and $\widehat{\bm g} =\bm g_{\widehat{\bm \phi}}$ are universal function representers. Suppose the model in \eqref{eq:nmi_model} and Assumption \ref{as:sdi} hold for every $\bm s\in {\cal S}$. 
Then, we have $\widehat{\bm s}=\widehat{\bm g}(\x) =\widehat{\bm g}\circ \bm f(\s)$ where
\begin{align}
    [\widehat{\bm s}]_i =[\widehat{\g}(\x)]_i =\rho_i(s_{\bm \pi(i)}),~\forall i\in [d],
\end{align}
in which $\bm \pi$ is a permutation of $\{1,\ldots,d\}$ and $\rho_i(\cdot):\mathbb{R}\rightarrow \mathbb{R}$ is an invertible function.
\end{restatable}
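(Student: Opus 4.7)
The plan is to reduce the infinite-dimensional identifiability question to a pointwise matrix-recovery problem via the chain rule, solve that problem with a PMF-style convex-geometric argument enabled by SDI, and finally lift the pointwise structure globally by a continuity argument.

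\textbf{Stage 1 (Chain-rule reduction).} I would first let $\bm h := \widehat{\bm g} \circ \bm f$. By the autoencoder constraint \eqref{eq:jvolmax_recons}, $\widehat{\bm f} = \bm f \circ \bm h^{-1}$ on $\cS$, and $\bm h$ is a diffeomorphism. Writing $\bm A(\bm s) := \bm J_{\bm h}(\bm s) \in \mathrm{GL}(d)$, the chain rule gives $\bm J_{\widehat{\bm f}}(\bm h(\bm s)) = \bm J_{\bm f}(\bm s)\,\bm A(\bm s)^{-1}$, whence
\[
\log\det\!\bigl(\bm J_{\widehat{\bm f}}^{\T}\bm J_{\widehat{\bm f}}\bigr) \;=\; \log\det\!\bigl(\bm J_{\bm f}^{\T}\bm J_{\bm f}\bigr) - 2\log|\det\bm A(\bm s)|.
\]
Since the first summand does not depend on $(\widehat{\bm\theta},\widehat{\bm\phi})$, J-VolMax is equivalent to minimizing $\bbE_{\bm s}[\log|\det\bm A(\bm s)|]$ subject to $\|\nabla f_i(\bm s)^{\T}\bm A(\bm s)^{-1}\|_1 \leq C$ for every $i$ and every $\bm s$. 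Both the objective and the constraint decouple across $\bm s$, so this reduces, pointwise at each $\bm s$, to: maximize $|\det\bm B|$ over $\bm B\in \mathrm{GL}(d)$ subject to $\|\bm B^{\T}\nabla f_i(\bm s)\|_1 \leq C$, with $\bm B := \bm A^{-1}$.

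\textbf{Stage 2 (Pointwise PMF identifiability).} The pointwise problem is exactly the dual volume-maximization formulation of polytopic matrix factorization against the $L_1$-ball polytope. By Assumption~\ref{as:sdi}, the rows $\{\nabla f_i(\bm s)\}_{i=1}^{m}$ live in a weighted $L_1$ ball $\cB_{1}^{\bm w(\bm s)}$ and are sufficiently scattered inside it in the SSC sense. Rescaling the coordinates by $\mathrm{diag}(\bm w(\bm s))$ brings SDI into the unit $L_1$ ball, in which setting the identifiability result of \cite{tatli2021polytopic} applies: the symmetry group of $\cB_{1}^{\bm 1}$ consists precisely of the signed permutation matrices, and any volume-maximizing feasible $\bm B^{\star}(\bm s)$ must differ from the identity by an element of this symmetry group (composed with the weight rescaling). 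Translating back, $\bm A^{\star}(\bm s) = \bm B^{\star}(\bm s)^{-1} = \bm \Lambda(\bm s)\,\bm P(\bm s)$, where $\bm P(\bm s)$ is a signed permutation matrix and $\bm \Lambda(\bm s)$ is a diagonal matrix whose entries are determined by $\bm w(\bm s)$ and $C$.

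\textbf{Stage 3 (Globalization and coordinate-wise recovery).} Because $\bm h$ is $C^{1}$, $\bm A^{\star}(\bm s) = \bm J_{\bm h}(\bm s)$ varies continuously in $\bm s$. Since signed permutations form a discrete set, $\bm P(\bm s)$ is locally constant; on the connected manifold $\cS$ it is globally constant, $\bm P(\bm s)\equiv\bm P$ corresponding to some permutation $\bm\pi$. Hence $\bm J_{\bm h}(\bm s) = \bm \Lambda(\bm s)\bm P$ has a single nonzero in each row whose position does not depend on $\bm s$, forcing $\partial h_i/\partial s_j = 0$ for all $j\neq \pi(i)$. Integrating, $h_i(\bm s) = \rho_i(s_{\pi(i)})$ for scalar maps $\rho_i$, and invertibility of $\bm h$ makes each $\rho_i$ invertible, giving the claim.

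\textbf{Main obstacle.} The crux is Stage 2: carefully adapting the PMF/SSC identifiability, which in its standard form is stated for matrix factorizations with a \emph{fixed} source polytope, to the Jacobian setting where the enclosing ball $\cB_{1}^{\bm w(\bm s)}$ varies with $\bm s$. One must absorb the $\bm s$-dependent weights into the diagonal factor $\bm \Lambda(\bm s)$ and verify that the two polar-set conditions in Assumption~\ref{as:sdi} translate, after this rescaling, into precisely the SSC hypothesis required by \cite{tatli2021polytopic} so that the ``signed permutation'' uniqueness can be invoked pointwise. A related subtlety is that the pointwise-optimal $\bm A^{\star}(\bm s)$ must be realizable as a Jacobian of a single $C^{1}$ map $\bm h$: this constrains $\Lambda_{ii}(\bm s)$ to depend only on $s_{\pi(i)}$ via the mixed-partials integrability condition, and it is exactly this constraint that makes the coordinate-wise nonlinearity $\rho_i$ emerge from what would otherwise be an $\bm s$-dependent diagonal ambiguity.
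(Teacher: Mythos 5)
Your proposal is correct and follows essentially the same route as the paper: reduce to the Jacobian of $\h=\widehat{\g}\circ\f$ via the chain rule, absorb the $\s$-dependent weights into a diagonal rescaling so that the PMF/SSC volume-maximization lemma of \cite{tatli2021polytopic} forces a signed permutation at each $\s$, and then use continuity of the Jacobian together with discreteness of permutations and connectedness of $\cS$ to fix a global permutation and integrate to coordinate-wise maps $\rho_i$. The one point you gloss slightly---that pointwise decoupling of the expected objective requires exhibiting a single feasible competitor whose Jacobian attains the pointwise optimum everywhere---is handled in the paper by an explicit construction of an element-wise reparametrization $\tilde{\h}$ with $\J_{\tilde{\h}}=C\D_{\w}(\s)$, and you correctly identify the associated integrability subtlety in your closing paragraph.
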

\end{mdframed}
Notice that we used the constraint $ ||\J_{\f_{\bm \theta}}(\g_{\bm \phi}(\x))_{i, :}||_{1} \leq C$ in \eqref{eq:jvolmax} where $C > 0$ is unknown. Under SDI, the ground-truth Jacobian is bounded by $||\J_{\f}(\s)_{i, :}||_{1} \in \cB_{1}^{\w(\s)}$ with an \textit{unknown} $\cB_{1}^{\w(\s)}$. Nonetheless, using an arbitrary $L_1$-norm ball with radius $C$ in \eqref{eq:jvolmax} does not affect identifiability of the J-VolMax criterion---the learned $\widehat{\bm s}$ will have a scaling ambiguity anyway.

The proof of the theorem consists of three major steps. First, we recast the nonlinear identifiability problem J-VolMax into its first-order derivative domain as a matrix-finding problem at each $\bm s\in {\cal S}$, which is closely related to intermediate steps in matrix factor identification under the PMF model \cite{tatli2021polytopic}. Second, consequently, we utilize SDI and algebraic properties from volume maximization-based PMF to underpin identifiability under J-VolMax at each $\bm s$ up to an $\bm s$-dependent permutation ambiguity. Third, we invoke continuity of $\bm f$ and its domain ${\cal S}$ to unify permutation ambiguity over the entire ${\cal S}$. The details can be found in Appendix \ref{app:infinitethm_proof}.

\textbf{Finite-Sample SDI and Identifiability.}
In Theorem~\ref{thm:infinitethm}, an assumption is that every $\bm s$ in the continuous domain ${\cal S}$ has to satisfy SDI, which could be relatively restrictive. To proceed, we show that, as $\bm f$ and the learned functions satisfy certain conditions, having a finite number of $\bm s$ satisfying SDI suffices to establish identifiability up to bounded errors. 

To see how we approach this,
consider a finite set $\cS_{N} := \{\s^{(1)}, ..., \s^{(N)}\}$ with $\cX_{N} := \{\x \in \cX: \x = \f(\s), \forall \s \in \cS_{N}\}$ such that Assumption \ref{as:sdi} is satisfied at each of the $N$ points in $\cS_{N}$. Note that at each $\bm s^{(n)}$, the optimal encoder $\widehat{\g}$ recovers $\s^{(n)}$ from the observation $\x^{(n)}$ up to permutation $\widehat{\bPi}(\s^{(n)})$ and an invertible element-wise map $\widehat{\brho}(\s^{(n)})$, i.e., 
\begin{align}\label{eq:finite_over_Sn}
\widehat{\g}(\x^{(n)}) = \widehat{\bPi}(\s^{(n)})\widehat{\brho}(\s^{(n)}),~\forall \x^{(n)}\in {\cal X}_N,
\end{align}
which is a direct result when using the J-VolMax learning criterion; see Lemma \ref{lemma:point}. The following result shows that under certain regularity conditions, if the set $\cS_{N}$ contains samples that locate densely enough in space, the learned encoder can recover the ground-truth $\bm s$ up to the same ambiguities as in Theorem~\ref{thm:infinitethm}, with a bounded error that decays as $N$ grows.

\begin{mdframed}[backgroundcolor=gray!10,topline=false, rightline=false, leftline=false, bottomline=false]
\begin{restatable}[Identifiability under Finite-sample SDI]{theorem}{finitethm}
\label{thm:finitethm}
{
Assume that there is a finite set $\cS_{N} := \{\s^{(1)}, ..., \s^{(N)}\}$ with $\cX_{N} := \{\x \in \cX: \x = \f(\s), \forall \s \in \cS_{N}\}$ such that the Assumption \ref{as:sdi} is satisfied at each of the $N$ points in $\cS_{N}$.
Let $\widehat{\g} \in \cG$ be the optimal encoder by J-VolMax criterion \eqref{eq:jvolmax}, and $\overline{\Theta}$ contains the parameters of the learned encoder and decoder. Further assume that the following regularity conditions hold:
\begin{enumerate}[leftmargin=*, labelsep=0.5em, itemsep=-0.5em]
    \item {The functions $\bm g=\bm f^{-1}$ and $\bm g_{\bm \phi}$ are from classes ${\cal G}'$ and ${\cal G}$, respectively, where ${\cal G}'\subseteq {\cal G}$.}  
    \item The functions $\f, \widehat{\g}, \widehat{\brho}$ are Lipschitz continuous with constants $L_{\f}, L_{\widehat{\g}}, L_{\widehat{\brho}} > 0$.
    \item There is a $\gamma > 0$ such that for any permutation matrix $\bPi \in \cP_d$ and $\bPi \neq \widehat{\bPi}(\s^{(n)})$ (from \eqref{eq:finite_over_Sn}), $$||\widehat{\g}(\x^{(n)}) - \bPi\widehat{\brho}(\s^{(n)})||_{2} \geq \gamma,~\forall n \in [N].$$
    \item For $\cN^{(n)} = \{\s \in \cS: ||\s - \s^{(n)}||_{2} < r^{(n)}\}$ with $r^{(n)} < \frac{\gamma}{2(L_{\f}L_{\widehat{\g}} + L_{\widehat{\brho}})}$, the union of the neighborhoods, $\cN := \bigcup_{n=1}^{N} \cN^{(n)}$, is a connected subset of $\cS$ and $V(\s; \overline{\Theta})$ is optimal for any $\s \in \cN$.
    \item The points $\s^{(1)},\ldots,\s^{(N)}\in {\cal S}_N$ densely locate in ${\cal S}$ such that
    \begin{align}
        \mathbb{P}(\s \in \cN) \big/ \mathbb{P}(\s \in \cS \setminus \cN) > G_{\max} \big/ G_{\min} > 1,
    \end{align}
    where $G_{\min}, G_{\max}$ are bi-Lipschitz constants of the Jacobian volume surrogate: for any parameters $\Theta_1, \Theta_2$ in $(\cF, \cG)$,
    \begin{align}
        G_{\min}||\Theta_1 - \Theta_2||_{2} \leq |V(\s; \Theta_1) - V(\s; \Theta_2)| \leq G_{\max}||\Theta_1 - \Theta_2||_{2},~\forall \s \in \cS.
    \end{align}
\end{enumerate}
Then, $\widehat{\g}(\x^{(n)}) = \widehat{\bPi}\widehat{\brho}(\s^{(n)}),\forall n \in [N]$ for a constant permutation matrix $\widehat{\bPi} \in \cP_d$. Furthermore, with probability at least $1-\delta$,
\begin{align}\label{eq:errorbound}
\bbE_{\substack{\s \sim p(\s)}}[||\widehat{\g}(\x) - \widehat{\bPi}\widehat{\brho}(\s)||_{2}] = \cO \left((L_{\f}L_{\widehat{\g}} + L_{\widehat{\brho}})\cR_N(\cG) + \sqrt{\nicefrac{\ln(1/\delta)}{N}}\right),    
\end{align}
where $\cR_{N}(\cG)$ is the empirical Rademacher complexity of the encoder class.
}
\end{restatable}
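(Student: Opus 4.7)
The plan is to split the argument into two parts: (i) align the per-sample permutations $\widehat{\bPi}(\s^{(n)})$ from Lemma~\ref{lemma:point} into a single constant permutation $\widehat{\bPi}$ using a continuity-plus-gap argument across the connected region $\cN$, and (ii) turn the zero empirical error implied by (i) into the expected-error bound via a standard Rademacher-complexity argument. Throughout, Lemma~\ref{lemma:point} is invoked at every $\s$ where $V(\s;\overline{\Theta})$ is optimal and SDI holds: it yields $\widehat{\g}(\x) = \widehat{\bPi}(\s)\widehat{\brho}(\s)$ for some permutation $\widehat{\bPi}(\s) \in \cP_d$ and invertible elementwise $\widehat{\brho}(\s)$.

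For part (i), I would fix any $n$ and any $\s \in \cN^{(n)}$; condition~4 makes $V(\s;\overline{\Theta})$ optimal, so Lemma~\ref{lemma:point} gives some $\widehat{\bPi}(\s)$. Suppose toward contradiction that $\widehat{\bPi}(\s) \ne \widehat{\bPi}(\s^{(n)})$. On one hand, the gap hypothesis (condition~3) yields
\begin{align*}
\|\widehat{\g}(\x^{(n)}) - \widehat{\bPi}(\s)\widehat{\brho}(\s^{(n)})\|_{2} \geq \gamma.
\end{align*}
On the other hand, substituting $\widehat{\g}(\x) = \widehat{\bPi}(\s)\widehat{\brho}(\s)$ and applying the triangle inequality together with the Lipschitz continuity of $\f, \widehat{\g}, \widehat{\brho}$ (condition~2) bounds the same quantity by $(L_{\f}L_{\widehat{\g}} + L_{\widehat{\brho}})\|\s - \s^{(n)}\|_{2} < \gamma/2$, a contradiction. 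Hence $\widehat{\bPi}(\cdot)$ is constant on each $\cN^{(n)}$. Because $\cN$ is connected (condition~4) and $\cP_d$ is discrete, the locally constant map $\widehat{\bPi}:\cN\to\cP_d$ is globally constant; denote it $\widehat{\bPi}$, so $\widehat{\bPi}(\s^{(n)}) = \widehat{\bPi}$ for every $n\in[N]$.

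For part (ii), define the error map $\ell(\s) := \|\widehat{\g}(\f(\s)) - \widehat{\bPi}\widehat{\brho}(\s)\|_{2}$. Part (i) gives $\ell(\s^{(n)})=0$ for all $n$, so $\tfrac{1}{N}\sum_n \ell(\s^{(n)}) = 0$. Since $\ell$ is a composition of Lipschitz maps with combined constant on the order of $L_{\f}L_{\widehat{\g}}+L_{\widehat{\brho}}$, Talagrand's contraction inequality bounds the Rademacher complexity of the induced loss class by $\cO\bigl((L_{\f}L_{\widehat{\g}}+L_{\widehat{\brho}})\cR_N(\cG)\bigr)$. A standard symmetrization/McDiarmid argument then gives, with probability at least $1-\delta$,
\begin{align*}
\bbE_{\s \sim p(\s)}[\ell(\s)] \;\leq\; \tfrac{1}{N}\sum_{n} \ell(\s^{(n)}) + \cO\!\left((L_{\f}L_{\widehat{\g}}+L_{\widehat{\brho}})\cR_N(\cG) + \sqrt{\tfrac{\ln(1/\delta)}{N}}\right),
\end{align*}
which collapses to \eqref{eq:errorbound} since the empirical mean vanishes. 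The hypothesis $\cG'\subseteq\cG$ (condition~1) is what permits this contraction, since then the optimal encoder lies in the class whose Rademacher complexity is being measured.

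The main obstacle is justifying that the empirical J-VolMax optimizer $\overline{\Theta}$ actually makes $V(\s;\overline{\Theta})$ optimal on the full neighborhood union $\cN$ rather than at the isolated anchors $\s^{(n)}$ only; this is where condition~5 enters. The proof of that step needs a uniform-convergence comparison between the empirical volume objective on $\cS_N$ and the population volume objective, using the bi-Lipschitz bounds $G_{\min},G_{\max}$ to certify that the probability mass on $\cN$ dominates the mass on $\cS\setminus\cN$ by enough that no spurious parameter vector can beat the SDI-identified one while respecting the autoencoder constraint \eqref{eq:jvolmax_recons} and the polytope constraint \eqref{eq:jvolmax_polytope}. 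Once condition~4 is properly underpinned this way, the permutation-propagation and Rademacher steps above are essentially routine.
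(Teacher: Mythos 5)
Your overall architecture matches the paper's: pointwise identification at the SDI anchors, unification of the per-anchor permutations via local constancy plus connectedness of $\cN$, and a contraction-plus-Rademacher bound applied to a loss with zero empirical risk. Part (ii) is essentially the paper's final step and is fine. Part (i), however, has a genuine gap: you invoke the pointwise identification $\widehat{\g}(\x) = \widehat{\bPi}(\s)\widehat{\brho}(\s)$ at an \emph{arbitrary} $\s \in \cN^{(n)}$, justified only by optimality of $V(\s;\overline{\Theta})$ there. That identification also requires the SDI condition to hold at $\s$ --- SDI is what forces the matrix $\H$ in the volume argument to be a signed permutation --- and SDI is assumed only at the $N$ anchor points, not on their neighborhoods. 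The paper's Lemma~\ref{lemma:fixed_perm} avoids this by working with the argmin map $\s \mapsto \arg\min_{\bPi \in \cP_d}\|\widehat{\g}(\f(\s)) - \bPi\widehat{\brho}(\s)\|_2$, which is defined on all of $\cN$ with no identification claim at non-anchor points; the gap condition and the Lipschitz bounds show this argmin equals $\widehat{\bPi}(\s^{(n)})$ throughout $\cN^{(n)}$, and local constancy plus connectedness of $\cN$ plus discreteness of $\cP_d$ gives a global constant. Your contradiction is repairable by switching to that formulation; note that restricting your comparison to anchor points only would not chain through $\cN$, since two overlapping neighborhoods need not contain each other's centers.

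The second gap is the one you flag yourself: the step that actually consumes condition~5 --- showing that the global J-VolMax optimizer attains the pointwise-optimal volume at the anchors, and hence the per-anchor identification \eqref{eq:finite_over_Sn} --- is deferred as ``the main obstacle.'' In the paper this is the substantive content of Lemmas~\ref{lemma:point_vol} and~\ref{lemma:point}: if $\overline{\Theta}$ failed to identify at the anchors, one constructs an explicit feasible competitor $\Tilde{\Theta}$ (a rescaling $\Tilde{\f} = \f \circ \Tilde{\h}$ with $\J_{\Tilde{\h}} = C\D_{\w}(\s)$, verified to satisfy \eqref{eq:jvolmax_polytope} and \eqref{eq:jvolmax_recons}) that strictly increases $V$ on a positive-measure union $U_N$ of neighborhoods, and then a two-region split of $\bbE[V]$ combined with the bi-Lipschitz bounds yields $C_{U_N}(\Tilde{\Theta}) - C_{U_N}(\overline{\Theta}) \geq G_{\min}\|\Tilde{\Theta}-\overline{\Theta}\|_2\,\bP(\s\in U_N) > G_{\max}\|\Tilde{\Theta}-\overline{\Theta}\|_2\,\bP(\s\in\cS\setminus U_N) \geq C_{\cS\setminus U_N}(\overline{\Theta})-C_{\cS\setminus U_N}(\Tilde{\Theta})$, contradicting optimality. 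Your sketch names the right mechanism (mass on $\cN$ dominating the complement via $G_{\max}/G_{\min}$), but the competitor construction and this comparison are the core of the finite-sample argument rather than a routine uniform-convergence step, so the proposal as written does not establish the first conclusion of the theorem.
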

\end{mdframed}
The theorem implies that if the number of $\{\s^{(n)}\}_{n=1}^N$ satisfying SDI is sufficiently large, the identification error defined in \eqref{eq:errorbound} can be made arbitrarily small.
Note that when ${\cal G}$ is not a universal function class---that is, its capacity is controlled, e.g., through bounded norms or Lipschitz constants---its empirical Rademacher complexity satisfies ${\cal R}_{N}({\cal G}) = {\cal O}(\nicefrac{1}{\sqrt{N}})$~\cite{bartlett2002rademacher}. Consequently, with probability at least $1-\delta$, the overall error rate is 
$
\mathcal{O}(\nicefrac{\sqrt{\ln(1/\delta)}+1}{\sqrt{N}})
= \tilde{\mathcal{O}}(1/\sqrt{N}).
$
The detailed proof is provided in Appendix~\ref{app:finitethm_proof}.

In Theorem~\ref{thm:finitethm},
Condition 1 means that the learning function class can faithfully represent $\bm g$.
Condition 2 requires that $\bm f$ and the learned functions have bounded continuity constants, which is a mild assumption as the learning functions are often represented using continuous function classes (e.g., neural networks).
Condition 3 assumes that at each point $\s^{(n)} \in \cS_{N}$, there is a unique optimal permutation between $\s^{(n)}$ and $\widehat{\g}(\x^{(n)})$. That is, $\gamma=0$ holds for the optimal permutation, but an error of $\gamma > 0$ occurs for all other permutations. Such a condition naturally holds under sufficient continuity of the functions. Condition 4 translates to the fact that the points $\s^{(1)}, ..., \s^{(N)} \in \cS_N$ locate densely to each other so that their neighborhoods $\cN^{(n)}$ (with radius $r^{(n)}$) overlap. Intuitively, if each $\cN^{(n)}$ has a unified permutation for all $\bm s$ within $\cN^{(n)}$, such overlapping leads to the same permutation for the union of all $\cN^{(n)}$'s. Condition 5 means that there are enough points $\s^{(1)}, ..., \s^{(n)}$ and they are densely positioned in $\cS$, such that $\cN$ covers a sufficiently large fraction of $\cS$. This required fraction depends on the bi-Lipschitz constants with respect to $\Theta$ of Jacobian volume surrogate $V(\s; \Theta)$.

\vspace{-0.5em}

\section{Related Works}

\vspace{-0.5em}

{\bf IMA, Sparse Jacobian, and Object-Centric Learning.} As mentioned in ``Motivation'' (see Sec.~\ref{subsec:sdi}), IMA \cite{gresele2021ima} and sparse Jacobian-based methods (e.g., \cite{rhodes2021l1reg, moran2022sparsevae, zheng2022ss, brady2023objcentric}) are conceptually related to our work. Additional remarks on connections between the SDI condition and IMA, Jacobian sparsity, anchor features \cite{moran2022sparsevae}, and object-centric methods \cite{brady2023objcentric, lachapelle2023additive, brady2025interaction}, can be found in Appendix~\ref{app:remarks_relatedworks}.

{\bf SSC and Matrix Factorization.} Volume-regularization approaches are popular in SMF models---i.e., $\X = \W\H$ with structural constraints (e.g., boundedness and nonnegativity) on $\W$ and/or $\H$ \cite{fu2018nmf, hu2023l1, sun2025dict, hu2023bca, tatli2021polytopic}---where SSC-like conditions are imposed on the matrix factors to ensure identifiability of $\W$ and $\H$. The work \cite{fu2015volmin} was the first to employ an SSC defined over the nonnegative orthant (originated from \cite{huang2014ssc}) to show that latent-factor volume minimization identifies the corresponding SMF model. There, SSC requires that the columns/rows of a factor matrix widely spread in the nonnegative orthant. This line of work later was generalized to SSC in different norm balls \cite{tatli2021polytopic, hu2023l1}. 
The SDI condition can be viewed as an extension of SSC in \cite{tatli2021polytopic} (also see a similar SSC in \cite{hu2023l1, sun2025dict}) into the Jacobian domain over a continuous manifold. Although NMMI and SMF contexts differ substantially, some mathematical properties resulted from SSC (coupled with a volume-maximizing criterion) in \cite{tatli2021polytopic} provide key stepping stones for identifiability analysis in this work.

{\bf Maximum Likelihood Estimation (MLE) and InfoMax for ICA.} Both InfoMax \cite{bell1995infomax} and MLE \cite{cardoso1998ica} handle ICA via optimizing the Jacobian volume in their formulations.
Nonetheless, the Jacobian volume arises in their learning criteria as a result of information-theoretic or statistical estimation-based derivation (e.g., being surrogate of entropy). 
Model identifiability under nonlinear mixture models has not been established under these frameworks yet.

\vspace{-0.5em}

\section{Numerical Results}
\label{sec:exp}

\vspace{-0.5em}

{\bf Implementation\protect\footnote{Our implementation code is available here: \href{https://github.com/hsnguyen24/dica}{https://github.com/hsnguyen24/dica}}.} Given $L$ realizations of $\x$, i.e., $\{\x^{(1)}, \x^{(2)}, ..., \x^{(L)}\}$, we use multi-layer perceptrons (MLPs) to parametrize $\f_{\bm \theta}$ and $\g_{\bm \phi}$. We implement a regularized version of J-VolMax in \eqref{eq:jvolmax} and train for $T$ iterations via a warm-up heuristic with the number of warm-up epochs is $T_{\rm w} < T$. Our loss function $\cL_t$ at the $t$th epoch is
\begin{align}
    \min_{\bm \theta, \bm \phi } ~ \cL_t := \textstyle\frac{1}{L}\sum_{n=1}^{L} \Big(||\x^{(n)} - \f_{\bm \theta}(\g_{\bm \phi}(\x^{(n)}))||^{2}_{2} - \lambda_{\rm vol}(t) \times c_{\rm vol} + \lambda_{\rm norm} \times c_{\rm norm}(t) \Big). \label{eq:implement}
\end{align}
In the above, $c_{\rm vol}$ is defined as
\begin{align}
    c_{\rm vol} := \log\det(\J_{\f_{\btheta}}(\g_{\bphi}(\x^{(n)}))^{\top}\J_{\f_{\btheta}}(\g_{\bphi}(\x^{(n)}))), \label{eq:cvol}
\end{align}
and $\lambda_{\rm vol}(t) := \frac{\lambda_{\rm vol}}{T_{\rm w}}\min\{t, T_{\rm w}\}$; i.e.,  $\lambda_{\rm vol}(t)$ linearly increases from $t = 0$ to a chosen $\lambda_{\rm vol} > 0$ at the end of warm-up phase, i.e., $t = T_{\rm w}$. 
Note that the explicit form of \eqref{eq:cvol} might impose computational challenges when $m$ and $d$ are large. Hence, for high-dimensional datasets, one can use a trace-based surrogate of $c_{\rm vol}$, reducing the per-iteration flops from $\mathcal{O}(m^3)$ to $\mathcal{O}(m^2)$; see Appendix \ref{app:exp_implement}.

The term $c_{\rm norm}(t)$ implements the norm constraint in \eqref{eq:jvolmax_polytope} via
\begin{align}
    c_{\rm norm}(t) := \begin{cases}
        ||\J_{\f_{\btheta}}(\g_{\bphi}(\x^{(n)}))||_{1} & \text{if } t \leq T_{\rm w}\\ \texttt{Softplus}\{||\J_{\f_{\btheta}}(\g_{\bphi}(\x^{(n)}))||_{1} - C\} & \text{if } t > T_{\rm w}
    \end{cases},
 \end{align}
with $\lambda_{\rm norm} > 0$. The element-wise function $\texttt{Softplus}(z) = \ln(1+e^{z}) \in \bbR$ is used as a smooth approximation of the hinge loss (see \cite{glorot2011rectifier}). {This term serves as a soft version of the norm constraint \eqref{eq:jvolmax_polytope} in J-VolMax; that is, it penalizes large $L_1$ norm values but does not put a hard constraint on their upper bound; see \cite{dias2009sisal}.

After the warm-up period, $C$ is set to be the average of $||\J_{\f_{\btheta}}(\g_{\bphi}(\x^{(n)}))||_{1}$ in the last $10$ epochs, to avoid that the $c_{\rm norm}$ term disproportionally skews the optimization process.
More discussions on implementation can be found in Appendix \ref{app:exp_implement}.}

{\bf Evaluation.} Following standard practice in NMMI \cite{hyvarinen2019auxvar, kugelgen2021ssl}, we calculate both the mean Pearson correlation coefficient ($MCC$) and the mean coefficient of determination $R^2$ over pairs of $d$ latent components, after fixing the permutation ambiguity via the Hungarian algorithm \cite{kuhn1955hungarian}. While $MCC$ can only measure linear correlation, the $R^2$ score can measure nonlinear dependence. A perfect $R^2$ score means there is a mapping between the estimated latent component and the ground truth.

{\bf Baselines.} We use the unregularized autoencoder (\texttt{Base}), autoencoder with the IMA contrast \cite{gresele2021ima, ghosh2023ima} (\texttt{IMA}), and the sparsity-regularized loss (\texttt{Sparse}) as in \eqref{eq:sparsity_formulation}. The $L_1$-norm regularization with weight $\lambda_{\rm sp} > 0$ is used to promote Jacobian sparsity, as in \cite{rhodes2021l1reg, zheng2022ss}.  To present the best performance of all the baselines under the considered settings, we tune their hyperparameters in a separately generated validation dataset with known ground-truth latent components. The hyperparameters $\lambda_{\rm vol}$, $\lambda_{\rm norm}$, and $\lambda_{\rm sp}$ are chosen by grid search from $\{10^{-2}, 10^{-3}, 10^{-4}, 10^{-5}\}$ on the same validation set.

\subsection{Synthetic Data Experiments}
\label{subsec:synthetic}

{\bf Data Generation.} We generate three types of mixtures to test our method. The generating process and ground-truth latent variables are unknown to our algorithms and the baselines. We use such controlled generation for constructing SDI-satisfying $\bm f$'s. In all simulations, we use two fully-connected ReLU neural networks with one hidden layer of $64$ neurons to represent $\f_{\btheta}$ and $\g_{\bphi}$.

\underline{\textbf{\textit{Mixture A}}}: {We use a linear mixture as the first checkpoint of our theory. Here, the latent components $\s$ are sampled from a normal distribution $p({\s}) = {\cal N}(\boldsymbol{0}, \boldsymbol{\Sigma})$, where the covariance $\boldsymbol{\Sigma}$ is drawn from a Wishart distribution $W_{p}(\I, d)$. This way, the components of $\s$ are dependent. The observed mixtures are created by generating a matrix $\A \in \bbR^{m \times d}$ to form $\x = \A\s$. The matrix $\A$ can be generated to approximately satisfy the SDI condition by sampling $m$ points in $\bbR^d$ randomly from an inflated $L_2$ ball, and then projecting those points onto the chosen weighted norm ball ${\cal B}_{1}^{\w(\s)}$ to create the $m$ rows of $\A$; see \cite{tatli2021polytopic} for a similar way to generate matrices that approximate SSC.}

\underline{\textbf{\textit{Mixture B}}}: {On top of $\z = \A\s$ as in Mixture A, we also apply an element-wise nonlinear distortion to create nonlinear mixtures, i.e., $\x = \bm f(\z)$ and $ [\bm f(\z)]_i = a\cos(z_i) + z_i,$ where $a \sim U(0.5, 1.0)$. Notice that $\J_{\f}(\s) = {\rm Diag}(1 - a\sin(\A\s))\A$ under this construction. Therefore, with 
$\A$ approximately satisfying SDI and
the nonlinear distortion weight $a$ being sufficiently small, $\J_{\f}(\s)$ also approximately satisfies the SDI.} 

\begin{table}[t!]
    \centering
    \scriptsize
    \caption{Nonlinear $R^2$ and $MCC$ scores (mean $\pm$ std., over $10$ random trials).} \label{tab:mixture} \vspace{0.5em}
    \begin{tabular}{llcccccc}
        \toprule
        \multicolumn{2}{c}{} & \multicolumn{2}{c}{\textbf{Mixture A}} & \multicolumn{2}{c}{\textbf{Mixture B}} & \multicolumn{2}{c}{\textbf{Mixture C}} \\
        \cmidrule(lr){3-4} \cmidrule(lr){5-6} \cmidrule(lr){7-8}
        \textbf{Model} & \textbf{($d, m$)} & \textbf{$R^2$} & \textbf{$MCC$} & \textbf{$R^2$} & \textbf{$MCC$} & \textbf{$R^2$} & \textbf{$MCC$} \\
        \midrule
        \multirow{4}{*}{\texttt{DICA}}    & $(2, 30)$ & $0.92 \pm 0.16$ & $0.93 \pm 0.15$ & $0.97 \pm 0.06$ & $0.98 \pm 0.03$ & $0.95 \pm 0.13$ & $0.97 \pm 0.09$ \\
        & $(3, 40)$ & $0.92 \pm 0.11$ & $0.94 \pm 0.08$ & $0.99 \pm 0.02$ & $0.99 \pm 0.01$ & $0.90 \pm 0.12$ & $0.95 \pm 0.07$ \\
        & $(4, 50)$ & $0.98 \pm 0.06$ & $0.98 \pm 0.04$ & $0.92 \pm 0.10$ & $0.95 \pm 0.07$ & $0.87 \pm 0.12$ & $0.92 \pm 0.07$ \\
        & $(5, 60)$ & $0.92 \pm 0.10$ & $0.93 \pm 0.09$ & $0.89 \pm 0.13$ & $0.93 \pm 0.08$ & $0.87 \pm 0.11$ & $0.93 \pm 0.06$ \\
        \midrule
        \multirow{4}{*}{\texttt{Sparse}} & $(2, 30)$ & $0.79 \pm 0.19$ & $0.83 \pm 0.12$ & $0.88 \pm 0.18$ & $0.82 \pm 0.12$ & $0.87 \pm 0.12$ & $0.92 \pm 0.07$ \\
        & $(3, 40)$ & $0.71 \pm 0.15$ & $0.78 \pm 0.13$ & $0.72 \pm 0.11$ & $0.80 \pm 0.09$ & $0.71 \pm 0.11$ & $0.83 \pm 0.07$ \\
        & $(4, 50)$ & $0.65 \pm 0.13$ & $0.72 \pm 0.11$ & $0.71 \pm 0.10$ & $0.81 \pm 0.06$ & $0.64 \pm 0.12$ & $0.79 \pm 0.07$ \\
        & $(5, 60)$ & $0.63 \pm 0.10$ & $0.71 \pm 0.07$ & $0.67 \pm 0.07$ & $0.78 \pm 0.05$ & $0.60 \pm 0.09$ & $0.76 \pm 0.06$ \\
        \midrule
        \multirow{4}{*}{\texttt{Base}}   & $(2, 30)$ & $0.88 \pm 0.13$ & $0.86 \pm 0.11$ & $0.81 \pm 0.17$ & $0.88 \pm 0.11$ & $0.77 \pm 0.12$ & $0.87 \pm 0.08$ \\
        & $(3, 40)$ & $0.69 \pm 0.08$ & $0.74 \pm 0.07$ & $0.68 \pm 0.12$ & $0.80 \pm 0.09$ & $0.62 \pm 0.14$ & $0.77 \pm 0.09$ \\
        & $(4, 50)$ & $0.54 \pm 0.12$ & $0.63 \pm 0.09$ & $0.63 \pm 0.15$ & $0.76 \pm 0.10$ & $0.48 \pm 0.08$ & $0.68 \pm 0.06$ \\
        & $(5, 60)$ & $0.42 \pm 0.09$ & $0.53 \pm 0.07$ & $0.54 \pm 0.06$ & $0.71 \pm 0.04$ & $0.47 \pm 0.06$ & $0.66 \pm 0.04$ \\
        \midrule
        \multirow{4}{*}{\texttt{IMA}}    & $(2, 30)$ & $0.86 \pm 0.14$ & $0.92 \pm 0.10$ & $0.84 \pm 0.13$ & $0.91 \pm 0.07$ & $0.84 \pm 0.14$ & $0.91 \pm 0.07$ \\
        & $(3, 40)$ & $0.70 \pm 0.10$ & $0.83 \pm 0.06$ & $0.69 \pm 0.13$ & $0.82 \pm 0.09$ & $0.67 \pm 0.07$ & $0.81 \pm 0.04$ \\
        & $(4, 50)$ & $0.70 \pm 0.10$ & $0.83 \pm 0.07$ & $0.68 \pm 0.09$ & $0.81 \pm 0.06$ & $0.56 \pm 0.11$ & $0.74 \pm 0.07$ \\
        & $(5, 60)$ & $0.66 \pm 0.05$ & $0.80 \pm 0.04$ & $0.63 \pm 0.10$ & $0.79 \pm 0.06$ & $0.53 \pm 0.08$ & $0.72 \pm 0.05$ \\
        \bottomrule
    \end{tabular} \vspace{-1em}
\end{table}

\underline{\textbf{\textit{Mixture C}}}: 
{This mixture is generated by $\x = \f(\s)$, where $\s \sim U(-1, 1)$ and the nonlinear mixing function is $\f(\cdot) = (f_{1}(\cdot), ..., f_{m}(\cdot))\in \bbR^{m}$. The functions $f_{k}: \bbR^{d} \to \bbR, k = 1, ..., m$ are MLPs whose layer weights are randomly generated. 
In addition, we pick $f_1, f_2, ..., f_{\lfloor m/2 \rfloor}$ and modify them as follows: \textit{a)} randomly choose $d-1$ elements from $[d]$; and \textit{b)} add an input layer that down-scales the chosen $d-1$ latent component $s_j$ by $\Tilde{s}_j = \alpha_j \beta_j s_j$, where $\alpha_j \sim U(0.001, 0.002)$ and $\mathbb{P}(\beta_j = 1) = \mathbb{P}(\beta_j = -1) = 1/2$. This way, the first half of $\nabla f_k(\s)$'s create a subset of gradients that are dominated by one latent component (cf. "Physical Meaning of SDI" in Section \ref{subsec:sdi}).}

We should mention that, unlike Mixtures A and B, we have less control on generating SDI-satisfying mixtures under Mixture C. Hence, this type of nonlinear mixture can be used to test the model robustness of our approach. 

\textbf{Results.} Table \ref{tab:mixture} shows both the $MCC$ and the nonlinear $R^2$ scores of all four methods, namely, \texttt{DICA}, \texttt{IMA}, \texttt{Sparse}, and \texttt{Base}, on Mixtures A, B, C. We observe that for Mixtures A and B, both scores attained by \texttt{DICA} are mostly above $0.92$ (except for $R^2$ under $(d,m)=(5,60)$). In contrast, \texttt{IMA}, \texttt{Sparse}, and \texttt{Base} do not perform as well. Importantly, the performance of \texttt{DICA} does not significantly degrade as $m$ and $d$ grows, but the baselines appear to struggle under larger $m$'s and $d$'s. 
For the more challenging Mixture C, \texttt{DICA}'s performance scores still have a substantial margin over the baselines, and the scores remain largely the same when $(d, m)$ varies.

\begin{table}[t!]
    \centering
    \scriptsize
  \caption{Nonlinear $R^2$ scores for different ratios $m/d$ (mean $\pm$ std., over $10$ random trials).} \label{tab:ablation} \vspace{0.5em}
    \begin{tabular}{|c|c|c|c|c|}
    \hline
        $(d, m)$ & \texttt{DICA} & \texttt{IMA} & \texttt{Sparse} & \texttt{Base} \\
        \hline
        $(3, 40)$ & $0.90 \pm 0.10$ & $0.63 \pm 0.15$ & $0.80 \pm 0.13$ & $0.63 \pm 0.10$\\
        $(3, 30)$ & $0.89 \pm 0.07$ & $0.63 \pm 0.12$ & $0.77 \pm 0.12$ & $0.63 \pm 0.14$ \\
        $(3, 20)$ & $0.82 \pm 0.17$ & $0.60 \pm 0.12$ & $0.74 \pm 0.13$ & $0.60 \pm 0.10$\\
        $(3, 10)$ & $0.71 \pm 0.17$ & $0.56 \pm 0.09$ & $0.75 \pm 0.12$ & $0.63 \pm 0.16$\\
        $(3, 5)$ & $0.64 \pm 0.09$ & $0.66 \pm 0.16$ & $0.78 \pm 0.16$ & $0.61 \pm 0.08$\\
        $(3, 3)$ & $0.55 \pm 0.07$ & $0.60 \pm 0.09$ & $0.58 \pm 0.16$ & $0.51 \pm 0.09$\\
        \hline
    \end{tabular} \vspace{-1em}
\end{table} 

\textbf{Ablation on $m/d$.} We examine the performance of DICA when varying the ratio $m/d$. Per our discussion (cf. Sec.~\ref{sec:dica}), SDI favors $m\gg d$ cases. As the ratio $m / d$ approaches $2$, SDI is less likely to be satisfied. In Table \ref{tab:ablation}, we use Mixture C to test the performance of DICA where we fix $d = 3$ and vary $m$. Consistent with our analysis, the performance of DICA deteriorates as $m$ approaches $2d$. For $m < 2d$, the performance of DICA is similar to vanilla autoencoder---showing that the effect of volume regularization vanishes in these cases. This result confirms that DICA is suitable for learning low-dimensional latent components from high-dimensional data, e.g., image representation learning.

\subsection{Inferring Transcription Factor Activities from Single-cell Gene Expressions}
\label{subsec:sergio}

\begin{figure}[t!]
    \centering
    \captionsetup[subfigure]{labelformat=parens, labelsep=space, font=small}
    \begin{subfigure}{0.32\textwidth}
        \includegraphics[width=\linewidth]{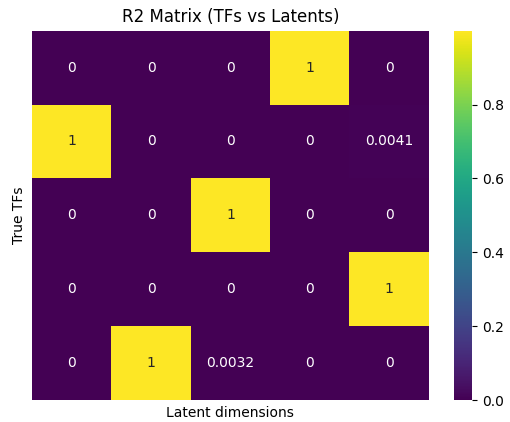}
        \caption{\texttt{DICA} (mean $R^2 \approx 1$)}
        \label{fig:sergio_dica}
    \end{subfigure}
    \begin{subfigure}{0.32\textwidth}
        \includegraphics[width=\linewidth]{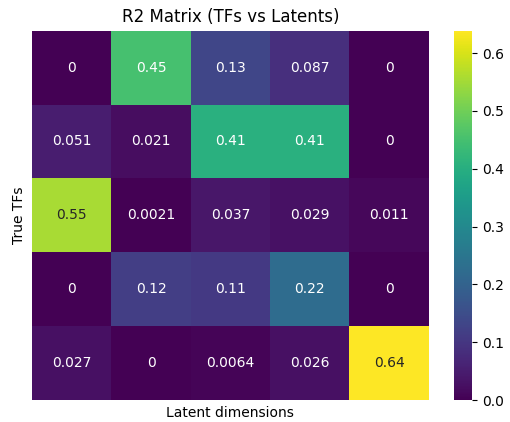}
        \caption{\texttt{Sparse} (mean $R^2 \approx 0.4544$)}
        \label{fig:sergio_sparse}
    \end{subfigure}
    \begin{subfigure}{0.32\textwidth}
        \includegraphics[width=\linewidth]{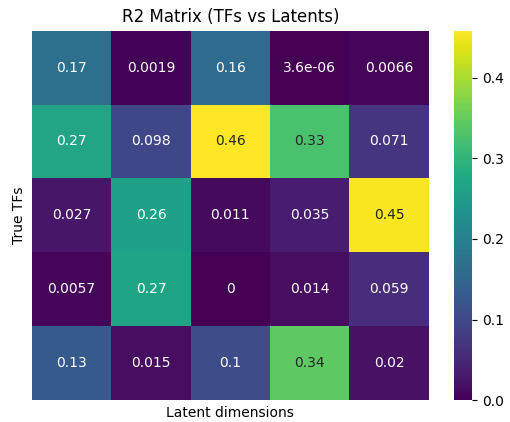}
        \caption{\texttt{Base} (mean $R^2 \approx 0.3381$)}
        \label{fig:sergio_base}
    \end{subfigure}
    \caption{Heatmap of $R^2$ scores between estimated components and ground-truth mRNA concentrations of TFs.} \vspace{-1.1em}
    \label{fig:sergio}
\end{figure}

In this experiment, we apply J-VolMax \eqref{eq:jvolmax} to inferring transcription factors (TFs) activities in single-cell transcriptomics analysis \cite{hecker2023tf}. Specifically, we employ J-VolMax to infer the ground-truth mRNA concentrations of TFs from gene expression data. In this context, our single-cell gene expressions are from a bio-realistic generator, namely, SERGIO \cite{dibaeinia2022sergio}; see the use of SERGIO in the NMMI literature \cite{morioka2023connectivity, morioka2024grouping}. The mRNA concentration levels $\s \in \bbR^{d}$ of the TFs are governed from a chemical Langevin equation, and the gene expressions (i.e., samples of $\x \in \bbR^{m}$) are generated by a gene regulatory mechanism $\f: \bbR^{d} \mapsto \bbR^{m}$ applied onto $\s$.

We use the TRRUST dataset of TF-gene pairs of mouse \cite{han2017trrust}. It is a manually curated dataset that includes high-confidence interactions between TFs and their target genes. These interactions are either activating or repressing. All entries are supported by experimental evidence from multiple published studies. We sample a sub-network from the large gene regulatory network provided by TRRUST, via which $m = 178$ genes are governed by $d = 5$ master regulator TFs. More details of the data generation process using SERGIO is in Appendix \ref{app:exp_sergio}.

Fig. \ref{fig:sergio} shows average $R^2$ scores (from $10$ trials) for each $s_1, ..., s_5$ up to best permutation, as obtained by \texttt{DICA}, \texttt{Sparse}, and \texttt{Base}. One can see that \texttt{DICA} successfully infers the ground-truth mRNA concentration level $s_1, ..., s_5$ of each TF {(up to a permutation and invertible mapping)}, but \texttt{Sparse} and \texttt{Base} are not able to attain reasonable $R^2$ scores. Furthermore, the proposed method can disentangle the influences of different TFs. Specifically, each learned latent component is only matched to a unique ground-truth TF with $R^2$ score $\approx 1$, yet the $R^2$ score is approximately $0$ when matching with other TFs. In contrast, \texttt{Base} could not disentangle the TFs at all. Similarly, \texttt{Sparse} could only mildly disentangle two TFs (i.e., the $3$th and the $5$th TFs). Additionally, the $MCC$ score of \texttt{DICA} reaches $\approx 1$, whereas \texttt{Sparse} and \texttt{Base} output $MCC \approx 0.6635$ and $MCC \approx 0.5721$, respectively.

Some remarks regarding this experiment is as follows. Note that the data generation process follows biology-driven mechanisms in SERGIO, and thus we do not have a way to enforce SDI. However, as the TFs are believed to have diverse influences on the gene expressions (e.g., due to the activating/repressing effects on genes by the TFs), it is reasonable to assume that SDI approximately holds in this application---which perhaps explains why \texttt{DICA} works well. On the other hand, some TFs may influence many gene expressions (\cite{lambert2018tf}), and many cross-interactions between TFs and genes exist (\cite{friedlander2016crosstalk}). Hence, the assumption that the Jacobian of $\bm f$ is strictly sparse might be stringent. Therefore, the sparse Jacobian approach may not be a good fit, as its performance indicates.

\vspace{-0.4em}

\section{Conclusion}
\label{sec:conclusion}

\vspace{-0.4em}

We introduced DICA, a new paradigm for nonlinear mixture learning with identifiability guarantees. Via the insight of convex geometry, we showed that, as long as the latent components have sufficiently different influences on the observed features so that a geometric condition, namely, the SDI condition, is satisfied, the mixture model is identifiable via fitting the generative model and maximizing the Jacobian of the learning function simultaneously. This J-VolMax criterion ensures identifiability without relying on many assumptions in the NMMI literature, e.g., availability of auxiliary information, independence of latent components, or the sparsity of the mixing function's Jacobian, thus offering a valuable alternative to existing NMMI methods. The experiments supported our theory.

\textbf{Limitations and Future Works.} We noticed a number of limitations. First, the DICA framework has a $\log\det$ term to compute, which poses a challenging optimization problem. This is particularly hard as this term has to be evaluated at every realization of $\bm s$. How to optimize the J-VolMax criterion efficiently is an interesting yet challenging direction to consider. Second, the identifiability analysis was done under ideal conditions where noise is absent, one has access to unlimited samples, and the learner class is assumed to perfectly include the target function. While J-VolMax is empirically shown to be quite robust in experiments, a more thorough identifiability analysis under practical conditions would close the gap, e.g., by analyzing sample complexity under noise and model mismatches.

\clearpage

{\bf Acknowledgment}: This work is supported in part by the National Science Foundation (NSF) CAREER Award ECCS-2144889. The authors would like to thank Sagar Shrestha for insightful discussions on the identifiability analysis, and the anonymous reviewers for constructive feedbacks.

\printbibliography


\clearpage

\begin{center}
    {\large {\bf Supplementary Material of ``Diverse Influence Component Analysis:\\A Geometric Approach to Nonlinear Mixture Identifiability''}}
\end{center}

\appendix

\section{Preliminaries}

\subsection{Notation}
\label{app:notation}
\begin{itemize}
    \item For $m \in \mathbb{N}$, $[m]$ is a shorthand for $\{1, 2, ..., m\}$. 
    \item $x, \x$ and $\X$ denotes a scalar, a vector, and a matrix, respectively.
    \item $\X_{i,j}$, $[\X]_{i,j}$ and $x_{i,j}$ denotes the entry at $i$th row and $j$th column of $\X$. $\X_{i:}$ and $\X_{:j}$ denotes the $i$th row and $j$th column of $\X$.
    \item For a vector, $\|\cdot\|_{p}$ denotes $L_{p}$ vector norm; for a matrix, $\|\cdot\|_{p}$ denotes the entry-wise $L_{p}$ matrix norm and $\|\cdot\|$ denotes the spectral norm.
    \item The Jacobian of a function $\f: \bbR^d \to \bbR^m$ at point $\s$ is a matrix $\J_{\f}(\s)$ of partial derivatives such that $[\J_{\f}(\s)]_{i,j} = \nicefrac{\partial f_i}{\partial s_j}$; $\nabla f (\s) = [\nicefrac{\partial f (\s)}{\partial s_1} \dots \nicefrac{\partial f (\s)}{\partial s_d}]^\top$ denotes the gradient of scalar function $f: \bbR^d \to \bbR$; hence, $[\J_{\f}(\s)]_{i:} = \nabla f_i (\s)^\top$.
    \item $\conv\{\cdot\}$ denotes the convex hull of a set of vectors.
    \item $\cB_{p}(R)$ being the unweighted $L_{p}$-norm ball with radius $R > 0$, and $\cB_{p}$ is shorthand for the unweighted unit $L_p$-norm ball (i.e., $R = 1$).
    \item $\cB^{\w(\s)}_1 = \{\y \in \bbR^{d}: \sum_{k=1}^{d}w_k(\s)^{-1}|y_k| \leq 1\}$ is an $L_1$-norm ball weighted by $1/w_1(\s), ..., 1/w_d(\s) > 0$.
    \item $\cE(P)$ is the maximum volume inscribed ellipsoid (MVIE) of an origin-centered convex polytope $P$.
    \item Polar of a set $\cal C$ is ${\cal C}^\ast=\{\x \in \mathbb{R}^{d}: \x^{\top}\y \leq 1,\;\forall \y \in \cal C\}$.
    \item $\bd(P)$ is the boundary of a set $P$.
    \item $\extr(P)$ is the set of extreme points (i.e., vertices) of a polytope $P$.
    \item $\cP_d$ is the set of $d \times d$ permutation matrices.
    \item $\overline{\Theta} = [\overline{\btheta}, \overline{\bphi}]$ is shorthand for the vector of parameters of parametrized encoder $\f_{\overline{\btheta}}$ and decoder $\g_{\overline{\bphi}}$.
    \item $V(\s; \overline{\Theta}) := \log\det(\J_{\f_{\overline{\btheta}}}(\g_{\overline{\bphi}}(\x))^{\top}\J_{\f_{\overline{\btheta}}}(\g_{\overline{\bphi}}(\x)))$ is shorthand for the log-determinant of Jacobian matrix of $\overline{\Theta} := [\overline{\btheta}, \overline{\bphi}]$, evaluated at $\x = \f(\s)$; this is used as the Jacobian volume surrogate in J-VolMax objective \eqref{eq:jvolmax_obj}.
\end{itemize}

\subsection{Background on Convex Geometry}
\label{app:prelim}
In this section, we briefly introduce the key notions in convex geometry that will be used in the paper.

\noindent
\textbf{Convex Hull.} Given a set $S \subset \bbR^{d}$, its convex hull $\conv\{S\}$ is the set of all convex combinations of vectors in $S$, i.e.,
$$
\conv\{S\} = \left\{ \sum_{i=1}^{m}\alpha_i \x_i: \x_i \in S, \alpha_i \geq 0, \sum_{i=1}^{m}\alpha_i = 1 \right\}.
$$

\noindent
\textbf{Maximum Volume Inscribed Ellipsoid (MVIE).} An MVIE (sometimes called \textit{John ellipsoid}) of a bounded convex set $C$ is the ellipsoid of maximum volume that lies inside $C$, which can be represented as
$$
\cE(C) = \{\B\u + \d : ||\u||_{2} \leq 1\}
$$
for $\B \in \bbR^{d \times d}, \d \in \bbR^{d}$, and $\B$ is a symmetric positive-definite matrix (i.e., $\B \in \mathbb{S}_{++}^{d}$). $\cE(C)$ can be obtained from solving
\begin{align*}
    \max_{\B \in \mathbb{S}_{++}^{d}, \d \in \bbR^{d}} \quad \log\det \B \quad \text{s.t.} \sup_{\u: ||\u||_{2} \leq 1}I_{C}(\B\u + \d) \leq 0,
\end{align*}
with $I_{C}(\cdot)$ being the indicator function of whether a point is in $C$. Note that every non-empty bounded convex set has an unique MVIE. See \cite[Section 8.4.2]{boyd2004convex} for more details. Note that the MVIE has the following linear scaling property:

\begin{proposition}
\label{prop:mvie_scaling}
If $T: \bbR^{d} \mapsto \bbR^{d}$ is an invertible linear map and $C$ is a non-empty compact convex set, then the MVIE of $T(C)$ is the image under $T$ of the MVIE of $C$, i.e.
$$
\cE(T(C)) = T(\cE(C)).
$$
\end{proposition}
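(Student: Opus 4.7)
The plan is to prove this equivariance by exploiting two elementary facts about invertible linear maps: they send ellipsoids bijectively to ellipsoids while preserving containment, and they rescale all volumes by the same constant factor $|\det T|$. Combining these, the volume-maximization problem defining $\cE(T(C))$ is equivalent, up to a positive multiplicative constant in the objective, to the one defining $\cE(C)$, so the two maximizers must correspond under $T$.

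First, I would write $T(\x) = \M\x$ for some invertible $\M \in \bbR^{d\times d}$ and establish that the family of ellipsoids is preserved. For any ellipsoid $E = \{\B\u + \d : \|\u\|_2 \leq 1\}$ with $\B \in \mathbb{S}_{++}^d$ and $\d \in \bbR^d$, one directly computes
$$
T(E) = \{\M\B\u + \M\d : \|\u\|_2 \leq 1\}.
$$
Since $\M\B$ is invertible and (after symmetrization via the polar decomposition, which does not change the set of points) can be taken in $\mathbb{S}_{++}^d$, $T(E)$ is again an ellipsoid in the parameterization used by the definition in Appendix~\ref{app:prelim}. Because $T$ is a bijection of $\bbR^d$, we have $E \subseteq C \iff T(E) \subseteq T(C)$. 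The standard change-of-variables identity gives $\mathrm{vol}(T(E)) = |\det \M|\,\mathrm{vol}(E)$, so volumes of inscribed ellipsoids on the two sides are related by the same positive constant.

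Next I would use these two observations to match the optimizers. Let $E^\star = \cE(C)$ denote the MVIE of $C$, so $T(E^\star)$ is an ellipsoid inscribed in $T(C)$ with volume $|\det \M|\,\mathrm{vol}(E^\star)$. Conversely, if $F^\star = \cE(T(C))$ is the MVIE of $T(C)$, then $T^{-1}(F^\star)$ is an ellipsoid inscribed in $C$ with volume $|\det \M|^{-1}\,\mathrm{vol}(F^\star)$. By maximality of $E^\star$ in $C$ and of $F^\star$ in $T(C)$,
$$
|\det \M|^{-1}\,\mathrm{vol}(F^\star) \leq \mathrm{vol}(E^\star), \qquad |\det \M|\,\mathrm{vol}(E^\star) \leq \mathrm{vol}(F^\star),
$$
which together force $\mathrm{vol}(F^\star) = |\det \M|\,\mathrm{vol}(E^\star) = \mathrm{vol}(T(E^\star))$. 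Hence $T(E^\star)$ is itself a maximum-volume inscribed ellipsoid of $T(C)$. Invoking uniqueness of the MVIE for a nonempty bounded convex set (stated at the end of Appendix~\ref{app:prelim} via \cite[Section~8.4.2]{boyd2004convex}), we conclude $\cE(T(C)) = T(\cE(C))$.

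I do not anticipate any real obstacle here; the argument is essentially a direct consequence of the definition of MVIE together with the linear change-of-variables formula. The only minor technical point worth flagging is the application of MVIE uniqueness, which requires $C$ (equivalently $T(C)$) to have nonempty interior in its affine hull; since $T$ is an invertible linear bijection, $T(C)$ inherits this property from $C$, so no separate case analysis is needed. If degenerate (lower-dimensional) $C$ must be accommodated, one can restrict both sides to the affine hull of $C$ (pushed forward by $T$) and repeat the same argument there.
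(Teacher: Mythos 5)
Your proof is correct: the paper itself does not prove Proposition \ref{prop:mvie_scaling} but simply defers to \cite[Section 8.4.3]{boyd2004convex}, and your argument (ellipsoids map to ellipsoids via the polar decomposition of $\M\B$, containment is preserved by bijectivity, volumes scale uniformly by $|\det \M|$, and uniqueness of the MVIE forces the optimizers to correspond) is exactly the standard affine-equivariance argument found in that reference. Your closing remark on the nonempty-interior requirement for MVIE uniqueness is a legitimate refinement, since the paper's blanket claim that ``every non-empty bounded convex set has an unique MVIE'' is only accurate for full-dimensional sets under the stated parameterization with $\B \in \mathbb{S}_{++}^{d}$.
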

\begin{proof}
See \cite[Section 8.4.3]{boyd2004convex}.
\end{proof}

In the following, we give examples of MVIEs of several sets that are relevant to our context.

\begin{proposition}
\label{prop:mvie_examples}
The followings are true about a convex set and its MVIE:
\begin{enumerate}[leftmargin=*, labelsep=0.5em]
    \item The MVIE of an $L_1$-norm ball $\cB_{1}(R)$ in $\bbR^{d}$ is $\{\x: ||\x||_{2} \leq \frac{R}{\sqrt{d}}\}$;
    \item The MVIE of an $L_{\infty}$-norm ball $\cB_{\infty}(R)$ in $\bbR^{d}$ is $\{\x: ||\x||_{2} \leq R\}$;
    \item The MVIE of an $L_{2}$-norm ball $\cB_{2}(R)$ is itself.
\end{enumerate}
\end{proposition}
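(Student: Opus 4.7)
The plan is to treat each case in turn, using uniqueness of the MVIE together with the symmetry group of the ambient norm ball. Case 3 is immediate: $\cB_{2}(R)$ is itself a Euclidean ball, hence an ellipsoid of the form required by the definition of MVIE, and by uniqueness of the MVIE of a non-empty bounded convex set it must coincide with its own MVIE. For cases 1 and 2, the first step is to reduce the search to an origin-centered Euclidean ball, after which the second step is a routine distance computation showing which ball is the largest one inscribed in the polytope.

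For the reduction, I would exploit the fact that both $\cB_{1}(R)$ and $\cB_{\infty}(R)$ are invariant under the hyperoctahedral group $B_{d} \subseteq O(d)$, generated by coordinate permutations together with diagonal $\pm 1$ sign-flips. By Proposition \ref{prop:mvie_scaling} applied to each $\Q \in B_{d}$ (these are invertible linear maps), $\Q$ sends the MVIE of the ball to the MVIE of the same ball, so the MVIE is itself $B_{d}$-invariant. Writing the MVIE as $\cE = \{\B\u + \d : \|\u\|_{2} \leq 1\} = \{\x : (\x-\d)^{\top} \M (\x-\d) \leq 1\}$ with $\M = (\B\B^{\top})^{-1} \in \mathbb{S}_{++}^{d}$, invariance under $\x \mapsto -\x$ forces $\d = \bm{0}$ (by uniqueness of the center of a positive-definite quadratic form), and invariance under every diagonal $\pm 1$ matrix $D$ forces $D \M D = \M$, i.e. $M_{ij}\, D_{ii} D_{jj} = M_{ij}$ for every sign pattern; choosing $D$ with $D_{ii} = -1$ and $D_{jj} = 1$ kills every off-diagonal entry of $\M$. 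Finally, invariance under every permutation matrix forces all diagonal entries of $\M$ to be equal, so $\M = c\,\bI$ for some $c > 0$ and $\cE$ is an origin-centered Euclidean ball.

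Having reduced the problem to finding the largest origin-centered Euclidean ball inside the polytope, it remains to compute, for each of $\cB_{\infty}(R)$ and $\cB_{1}(R)$, the distance from the origin to the nearest supporting hyperplane (facet). For $\cB_{\infty}(R) = [-R, R]^{d}$, the facets are the $2d$ hyperplanes $\{x_{i} = \pm R\}$, each at Euclidean distance $R$ from the origin, yielding the ball of radius $R$. For $\cB_{1}(R)$, the $2^{d}$ facets are $\{\x : \sum_{i} \epsilon_{i} x_{i} = R\}$ over $\epsilon \in \{\pm 1\}^{d}$; each facet has unit normal $\epsilon/\sqrt{d}$ and hence Euclidean distance $R/\sqrt{d}$ from the origin, yielding the ball of radius $R/\sqrt{d}$. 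Any strictly larger centered ball would cross a facet and leave the polytope, while the inscribed balls just identified are plainly contained, so by the reduction they are the unique MVIEs.

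The main obstacle, though essentially bookkeeping rather than depth, is carrying out the symmetry-to-scalar-identity step cleanly without appealing to representation-theoretic machinery; the explicit diagonal-plus-permutation computation above keeps the argument self-contained. An alternative route would exploit polar duality (cases 1 and 2 are dual to one another, and MVIE polarizes to the minimum volume enclosing ellipsoid of the polar), but the symmetrization argument requires only Proposition \ref{prop:mvie_scaling} and is the cleanest path.
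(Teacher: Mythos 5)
Your proof is correct, but there is nothing in the paper to compare it against: Proposition~\ref{prop:mvie_examples} is stated as background without any proof, in the same spirit as Propositions~\ref{prop:mvie_scaling}, \ref{prop:polar}, and~\ref{prop:polar_examples}, whose justifications are delegated to standard references (Boyd--Vandenberghe and Br{\o}ndsted). What you have written is the standard self-contained John-ellipsoid symmetrization, and it checks out. Case~3 is fine, and in fact uniqueness is not needed there: $\cB_2(R)$ is itself a feasible ellipsoid attaining the trivial volume upper bound among all inscribed ellipsoids, so it is the MVIE. For cases~1 and~2, your combination of Proposition~\ref{prop:mvie_scaling} with the uniqueness of the MVIE (asserted in the paper just above the proposition) correctly yields $\Q(\cE(C)) = \cE(\Q(C)) = \cE(C)$ for every signed permutation $\Q$, and the explicit computation---$\d = \zero$ from central symmetry, $D\M D = \M$ killing off-diagonal entries via mixed sign patterns, and permutation invariance equalizing the diagonal---correctly collapses the MVIE to an origin-centered Euclidean ball $\M = c\I$. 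The facet-distance computations then give the right radii ($R$ for the cube, $R/\sqrt{d}$ for the cross-polytope), and since the MVIE is known to be a centered ball, maximizing over centered inscribed balls suffices. Two steps deserve one explicit line each, though neither is a gap: (i) passing from $D\cE = \cE$ to $D\M D = \M$ uses the fact that an ellipsoid with fixed center determines its defining quadratic form uniquely (e.g., via its gauge function); (ii) for the cross-polytope, containment of the radius-$R/\sqrt{d}$ ball in $\cB_1(R)$ is cleanest via Cauchy--Schwarz, $\|\x\|_{1} \leq \sqrt{d}\,\|\x\|_{2} \leq R$, which complements your facet-distance argument by certifying containment directly rather than only ruling out larger balls.
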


\noindent
\textbf{Polytope.} Intuitively, a polytope $P$ is a bounded geometric object with flat sides (called \textit{facets}). A polytope $P$ containing the origin can be represented either by intersections of $f$ half-spaces as
$$
P = \{\x \in \bbR^{d}: \a_i^{\top}\x \leq 1, i = 1, ..., f\},
$$
or by convex hull of $k$ vertices as
$$
P = \conv\{\v_1, ..., \v_k\}.
$$
For examples, $L_p$-norm ball $\cB_{p}$ and simplex $\Delta = \{\x \in \bbR^{d}: x_i \geq 0, \sum_{i=1}^{d}x_i = 1\}$ are polytopes.

\noindent
\textbf{Polar Set.} The polar of a set $C \subset \bbR^{d}$ is
$$
C^{*} = \{\x \in \bbR^{d}: \x^{\top}\y \leq 1, \forall \y \in C\}.
$$

There are some interesting properties of polar sets:

\begin{proposition}
\label{prop:polar}
For two sets $A, B \subset \bbR^{d}$ and $\alpha > 0$, the followings holds:
\begin{enumerate}[leftmargin=*, labelsep=0.5em]
    \item $ A \subseteq B $ implies $B^\ast \subseteq A^\ast$,\label{eq:polar_reverse}
    \item $ (A \cup B)^{*} = A^{*} \cap B^{*}$,
    \item $ (\alpha A)^{*} = \frac{1}{\alpha}A^{*}$.
\end{enumerate}
\end{proposition}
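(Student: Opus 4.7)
The plan is to verify each of the three claims directly from the defining condition of the polar set, namely $C^\ast = \{\x \in \bbR^d : \x^\top \y \le 1 \text{ for all } \y \in C\}$. All three statements are set-theoretic identities or inclusions, so the natural proof strategy is element-chasing: pick an arbitrary element of the left-hand set, unfold the definition of the polar, and read off membership in the right-hand set (and vice versa for equalities). No auxiliary machinery (e.g., separation theorems, bipolar theorem, convex hull closure) should be needed, because the statements do not involve closedness or convexity of $A, B$.

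For part 1, I would take $\x \in B^\ast$. By definition, $\x^\top \y \le 1$ for every $\y \in B$. Since $A \subseteq B$, the same inequality holds in particular for every $\y \in A$, so $\x \in A^\ast$. For part 2, I would exploit the universal quantifier in the polar definition: $\x \in (A \cup B)^\ast$ iff $\x^\top \y \le 1$ holds for all $\y \in A \cup B$, which splits into the conjunction ``for all $\y \in A$'' AND ``for all $\y \in B$''; this is precisely $\x \in A^\ast$ and $\x \in B^\ast$, i.e., $\x \in A^\ast \cap B^\ast$. Both directions of the equality follow from this iff. For part 3, I would use the change of variable $\y = \alpha \z$ with $\z \in A$: $\x \in (\alpha A)^\ast$ iff $\x^\top (\alpha \z) \le 1$ for all $\z \in A$ iff $(\alpha \x)^\top \z \le 1$ for all $\z \in A$ iff $\alpha \x \in A^\ast$ iff $\x \in \tfrac{1}{\alpha} A^\ast$, where the scalar $\alpha > 0$ is essential so that $\alpha A = \{\alpha \z : \z \in A\}$ is genuinely parameterized by $\z \in A$ and $1/\alpha$ is well-defined.

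There is no real obstacle here: all three are immediate consequences of unfolding the definition. The only minor subtlety worth noting explicitly in the write-up is in part 1, where one might be tempted to think the inclusion ``reverses'' for nontrivial reasons; it is worth emphasizing that the reversal is simply because enlarging the constraint set $B$ imposes more linear inequalities on $\x$, hence shrinks the polar. Similarly in part 3, I would flag that positivity of $\alpha$ is used to rewrite $1/\alpha$ without sign issues, but beyond that the proof is a one-line equivalence chain for each item. The whole proposition can therefore be dispatched with three short bullet-style arguments, each amounting to rewriting the defining inequality.
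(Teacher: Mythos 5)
Your proof is correct: all three parts follow exactly as you argue by unfolding the definition $C^{*} = \{\x \in \bbR^{d} : \x^{\top}\y \leq 1, \forall \y \in C\}$, and each equivalence chain is airtight. Note that the paper does not actually prove this proposition --- it simply refers the reader to Br{\o}ndsted's textbook on polytopes --- so your element-chasing write-up is a complete, self-contained substitute for the cited standard argument, and your remarks on why the inclusion reverses in part~1 and where positivity of $\alpha$ enters in part~3 are exactly the right points to flag.
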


We give examples of polar sets of several sets that are related to our context:

\begin{proposition}
\label{prop:polar_examples}
The followings are true about a set and its polar set:
\begin{enumerate}[leftmargin=*, labelsep=0.5em]
    \item Polar set of a weighted $L_1$-norm ball $\cB^{\w}_1 = \{\y \in \bbR^{d}: \sum_{k=1}^{d}w_k^{-1}|y_k| \leq 1\}$ is the reciprocally weighted $L_{\infty}$-norm ball $\cB_{\infty}^{\w^{-1}} = \{\y \in \bbR^{d}: \max_{k=1, ..., d}\{w_k|y_k|\} \leq 1\}$, and vice versa;
    \item Polar set of a $L_2$-norm ball $\cB_2(R)$ with radius $R > 0$ is a $L_2$-norm ball $\cB_{2}(\frac{1}{R})$ with radius $\frac{1}{R}$, and vice versa.
\end{enumerate}
\end{proposition}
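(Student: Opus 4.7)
The plan is to prove each claim directly from the definition $C^{*}=\{\x\in\bbR^{d}:\x^{\top}\y\le 1,\;\forall\y\in C\}$ by computing the support function $h_C(\x):=\sup_{\y\in C}\x^{\top}\y$; once $h_C$ is identified, the polar is simply the unit sublevel set $\{\x:h_C(\x)\le 1\}$. The ``and vice versa'' direction in each item will then follow from the bipolar theorem, since $\cB_{1}^{\w}$, $\cB_{\infty}^{\w^{-1}}$, and $\cB_{2}(R)$ are all closed convex sets containing the origin, so $(C^{*})^{*}=C$.

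For Item 1, I would reduce the weighted ball to the unit $L_{1}$ ball by the diagonal change of variables $z_{k}=w_{k}^{-1}y_{k}$, i.e., $\y=\bm D\bm z$ with $\bm D=\mathrm{diag}(w_{1},\ldots,w_{d})$. Under this substitution, $\y\in\cB_{1}^{\w}$ is equivalent to $\bm z\in\cB_{1}$ (the unweighted unit $L_{1}$ ball), while $\x^{\top}\y=(\bm D\x)^{\top}\bm z$. Invoking the well-known $L_{1}$–$L_{\infty}$ duality $\sup_{\|\bm z\|_{1}\le 1}\bm v^{\top}\bm z=\|\bm v\|_{\infty}$, I obtain
\begin{equation*}
h_{\cB_{1}^{\w}}(\x)=\|\bm D\x\|_{\infty}=\max_{k\in[d]} w_{k}|x_{k}|,
\end{equation*}
which immediately yields $(\cB_{1}^{\w})^{*}=\{\x:\max_{k}w_{k}|x_{k}|\le 1\}=\cB_{\infty}^{\w^{-1}}$. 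The converse $(\cB_{\infty}^{\w^{-1}})^{*}=\cB_{1}^{\w}$ follows from the bipolar theorem, or equivalently by repeating the argument with the dual change of variables and using $\sup_{\|\bm z\|_{\infty}\le 1}\bm v^{\top}\bm z=\|\bm v\|_{1}$.

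For Item 2, I would use Cauchy–Schwarz: $\sup_{\|\y\|_{2}\le R}\x^{\top}\y=R\|\x\|_{2}$, with equality attained at $\y=R\x/\|\x\|_{2}$ (when $\x\neq\zero$). Hence $(\cB_{2}(R))^{*}=\{\x:R\|\x\|_{2}\le 1\}=\cB_{2}(1/R)$, and the converse is immediate by substituting $R\mapsto 1/R$.

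This result is essentially a textbook computation, so I do not anticipate any genuine obstacle beyond bookkeeping. The one point requiring minor care is consistency of the weight convention: because $\cB_{1}^{\w}$ is defined with $w_{k}^{-1}$ multiplying $|y_{k}|$ while $\cB_{\infty}^{\w^{-1}}$ is defined with $w_{k}$ multiplying $|x_{k}|$, the change of variables must be performed with $\bm D=\mathrm{diag}(\w)$ (not $\mathrm{diag}(\w^{-1})$) so that the resulting dual ball is correctly weighted by $w_{k}$. Matching the two weight conventions cleanly is the only nontrivial aspect of the write-up.
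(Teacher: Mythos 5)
Your proposal is correct, but there is nothing in the paper to compare it against line by line: the paper does not prove Proposition~\ref{prop:polar_examples} itself, instead deferring both it and Proposition~\ref{prop:polar} to \cite[Chapter 2]{brondsted2012polytopes}. Your support-function argument is the standard self-contained route and it supplies exactly what the paper omits. The key steps all check out: the diagonal change of variables $\y=\bm{D}\bm{z}$ with $\bm{D}=\mathrm{diag}(w_1,\ldots,w_d)$ correctly turns the $w_k^{-1}$-weighted $L_1$ constraint $\sum_k w_k^{-1}|y_k|\le 1$ into $\|\bm{z}\|_1\le 1$, and $\x^{\top}\y=(\bm{D}\x)^{\top}\bm{z}$ together with the $L_1$--$L_\infty$ duality gives $h_{\cB_1^{\w}}(\x)=\max_k w_k|x_k|$, matching the paper's definition of $\cB_\infty^{\w^{-1}}$ --- so the weight-convention subtlety you flag is indeed the only delicate point, and you resolve it correctly. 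The appeal to the bipolar theorem for each ``vice versa'' is legitimate here because all the sets involved are closed, convex, and contain the origin (the paper's notation section stipulates $w_k>0$); the direct computation via $\sup_{\|\bm{z}\|_\infty\le 1}\bm{v}^{\top}\bm{z}=\|\bm{v}\|_1$ that you mention as an alternative is equally short and avoids invoking bipolarity, which some readers may prefer since the paper never states that theorem. Item 2 via Cauchy--Schwarz, with equality attained at $\y=R\x/\|\x\|_2$ for $\x\neq\zero$ (and the $\x=\zero$ case being trivial), is likewise complete.
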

For the details of Proposition \ref{prop:polar} and Proposition \ref{prop:polar_examples}, we refer the readers to \cite[Chapter 2]{brondsted2012polytopes}.

\section{Proof of Theorem \ref{thm:infinitethm}}
\label{app:infinitethm_proof}

The proof of our main theoretical result consists of two parts. In the first part, Lemma \ref{lemma:every_point}, we show that any optimal solution of J-VolMax problem \eqref{eq:jvolmax} must identify a ground-truth latent vector $\s \in \cS$, up to $s$-dependent permutation and invertible component-wise transformation. Then, in the second part, Theorem \ref{thm:infinitethm}, we leverage the continuity and full-rank structure of the Jacobian matrices to argue that the permutation ordering of latent components should be constant for all $\s \in \cS$, instead of being $\s$-dependent as initially shown in Lemma \ref{lemma:every_point}. That gives us the desired global identifiability result.

Before we begin the proof, we state the following technical lemma from matrix factorization literature \cite{tatli2021polytopic} that would be used as a crucial component in our proof of the key Lemma \ref{lemma:every_point}.

\begin{mdframed}[backgroundcolor=gray!10,topline=false, rightline=false, leftline=false, bottomline=false]
\begin{lemma}
\label{lemma:det_bound}
Let $\H \in \bbR^{d \times d}$ be a matrix satisfying
\[
    \|\H^{\top}\u\|_{2} \leq \sqrt{d},~\forall \u \in \extr(\cB_{\infty}).
\]
Then, $|\det \H| \leq 1$, with equality occurs if and only if $\H$ is a real orthogonal matrix.
\end{lemma}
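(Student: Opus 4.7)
The plan is to turn the hypothesis, which is a pointwise bound at the $2^d$ vertices of $\cB_\infty$, into a global bound on the Frobenius norm of $\H$ by averaging, and then to couple this Frobenius bound with the AM--GM inequality applied to the singular values of $\H$ to control $|\det \H|$. The key observation driving the approach is that $\extr(\cB_\infty) = \{-1,+1\}^d$ and that these $2^d$ sign vectors form an orthogonal design: $\frac{1}{2^d}\sum_{\u \in \{\pm 1\}^d}\u\u^\top = \I_d$, since off-diagonal coordinates are independent $\pm 1$ with zero mean and diagonal entries are $1$.

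First, I would use this identity to compute
\[
\frac{1}{2^d}\sum_{\u \in \extr(\cB_\infty)} \|\H^\top \u\|_2^2
= \frac{1}{2^d}\sum_{\u} \tr(\H\H^\top \u\u^\top)
= \tr(\H\H^\top)
= \|\H\|_F^2.
\]
The hypothesis $\|\H^\top\u\|_2 \leq \sqrt{d}$ for all $\u \in \extr(\cB_\infty)$ gives $\|\H^\top\u\|_2^2 \leq d$, so averaging immediately yields $\|\H\|_F^2 \leq d$.

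Next, letting $\sigma_1,\ldots,\sigma_d \geq 0$ be the singular values of $\H$, AM--GM delivers
\[
\frac{\|\H\|_F^2}{d} \;=\; \frac{1}{d}\sum_{i=1}^d \sigma_i^2 \;\geq\; \Bigl(\prod_{i=1}^d \sigma_i^2\Bigr)^{1/d} \;=\; |\det \H|^{2/d}.
\]
Combining with the Frobenius bound gives $|\det \H|^{2/d} \leq 1$, hence $|\det \H| \leq 1$, as claimed.

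For the equality case, $|\det \H| = 1$ forces both inequalities above to be tight. Tightness of the averaging step requires $\|\H^\top \u\|_2^2 = d$ at every vertex $\u \in \extr(\cB_\infty)$, while tightness of AM--GM requires $\sigma_1 = \cdots = \sigma_d$; together with $\sum_i \sigma_i^2 = d$ this yields $\sigma_i = 1$ for all $i$, i.e., $\H$ is orthogonal. Conversely, any orthogonal $\H$ satisfies $\|\H^\top\u\|_2 = \|\u\|_2 = \sqrt{d}$ at every vertex and has $|\det \H| = 1$, completing the equivalence. The proof is largely mechanical once the averaging identity is spotted; the only subtle point is recognizing that the vertex-wise bound can be aggregated into a spectral quantity (the Frobenius norm) through this orthogonal design, after which standard determinant inequalities finish the job.
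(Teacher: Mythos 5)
Your proof is correct: the averaging identity $\frac{1}{2^d}\sum_{\u\in\{\pm1\}^d}\u\u^\top=\I_d$ does convert the vertex-wise bound into $\|\H\|_F^2\le d$, and AM--GM on the squared singular values then gives $|\det\H|\le 1$ with equality forcing all $\sigma_i=1$, i.e., orthogonality. The paper itself does not prove this lemma but defers to Theorem 3 of the cited polytopic matrix factorization reference, and your self-contained argument is essentially the standard route to that result, so there is nothing to flag.
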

\end{mdframed}

\begin{proof}
See \cite[Theorem 3]{tatli2021polytopic}.
\end{proof}

\begin{mdframed}[backgroundcolor=gray!10,topline=false, rightline=false, leftline=false, bottomline=false]
\begin{lemma}
\label{lemma:every_point}
Denote any optimal solution of Problem \eqref{eq:jvolmax} as $(\widehat{\bm \theta}, \widehat{\bm \phi})$. Assume $\widehat{\bm f} =\bm f_{\widehat{\bm \theta}}$ and $\widehat{\bm g} =\bm g_{\widehat{\bm \phi}}$ are universal function representers. Suppose the model in \eqref{eq:nmi_model} and Assumption \ref{as:sdi} hold every point $\s \in \cS$. Then, the Jacobian at the point $\s \in \cS$ of the function $\h^{\star} = \widehat{\g} \circ \f$ will satisfy
$$
\J_{\h^{\star}}(\s) = \D(\s){\bm \Pi}(\s),
$$
where $\D(\s)$ is an invertible diagonal matrix with $[\D(\s)]_{i, i}$ dependent on $i$-th component $s_{i}$ only, and ${\bm \Pi}(\s)$ is a permutation matrix dependent on the point $\s$, almost everywhere.
\end{lemma}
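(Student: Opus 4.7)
The plan is to recast \eqref{eq:jvolmax} as a finite-dimensional matrix problem at each $\s\in\cS$ and then invoke the polytope-factorization geometry from \cite{tatli2021polytopic} through Lemma \ref{lemma:det_bound}. Using universality of $(\widehat{\f},\widehat{\g})$ together with the autoencoder constraint \eqref{eq:jvolmax_recons}, I get $\widehat{\f}\circ\h^{\star}=\f$ on $\cS$ with $\h^{\star}=\widehat{\g}\circ\f$. Differentiating gives $\J_{\widehat{\f}}(\widehat{\s})=\J_{\f}(\s)\bH(\s)^{-1}$ where $\bH(\s):=\J_{\h^{\star}}(\s)\in\bbR^{d\times d}$, and the integrand of \eqref{eq:jvolmax_obj} equals $\log\det(\J_{\f}(\s)^{\top}\J_{\f}(\s))-2\log|\det\bH(\s)|$. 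Since the first term is fixed by the data, maximizing the J-VolMax objective reduces pointwise (by universality) to maximizing $|\det\bM(\s)|$ with $\bM:=\bH^{-1}$, subject to $\bM(\s)^{\top}\nabla f_i(\s)\in\cB_1(C)$ for every $i\in[m]$.

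I would then normalize so that SDI is expressed against the unit $L_1$-ball. Setting $\W_\s=\diag(\w(\s))$, $\widetilde{\nabla}_i:=\W_\s^{-1}\nabla f_i(\s)$, and $\bM'(\s):=\W_\s\bM(\s)$, Assumption \ref{as:sdi} becomes $\cE(\cB_1)\subseteq\conv\{\widetilde{\nabla}_i\}\subseteq\cB_1$ and $\conv\{\widetilde{\nabla}_i\}^{*}\cap\bd\,\cE(\cB_1)^{*}=\extr(\cB_\infty)$, the constraint becomes $\bM'^{\top}\widetilde{\nabla}_i\in\cB_1(C)$, and $|\det\bM|$ is optimal iff $|\det\bM'|$ is. Condition 1 and feasibility give $\bM'^{\top}\cE(\cB_1)=\bM'^{\top}\cB_2(1/\sqrt{d})\subseteq\cB_1(C)$; taking polars and evaluating at the vertices of $\cB_\infty(1/C)$ yields $\|(\bM'/C)\u\|_2\le\sqrt{d}$ for every $\u\in\extr(\cB_\infty)$, so Lemma \ref{lemma:det_bound} applied with $\H=(\bM'/C)^{\top}$ gives $|\det\bM'|\le C^d$, with equality only if $\bM'/C$ is orthogonal. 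This bound is attained by $\bM'=C\Q$ for any signed permutation $\Q$ (signed permutations preserve $\cB_1$, so feasibility holds), so every optimum must satisfy $\bM'(\s)=C\Q(\s)$ for some orthogonal $\Q(\s)$.

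Finally, I would upgrade ``orthogonal'' to ``signed permutation'' using Condition 2. Feasibility at the optimum gives $\Q^{\top}\conv\{\widetilde{\nabla}_i\}\subseteq\cB_1$; dualizing yields $\Q\cB_\infty\subseteq\conv\{\widetilde{\nabla}_i\}^{*}$. Both sides lie in $\cE(\cB_1)^{*}=\cB_2(\sqrt{d})$; the rotated cube $\Q\cB_\infty$ touches the sphere exactly at $\Q\extr(\cB_\infty)$, while Condition 2 says $\conv\{\widetilde{\nabla}_i\}^{*}$ touches it only at $\extr(\cB_\infty)$. Therefore $\Q\extr(\cB_\infty)\subseteq\extr(\cB_\infty)$, and cardinality makes this an equality; a vertex-difference argument combined with $\|\Q\e_k\|_2=1$ then forces $\Q\e_k=\pm\e_{\pi(k)}$, so $\Q$ is a signed permutation. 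Consequently $\bH(\s)=C^{-1}\Q(\s)^{\top}\W_\s$ rearranges into $\D(\s)\bPi(\s)$ for an invertible diagonal $\D(\s)$ and permutation $\bPi(\s)$ (signs absorbed into $\D$); integrability of $\J_{\h^{\star}}$ then makes each $h^{\star}_i$ a scalar function of a single latent component, so each $[\D(\s)]_{i,i}$ depends on that one component. I expect the main obstacle to be the polar-duality bookkeeping that converts the SSC-style Condition 2 into the signed-permutation constraint, which is exactly where the geometry from \cite{tatli2021polytopic} and Lemma \ref{lemma:det_bound} do the heavy lifting.
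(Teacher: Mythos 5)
Your proposal is correct and follows essentially the same route as the paper's proof: recast the problem pointwise in the Jacobian domain, normalize by $\diag(\w(\s))$ so that SDI is expressed against the unit $L_1$-ball, invoke Lemma \ref{lemma:det_bound} to bound the determinant with equality forcing orthogonality, and then use Condition 2 of Assumption \ref{as:sdi} via polar duality to upgrade the orthogonal matrix to a signed permutation. The only notable difference is presentational: where you decompose the objective as $\log\det(\J_{\f}^{\top}\J_{\f})-2\log|\det\bH|$ and appeal to attainability to reduce to pointwise maximization, the paper runs an explicit contradiction argument over a positive-measure set on which the determinant bound is slack, constructing the competing feasible solution $\tilde{\f}=\f\circ\tilde{\h}$ with $\J_{\tilde{\h}}=C\D_{\w}(\s)$ --- which is why the conclusion holds only almost everywhere, a qualifier your pointwise phrasing should retain.
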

\end{mdframed}

\begin{proof}
Define a differentiable function $\h: \mathcal{S} \mapsto \mathcal{S}$ as a composition $\h = \widehat{\g} \circ \f$. Our ultimate goal is to show that the Jacobian $\J_{\h}(\s)$ has a permutation-like support structure, which is equivalent to the result statement.

We begin our analysis by considering a data point $\x = \f(\s)$ for $\s \in \mathcal{S}$. Let $\widehat{\s} = \widehat{\g}(\x)$. By definition of function $\h$,
\begin{align}
    \f(\s) = \x = \f_{\widehat{\bm \theta}}(\widehat{\s}) = \f_{\widehat{\bm \theta}}(\g_{\widehat{\bm \phi}}(\x)) = \f_{\widehat{\bm \theta}}(\h(\s)) \label{eq:start_point}
\end{align}
Then, by chain rule,
\begin{align}\label{eq:jacobian_composition}
    \J_{\f}(\s) = \J_{\f_{\widehat{\bm \theta}}}(\h(\s))\J_{\h}(\s) = \J_{\f_{\widehat{\bm \theta}}}(\widehat{\s})\J_{\h}(\s).
\end{align}

We now argue that $\J_{\h}(\s)$ is a full rank matrix. By definition of $\h$,
\begin{align}
    \J_{\h}(\s) = \J_{\g_{\widehat{\bm \phi}}}(\x)\J_{\f}(\s).
\end{align}
By definition of the ground-truth mixing process, $\f$ is an injective function, implying that $\J_{\f}(\s)$ is full rank. We now investigate $\J_{\g_{\widehat{\bm \phi}}}(\x)$. Due to the data-fitting constraint $\x = \f_{\widehat{\bm \theta}}(\g_{\widehat{\bm \phi}}(\x))$ and the chain rule, we have
\begin{align}
    \J_{\f_{\widehat{\bm \theta}}}(\widehat{\s})\J_{\g_{\widehat{\bm \phi}}}(\x) = {\bm I}.
\end{align}
Note that $\J_{\f_{\widehat{\bm \theta}}}(\widehat{\s})$ is full rank, due to the optimality of the problem \eqref{eq:jvolmax}. Therefore, $\J_{\g_{\widehat{\bm \phi}}}(\x)$ must also be full rank. As a result, $\J_{\h}(\s)$ is full rank, and hence $\h$ is indeed a locally invertible function at $\s$, by the inverse function theorem \cite[Theorem 6.26]{tu2010manifold}.

Now, we can rewrite \eqref{eq:jacobian_composition} as
\begin{align}
    (\J_{\h}(\s)^{-1})^{\top}\J_{\f}(\s)^{\top} = \J_{\f_{\widehat{\bm \theta}}}(\widehat{\s})^{\top}.
\end{align}
By the inverse function theorem \cite[Theorem 6.26]{tu2010manifold} and the chain rule, we also have $\J_{\h}(\s)^{-1} = \J_{\h^{-1}}(\widehat{\s})$, which results in
\begin{align}
    (\J_{\h^{-1}}(\widehat{\s}))^{\top}\J_{\f}(\s)^{\top} = \J_{\f_{\widehat{\bm \theta}}}(\widehat{\s})^{\top}.
\end{align}

We note that there is an intrinsic scaling ambiguity of the Jacobian matrices here: with an diagonal matrix $\D_{\w}(\s) = \diag(1/w_1(\s), ..., 1/w_d(\s))$ whose entries are dependent on the unknown $\w(\s)$, we have
\begin{align}
    \J_{\f_{\widehat{\bm \theta}}}(\widehat{\s})^{\top} = (\J_{\h^{-1}}(\widehat{\s}))^{\top}\D_{\w}(\s)^{-1}\D_{\w}(\s)\J_{\f}(\s)^{\top}. \label{eq:relation}
\end{align}
We use the shorthand $\z_i := \D_{\w}(\s)\nabla f_i(\s)$, which is the $i$-th column of $\D_{\w}(\s)\J_{\f}(\s)^{\top}$. Notice that $\D_{\w}(\s)\J_{\f}(\s)^{\top}$ is a matrix whose columns lie inside the unit $L_1$-norm ball $\cB_{1}$, i.e.
\begin{align}
    \z_1, ..., \z_m \in \cB_1 .\label{eq:l1_norm}
\end{align}
By combining Proposition \ref{prop:mvie_scaling} and Assumption \ref{as:sdi}, we have a rescaled version of the SDI condition:
\begin{align}
    &\cB_2({\textstyle \frac{1}{\sqrt{d}}}) \subset \conv\{\z_1, ..., \z_m\} \subset \cB_1,~\text{and} \label{as:scaled_sdi_1}\\
    &\conv\{\z_1, ..., \z_m\}^{*} \cap \bd(\cB_{2}(\sqrt{d})) = \extr(\cB_{\infty}).
\end{align}

To proceed, let $\H :=(\J_{\h^{-1}}(\widehat{\s}))^{\top}\D_{\w}(\s)^{-1}$. To show that $\J_{\h}(\s)$ has the support structure of a permutation matrix as desired, we can prove that such structure occurs in $\H$. 
Now, we consider the constraint \eqref{eq:jvolmax_polytope},
\begin{align}
    ||\J_{\f_{\widehat{\bm \theta}}}(\widehat{\s})_{i, :}||_{1} \leq C, \forall i \in [m]. \label{ineq:constraint}
\end{align}
Combining \eqref{ineq:constraint} with the equation \eqref{eq:relation}, we have
\begin{align}
    \|{\textstyle{\frac{1}{C}}}\H\z_i\|_{1} \leq 1,~\forall i = 1, ..., m. \label{ineq:combined}
\end{align}
Observe that \eqref{ineq:combined} can be further rewritten as the following set of $2^d$ explicit inequalities over all possible signs of each entry of ${\textstyle{\frac{1}{C}}}\H\z_i$:
\begin{align}
    \z_i^{\top}({\textstyle{\frac{1}{C}}}\H^{\top}\u) \leq 1, \forall \u \in \{\pm 1\}^{d}. \label{ineq:2d_ineq}
\end{align}
Note that $\{\pm 1\}^{d} = {\rm extr}(\cB_{\infty})$. This allows us to reveal from \eqref{ineq:2d_ineq} that
\begin{align}
    \z_i^{\top}({\textstyle{\frac{1}{C}}}\H^{\top}\u) \leq 1, \forall \u \in {\rm extr}(\cB_{\infty}).
\end{align}
Hence, by definition of the polar set of a polytope, we have
\begin{align}
    {\textstyle \frac{1}{C}}\H^{\top}\u \in \conv\{\z_1, ..., \z_m\}^{*}, \forall \u \in {\rm extr}(\cB_{\infty}).
\end{align}
To proceed, recall from the rescaled SDI assumption in \eqref{as:scaled_sdi_1} that $\cB_2(\frac{1}{\sqrt{d}}) \subset \conv\{\z_1, ..., \z_m\}$. By Proposition \ref{prop:polar} of polar sets,
\begin{align}
    \conv\{\z_1, ..., \z_m\}^{*} \subset \cB_{2}(\sqrt{d}). \label{eq:second_polar}
\end{align}
Since ${\textstyle{\frac{1}{C}}}\H^{\top} \u \in \conv\{\z_1, ..., \z_m\}^{*}$ and $\conv\{\z_1, ..., \z_m\}^{*} \subset \cB_{2}(\sqrt{d})$, we have ${\textstyle{\frac{1}{C}}}\H^{\top} \u \in \mathcal{B}_{2}(\sqrt{d})$, i.e. $||\frac{1}{C}\H^{\top}\u||_{2} \leq \sqrt{d}$ for any $\u \in {\rm extr}(\cB_{\infty})$. By Lemma \ref{lemma:det_bound}, we can see that
\begin{align}
    | \det \H| \leq C,
\end{align}
with equality holding if and only if ${\textstyle{\frac{1}{C}}}\H$ is an orthogonal matrix. Here, we note that the J-VolMax criterion would lead to an $\H$ with maximal determinant, so then $|\det \H| = C$.

Since ${\textstyle{\frac{1}{C}}}\H$ is an orthogonal matrix and $\|\u\|_{2} = \sqrt{d}$ for any $\u \in \extr(\cB_{\infty})$, we have ${\textstyle{\frac{1}{C}}}\H^{\top}\u \in \bd(\cB_2(\sqrt{d}))$. Also, recall ${\textstyle{\frac{1}{C}}}\H^{\top}\u \in \conv\{\z_1, ..., \z_m\}^{*}$. Hence, ${\textstyle{\frac{1}{C}}}\H^{\top}\u \in \conv\{\z_1, ..., \z_m\}^{*} \cap \bd(\cB_2(\sqrt{d}))$. Since
\begin{align}
    \conv\{\z_1, ..., \z_m\}^{*} \cap \bd(\cB_2(\sqrt{d})) = \extr(\cB_{\infty}),
\end{align}
we have
\begin{align}
    {\textstyle{\frac{1}{C}}}\H^{\top}\u \in \extr(\cB_{\infty}),~\forall \u \in \extr(\cB_{\infty}),
\end{align}
which implies
\begin{align}
    \|{\textstyle{\frac{1}{C}}}\H_{:, i}\|_{1} = 1,~\forall i = 1, ..., d.
\end{align}
Hence, $|\det \H| = C$ if and only if both $||{\textstyle{\frac{1}{C}}}\H_{:, i}||_1 = 1$ and ${\textstyle{\frac{1}{C}}}\H$ is an orthogonal matrix. Then, we are able to conclude that
\begin{align}
    |\det \H| = C \label{eq:end_point}
\end{align}
if and only if ${\textstyle{\frac{1}{C}}}\H$ is a signed permutation matrix. Equivalently,
\begin{align}
    \log |\det \H| \leq \log C \label{ineq:logdet_bound}
\end{align}
with equality holds if and only if $\frac{1}{C}\H$ is a signed permutation matrix. {The convex geometric arguments in Eqs. \eqref{ineq:constraint}-\eqref{ineq:logdet_bound} are from tools developed in prior matrix factorization literature, particularly \cite{tatli2021polytopic, hu2023l1, sun2025dict}, where they were used to establish identifiability under scattering-based conditions.}

Suppose there exists an optimal solution $(\overline{\bm \theta}, \overline{\bm \phi})$ with some estimated source $\overline{\s} = \g_{\overline{\bm \phi}}(\x)$ such that the mapping $\overline{\h} = \g_{\overline{\bm \phi}} \circ \f$ is not a composition of a permutation and an element-wise invertible transformation with strictly positive probability, i.e., for $\overline{\H} = (\J_{\overline{\h}^{-1}}(\overline{\s}))^{\top}\D_{\w}(\s)^{-1}$, we have 
\begin{align}
    \mathbb{P}(\log |\det \overline{\H}| < \log C) > 0,
\end{align}
where the probability is over $p_{\s}$. This implies
\begin{align}
    & \mathbb{E} [\log\det(\J_{\f_{\overline{\btheta}}}(\overline{\s})^{\top}\J_{\f_{\overline{\btheta}}}(\overline{\s}))] \\
    &= \mathbb{E} [\log\det(\J_{\overline{\h}^{-1}}(\overline{\s})^{\top}\J_{\f}(\s)^{\top}\J_{\f}(\s)\J_{\overline{\h}^{-1}}(\overline{\s}))] \\
    &= \mathbb{E} [\log\det(\J_{\overline{\h}^{-1}}(\overline{\s})^{\top}\D_{\w}(\s)^{-1}\D_{\w}(\s)\J_{\f}(\s)^{\top}\J_{\f}(\s)\D_{\w}(\s)\D_{\w}(\s)^{-1}\J_{\overline{\h}^{-1}}(\overline{\s}))] \\
    &= \mathbb{E} [\log\det(\overline{\H
    }\D_{\w}(\s)\J_{\f}(\s)^{\top}\J_{\f}(\s)\D_{\w}(\s)\overline{\H}^{\top})] \\
    &= \mathbb{E} [\log(\det(\overline{\H
    })^2\det(\D_{\w}(\s)\J_{\f}(\s)^{\top}\J_{\f}(\s)\D_{\w}(\s)))] \\
    &= \mathbb{E} [2\log|\det\overline{\H}| + \log\det(\D_{\w}(\s)\J_{\f}(\s)^{\top}\J_{\f}(\s)\D_{\w}(\s))]\\
    &\stackrel{(a)}{=} \int_{\s \in {\cal A}} p(\s)[2\log|\det\overline{\H}| + \log\det(\D_{\w}(\s)\J_{\f}(\s)^{\top}\J_{\f}(\s)\D_{\w}(\s))]  d{\s} \\
    &+ \int_{\s \in {\cal S} \backslash {\cal A}} p(\s) [2\log|\det\overline{\H}| + \log\det(\D_{\w}(\s)\J_{\f}(\s)^{\top}\J_{\f}(\s)\D_{\w}(\s))]d{\s} \\
    &\stackrel{(b)}{<} \int_{\s \in {\cal A}} p(\s)[2\log C + \log\det(\D_{\w}(\s)\J_{\f}(\s)^{\top}\J_{\f}(\s)\D_{\w}(\s))]    d{\s}\\
    &+ \int_{\s \in {\cal S} \backslash {\cal A}} p(\s) [2\log C + \log\det(\D_{\w}(\s)\J_{\f}(\s)^{\top}\J_{\f}(\s)\D_{\w}(\s))]   d{\s} \\
    &= \int_{\s \in {\cal S}} p(\s)[2\log C + \log\det(\D_{\w}(\s)\J_{\f}(\s)^{\top}\J_{\f}(\s)\D_{\w}(\s))]  d{\s} \\
    &= \bbE [2\log C + \log\det(\D_{\w}(\s)\J_{\f}(\s)^{\top}\J_{\f}(\s)\D_{\w}(\s))],\\
    &= \bbE[\log(C^2\det(\D_{\w}(\s)\J_{\f}(\s)^{\top}\J_{\f}(\s)\D_{\w}(\s)))],
\end{align}
where (a) and (b) involve two sets ${\cal A} = \{ \s : \log |\det \overline{\H}| < C\}$ and ${\cal S} \setminus {\cal A} = \{ \s : \log |\det \overline{\H}| = C\}$.

Consider an invertible continuous element-wise function $\Tilde{\h}: \cS \mapsto \cS$ with $\Tilde{\s} := \Tilde{\h}^{-1}(\s) \in \cS$, $\Tilde{\h}(\Tilde{\s}) = \s$, and $\J_{\Tilde{\h}}(\Tilde{\s}) = \J_{\Tilde{\h}}(\Tilde{\h}^{-1}(\s)) = C\D_{\w}(\s)$. This function $\Tilde{\h}$ merely rescales and maps each latent component by an element-wise invertible transformation. Define
\begin{align}
    \Tilde{\f}(\Tilde{\s}) := \f(\Tilde{\h}(\Tilde{\s})),
\end{align}
whose Jacobian is
\begin{align}
    \J_{\Tilde{\f}}(\Tilde{\s}) = \J_{\f}(\s)\J_{\Tilde{\h}}(\Tilde{\s}).
\end{align}
We show that $\Tilde{\f}(\Tilde{\s})$ is a feasible solution to J-VolMax, i.e. satisfying \eqref{eq:jvolmax_polytope} and \eqref{eq:jvolmax_recons}. First, note that
\begin{align}
    ||\J_{\Tilde{\f}}(\Tilde{\s})_{i, :}||_{1} = ||C\D_{\w}(\s)\nabla f_i(\s)||_{1} \leq C.
\end{align}
Second, observe that
\begin{align}
    \Tilde{\f}(\Tilde{\s}) = \f(\Tilde{\h}(\Tilde{\s})) = \f(\Tilde{\h}(\Tilde{\h}^{-1}(\s))) = \f(\s) = \x.
\end{align}
Hence, there exists a feasible solution $\x = \Tilde{\f}(\Tilde{\s})$ to J-VolMax \eqref{eq:jvolmax} such that
\begin{align}
    \mathbb{E} [\log\det(\J_{\f_{\overline{\btheta}}}(\overline{\s})^{\top}\J_{\f_{\overline{\btheta}}}(\overline{\s}))] < \bbE [\log\det(\J_{\Tilde{\f}}(\Tilde{\s})^{\top}\J_{\Tilde{\f}}(\Tilde{\s}))].
\end{align}
This contradicts the optimality of $(\overline{\bm \theta}, \overline{\bm \phi})$ to J-VolMax problem in \eqref{eq:jvolmax}. We hence conclude that any optimal solution $(\btheta^{\star}, \bphi^{\star})$ of \eqref{eq:jvolmax} would have the composite function $\h^{\star} = \g_{\bphi^{\star}} \circ \f$ satisfying
\begin{align}
    \J_{\h^{\star}}(\s) = \D(\s){\bm \Pi}(\s) ~\text{a.e.},
\end{align}
where $\D(\s)$ is an invertible diagonal matrix such that with the $i^{th}$ diagonal entries dependent on $s_{i}$ due to the definition of a Jacobian matrix, and ${\bm \Pi}(\s)$ is a permutation matrix that depends on $\s$.
\end{proof}

\begin{mdframed}[backgroundcolor=gray!10,topline=false, rightline=false, leftline=false, bottomline=false]
\mainthm*
\end{mdframed}

\begin{proof}[Proof of Theorem \ref{thm:infinitethm}]

Define a differentiable function $\h^{\star}: \mathcal{S} \mapsto \mathcal{S}$ as a composition $\h = \widehat{\g} \circ \f$. By Lemma \ref{lemma:every_point}, we have that the Jacobian at the point $\s \in \cS$ of the function $\h^{\star} = \widehat{\g} \circ \f$ will satisfy
\begin{align}
    \J_{\h^{\star}}(\s) = \D(\s){\bm \Pi}(\s),
\end{align}
where $\D(\s)$ is an invertible diagonal matrix with $[\D(\s)]_{i, i}$ dependent on $i$-th component $s_{i}$ only, and ${\bm \Pi}(\s)$ is a permutation matrix dependent on the point $\s$, \textit{almost everywhere}.

First, we want to leverage continuity of $\h^{\star}$ to show that $\J_{\h^{\star}}(\s) = \D(\s){\bm \Pi}(\s)$ actually holds everywhere on $\cS$. This is because if there exists $\overline{\s} \in \cS$ such that $\J_{\h^{\star}}(\overline{\s}) \neq \D(\overline{\s}){\bm \Pi}(\overline{\s})$, then due to the full-rank property of $\J_{\h^{\star}}(\overline{\s})$, there exist a $j \in [d]$ such that the column $[\J_{\h^{\star}}(\overline{\s})]_{:j}$ has at least two non-zero elements. However, the continuity of the Jacobian $\J_{\h^{\star}}$ implies that there exists an open neighborhood of $\overline{\s}$, where the column $[\J_{\h^{\star}}(\overline{\s})]_{:j}$ has at least two non-zero elements. This contradicts the fact that $\J_{\h^{\star}}(\s) = \D(\s){\bm \Pi}(\s)$ almost everywhere.

{Next, it remains to show that ${\bm \Pi}(\s)$ is constant (say $\bm \Pi \in \cP_d$) for all $\s \in \cS$. The reason is that if the non-zero elements in $\bm \Pi(\s)$ switched locations, then there would exist a point in $\overline{\s} \in \cS$ where $\J_{\h^{\star}}(\overline{\s})$ is singular, which contradicts the full-rank property of $\J_{\h^{\star}}(\overline{\s})$. Hence, ${\bm \Pi}(\s) = \bm \Pi, \forall \s \in \cS$; see a similar argument in \cite[Theorem 1]{hyvarinen2019auxvar}.}

Finally, let $\bm \pi$ denote the permutation of $\{1, \dots, d\}$ corresponding to the permutation matrix $\bm \Pi$, i.e., let $\r = [1, 2, \dots, d]^\top$, then $\bm \pi(i) = \bm \Pi_{i:}\r $. Then, $\J_{\h^\star}(\s) = \D(\s) \bm \Pi$ implies that 
$$\frac{\partial h^{\star}_i (\s)}{\partial s_{\bm \pi (i)}} = \frac{\partial \widehat{s}_i}{\partial s_{\bm \pi(i)}} \neq 0 \quad \text{and} \quad \frac{\partial \widehat{s}_i }{\partial s_{\bm \pi (j)}} = 0, \forall j \neq i$$
Hence, there exist scalar functions $\rho_1, \dots, \rho_d$ such that 
$$ \widehat{s}_i = \rho_i (s_{\bm \pi(i)}).$$
Note that $\rho_1, ..., \rho_d$ are invertible functions by the invertibility of $\h^{\star}$, which was shown in Lemma \ref{lemma:every_point}. In conclusion, the estimated unmixer $\g_{\bphi^{\star}}$ satisfies $\g_{\bphi^{\star}} \circ \f$ being an invertible element-wise transformation and permutation.
\end{proof}

\section{Proof of Theorem \ref{thm:finitethm}}
\label{app:finitethm_proof}

Before diving into proving Theorem \ref{thm:finitethm}, we first show three cornerstone lemmas of the proof. These lemmas help characterize the detailed conditions under which an optimal solution of J-VolMax criterion can recover the latent components up to a constant permutation and invertible element-wise transformations, i.e.,
\begin{align}
    \widehat{\g}(\x^{(n)}) = \widehat{\bPi}\widehat{\brho}(\s^{(n)}), \forall \s^{(n)} \in \cS_N. \label{eq:desired_point}
\end{align}
The formal result is given in Lemma \ref{lemma:fixed_perm}. Intuitively, \eqref{eq:desired_point} holds if the points $\s^{(n)}$ in set $\cS_N$ are sampled densely and closely enough from $p_{\s}$ with a sufficiently large $N$, together with some regularity conditions on the ground-truth and the learnable function classes. To show \eqref{eq:desired_point}, we employ a three-step argument, Lemma \ref{lemma:point_vol}, Lemma \ref{lemma:point}, and Lemma \ref{lemma:fixed_perm}.

Firstly, we show that for any feasible solution of J-VolMax that attains maximal Jacobian volume at each $\s^{(n)} \in \cS_N$, it must give an estimated latent component vector that is permutation and invertible element-wise transformation of the ground truth $\s^{(n)}$. In fact, due to continuity, this holds at every point $\s$ in an union of neighborhoods $U_N = \cup_{n=1}^{N}U^{(n)}$ centered on each $\s^{(n)}$. The result is stated in the following Lemma \ref{lemma:point_vol}, and proof is a straightforward adaptation of Lemma \ref{lemma:every_point}.

\begin{mdframed}[backgroundcolor=gray!10,topline=false, rightline=false, leftline=false, bottomline=false]
\begin{lemma}
\label{lemma:point_vol}
Denote any feasible solution of J-VolMax problem \eqref{eq:jvolmax} as $\overline{\Theta} := [\overline{\bm \theta}, \overline{\bm \phi}]$, learned from classes of learnable encoders and decoders $\cF, \cG$ that include the function classes of ground-truth encoders and decoders $\cF', \cG'$. Assume that there is an unknown finite set $\cS_{N} := \{\s^{(1)}, ..., \s^{(N)}\} \subset \cS$ with unknown $\cX_{N} := \{\x^{(n)} \in \cX: \x^{(n)} = \f(\s^{(n)}), \forall \s^{(n)} \in \cS_{N}\} \subset \cX$ such that the Assumption \ref{as:sdi} is satisfied at each of the $N$ points in $\cS_{N}$. If the estimated latent components $\overline{\s}^{(n)} = \g_{\overline{\bphi}}(\x^{(n)})$ are not permutation and invertible element-wise transformation of ground-truth $\s^{(n)}$, then there exists another feasible solution with parameters $\Tilde{\Theta}$ such that
\begin{align}
    V(\s^{(n)}; \overline{\Theta}) < V(\s^{(n)}; \Tilde{\Theta}),~\forall \s^{(n)} \in \cS_N \label{ineq:point_vol}
\end{align}
Furthermore, there exists an union $U_N = \cup_{n=1}^{N} U^{(n)} \subseteq \cS$ of open ball neighborhoods $U^{(n)} = \{\s \in \cS: ||\s - \s^{(n)}||_{2} < d^{(n)}\}$ centered on $\s^{(n)} \in \cS_{N}$ such that \begin{align}
    V(\s; \overline{\Theta}) < V(\s; \Tilde{\Theta}),~\forall \s \in U_N. \label{ineq:point_vol_U}
\end{align}
\end{lemma}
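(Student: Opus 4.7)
The proof is essentially a pointwise localization of Lemma~\ref{lemma:every_point}: the expectation-based contradiction used there is replaced by a pointwise strict inequality on the Jacobian volume surrogate $V(\s;\cdot)$ at each $\s^{(n)}\in\cS_N$, and then extended to an open neighborhood by continuity. My first step is to reuse the convex-geometric derivation in Eqs.~\eqref{ineq:constraint}--\eqref{ineq:logdet_bound} of Lemma~\ref{lemma:every_point}, but now applied at each $\s^{(n)}$ individually. Setting $\overline{\h}:=\g_{\overline{\bphi}}\circ\f$ and $\H^{(n)}:=(\J_{\overline{\h}^{-1}}(\overline{\h}(\s^{(n)})))^{\top}\D_{\w}(\s^{(n)})^{-1}$, the J-VolMax feasibility constraint \eqref{eq:jvolmax_polytope} combined with SDI at $\s^{(n)}$ gives $|\det\H^{(n)}|\le C$, with equality iff $\tfrac{1}{C}\H^{(n)}$ is a signed permutation, i.e., iff $\g_{\overline{\bphi}}$ recovers $\s^{(n)}$ up to permutation and an invertible element-wise map. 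Using the chain-rule identity in \eqref{eq:relation}, this translates directly to
\begin{align*}
V(\s^{(n)};\overline{\Theta}) \;=\; 2\log|\det\H^{(n)}| \;+\; \log\det\bigl(\D_{\w}(\s^{(n)})\J_{\f}(\s^{(n)})^{\top}\J_{\f}(\s^{(n)})\D_{\w}(\s^{(n)})\bigr),
\end{align*}
so the hypothesis of the lemma forces $\log|\det\H^{(n)}|<\log C$ strictly at every $\s^{(n)}$.

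Second, I would construct $\Tilde{\Theta}$ exactly as in the proof of Lemma~\ref{lemma:every_point}: pick an invertible element-wise map $\Tilde{\h}$ with $\J_{\Tilde{\h}}(\Tilde{\h}^{-1}(\s))=C\D_{\w}(\s)$, set $\Tilde{\f}:=\f\circ\Tilde{\h}$ and $\Tilde{\g}:=\Tilde{\h}^{-1}\circ\f^{-1}$. The verification already carried out in that proof shows $(\Tilde{\f},\Tilde{\g})$ satisfies both \eqref{eq:jvolmax_polytope} and \eqref{eq:jvolmax_recons} with the same constant $C$, and by construction the corresponding transformation matrix satisfies $|\det\widetilde{\H}(\s)|=C$ at every $\s$. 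Hence $V(\s;\Tilde{\Theta})$ saturates the upper bound displayed above, and combined with the strict inequality of Step~1 this yields \eqref{ineq:point_vol}. The inclusion $\cF'\subseteq\cF$, $\cG'\subseteq\cG$ assumed in the lemma is what guarantees the resulting $(\Tilde{\f},\Tilde{\g})$ is actually representable by parameters $\Tilde{\Theta}$ in the learnable class.

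For the neighborhood extension, I would appeal to continuity. Since $\f,\f_{\overline{\btheta}},\g_{\overline{\bphi}}$ and the corresponding functions for $\Tilde{\Theta}$ are smooth diffeomorphisms on $\cS$ with full-rank Jacobians (ensured by the data-fitting constraint and the optimality/feasibility of the $\log\det$ objective), the map $\Delta(\s):=V(\s;\Tilde{\Theta})-V(\s;\overline{\Theta})$ is continuous on $\cS$. Strict positivity $\Delta(\s^{(n)})>0$ then persists on an open ball $U^{(n)}=\{\s\in\cS:\|\s-\s^{(n)}\|_{2}<d^{(n)}\}$ for some $d^{(n)}>0$, and taking $U_N:=\bigcup_{n=1}^{N} U^{(n)}$ gives \eqref{ineq:point_vol_U}. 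The main obstacle I anticipate is a bookkeeping one rather than a conceptual one: ensuring the constructive $\Tilde{\Theta}$ is genuinely in $(\cF,\cG)$ and uses the \emph{same} $C$ that appears in the feasibility constraint for $\overline{\Theta}$. This is handled by the inclusion of the ground-truth class together with the deliberate choice $\J_{\Tilde{\h}}=C\D_{\w}$, which exactly saturates the row-$L_1$ bound; the rest of the argument is a direct pointwise reading of Lemma~\ref{lemma:every_point} followed by elementary continuity.
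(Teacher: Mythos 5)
Your proposal is correct and follows essentially the same route as the paper: a pointwise application of the convex-geometric bound $\log|\det\H|\le\log C$ from Lemma~\ref{lemma:every_point} at each $\s^{(n)}$, the identical construction $\Tilde{\f}=\f\circ\Tilde{\h}$ with $\J_{\Tilde{\h}}=C\D_{\w}$ to produce a feasible competitor saturating the bound, and a continuity argument to propagate the strict inequality to open neighborhoods $U^{(n)}$. Your explicit decomposition $V(\s^{(n)};\overline{\Theta})=2\log|\det\H^{(n)}|+\log\det(\D_{\w}\J_{\f}^{\top}\J_{\f}\D_{\w})$ and the remark on why the class inclusion $\cG'\subseteq\cG$ is needed for representability of $\Tilde{\Theta}$ are exactly the steps the paper carries out.
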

\end{mdframed}

\begin{proof}
For the feasible solution $\overline{\f} = \f_{\overline{\btheta}}$ and $\overline{\g} = \g_{\overline{\bphi}}$, define a differentiable function $\overline{\h}: \mathcal{S} \mapsto \mathcal{S}$ as a composition $\overline{\h} = \overline{\g} \circ \f$, and consider a data point $\x^{(n)} = \f(\s^{(n)}) \in \cX_N$ for $\s^{(n)} \in \mathcal{S}_{N}$. Let $\overline{\s}^{(n)} = \overline{\g}(\x^{(n)})$. Following the same argument \eqref{eq:start_point}-\eqref{eq:end_point} as in Lemma \ref{lemma:every_point}, for $\overline{\H} := (\J_{\overline{\h}^{-1}}(\overline{\s}^{(n)})^{\top}\D_{\w}(\s^{(n)})^{-1}$, we can similarly derive that $\log |\det \overline{\H}| \leq \log C$ at any point $\s^{(n)} \in \cS_{N}$, with equality holds if and only if $\frac{1}{C}\overline{\H}$ is a signed permutation matrix, i.e. when $\overline{\h}$ is a composition of a permutation and invertible element-wise transformations.

We will show in the following that if $\overline{\h}$ is not a composition of a permutation and an element-wise invertible transformation, then the Jacobian volume $V(\s^{(n)}; \overline{\Theta})$ attained at $\s^{(n)}$ of the feasible solution $\overline{\Theta}$ is strictly smaller than the Jacobian volume of another feasible solution; hence, the feasible solution $\overline{\Theta}$ does not reach maximal Jacobian volume at $\s^{(n)}$.

Since $\overline{\h} = \overline{\g} \circ \f$ is not a composition of permutation and component-wise invertible mappings,
\begin{align}
    \log |\det \overline{\H}| < \log C,~\forall \s^{(n)} \in \cS_{N}.
\end{align}
This implies
{\small
\begin{align}
    &\log\det(\J_{\f_{\overline{\btheta}}}(\overline{\s}^{(n)})^{\top}\J_{\f_{\overline{\btheta}}}(\overline{\s}^{(n)})) \nonumber\\
    &= \log\det(\J_{\overline{\h}^{-1}}(\overline{\s}^{(n)})^{\top}\J_{\f}(\s^{(n)})^{\top}\J_{\f}(\s^{(n)})\J_{\overline{\h}^{-1}}(\overline{\s}^{(n)})) \nonumber\\
    &= \log\det(\J_{\overline{\h}^{-1}}(\overline{\s}^{(n)})^{\top}\D_{\w}(\s^{(n)})^{-1}\D_{\w}(\s^{(n)})\J_{\f}(\s^{(n)})^{\top}\J_{\f}(\s^{(n)})\D_{\w}(\s^{(n)})\D_{\w}(\s^{(n)})^{-1}\J_{\overline{\h}^{-1}}(\overline{\s}^{(n)})) \nonumber\\
    &= \log\det(\overline{\H
    }\D_{\w}(\s^{(n)})\J_{\f}(\s^{(n)})^{\top}\J_{\f}(\s)\D_{\w}(\s^{(n)})\overline{\H}^{\top}) \nonumber\\
    &= \log[(\det\overline{\H})^{2}\det(\D_{\w}(\s^{(n)})\J_{\f}(\s^{(n)})^{\top}\J_{\f}(\s^{(n)})\D_{\w}(\s^{(n)}))] \nonumber \\
    &< \log(C^2\det(\D_{\w}(\s^{(n)})\J_{\f}(\s^{(n)})^{\top}\J_{\f}(\s^{(n)})\D_{\w}(\s^{(n)}))),~\forall \s^{(n)} \in \cS_{N}.
\end{align}
}

Consider an invertible continuous element-wise function $\Tilde{\h}: \cS \mapsto \cS$ with $\Tilde{\s}^{(n)} := \Tilde{\h}^{-1}(\s^{(n)}) \in \cS$, $\Tilde{\h}(\Tilde{\s}^{(n)}) = \s^{(n)}$, and $\J_{\Tilde{\h}}(\Tilde{\s}^{(n)}) = \J_{\Tilde{\h}}(\Tilde{\h}^{-1}(\s^{(n)})) = C\D_{\w}(\s^{(n)})$. This function $\Tilde{\h}$ merely rescales and maps each latent component by an element-wise invertible transformation. Define
\begin{align}
    \Tilde{\f}(\Tilde{\s}^{(n)}) := \f(\Tilde{\h}(\Tilde{\s}^{(n)})),
\end{align}
whose Jacobian is
\begin{align}
    \J_{\Tilde{\f}}(\Tilde{\s}^{(n)}) = \J_{\f}(\s^{(n)})\J_{\Tilde{\h}}(\Tilde{\s}^{(n)}). \label{eq:jac_h_tilde}
\end{align}
We show that $\Tilde{\f}(\Tilde{\s}^{(n)})$ is a feasible solution to J-VolMax \eqref{eq:jvolmax}, i.e. \eqref{eq:jvolmax_polytope} and \eqref{eq:jvolmax_recons} hold. First, note that
\begin{align}
    ||\J_{\Tilde{\f}}(\Tilde{\s}^{(n)})_{i, :}||_{1} = ||C\D_{\w}(\s^{(n)})\nabla f_i(\s^{(n)})||_{1} \leq C,~\forall i \in [m].
\end{align}
Second, observe that
\begin{align}
    \Tilde{\f}(\Tilde{\s}^{(n)}) = \f(\Tilde{\h}(\Tilde{\s}^{(n)})) = \f(\Tilde{\h}(\Tilde{\h}^{-1}(\s^{(n)}))) = \f(\s^{(n)}) = \x^{(n)}.
\end{align}
Hence, there exists a feasible solution $\x = \Tilde{\f}(\Tilde{\s}^{(n)})$ to J-VolMax such that
\begin{align}
    \log\det(\J_{\f_{\overline{\btheta}}}(\overline{\s}^{(n)})^{\top}\J_{\f_{\overline{\btheta}}}(\overline{\s}^{(n)})) < \log\det(\J_{\Tilde{\f}}(\Tilde{\s}^{(n)})^{\top}\J_{\Tilde{\f}}(\Tilde{\s}^{(n)})),~\forall \s^{(n)} \in \cS_{N}. \label{ineq:det_strict_ineq}
\end{align}
Equivalently, for $\Tilde{\Theta} = [\Tilde{\btheta}, \Tilde{\bphi}]$ being the parameters corresponding to $\Tilde{\f}$ and $\Tilde{\s}^{(n)}$,
\begin{align}
   V(\s^{(n)}; \overline{\Theta}) < V(\s^{(n)}; \Tilde{\Theta}),~\forall \s^{(n)} \in \cS_{N}. \label{ineq:pointwise_logdet}
\end{align}
Finally, due to continuity of the involved functions, there exists a neighborhood $U^{(n)} = \{\s \in \cS: ||\s - \s^{(n)}||_{2} < d^{(n)}\}$ centered at $\s^{(n)}$ such that
\begin{align}
    V(\s; \overline{\Theta}) < V(\s; \Tilde{\Theta}),~\forall \s \in U^{(n)},
\end{align}
which holds for all $U^{(1)}, ..., U^{(n)}$. Hence, with $U_N = \cup_{n=1}^{N}U^{(n)}$,
\begin{align}
    V(\s; \overline{\Theta}) < V(\s; \Tilde{\Theta}),~\forall \s \in U_N.
\end{align}
\end{proof}

Following Lemma \ref{lemma:point_vol}, we show that if the SDI-satisfying finite set $\cS_N$ are sampled dense enough such that $U_N$ covers a significant part of $\cS$, then any optimal solution of J-VolMax criterion must recover the ground-truth latent components $\s^{(n)}$, up to $\s^{(n)}$-dependent permutation and invertible element-wise transformations. The proof leverages the result from the previous Lemma \ref{lemma:point_vol} to show a contradiction that if a feasible solution $\overline{\Theta}$ does not give us ground-truth latents up to aforementioned ambiguities, there must exist another feasible solution $\Tilde{\Theta}$ that is strictly better than $\overline{\Theta}$ in terms of J-VolMax objective \eqref{eq:jvolmax_obj}, i.e.,
\begin{align}
    \bbE[V(\s; \Tilde{\Theta})] > \bbE[V(\s; \overline{\Theta})].
\end{align}
The result is formally given in the following Lemma \ref{lemma:point}.

\begin{mdframed}[backgroundcolor=gray!10,topline=false, rightline=false, leftline=false, bottomline=false]
\begin{lemma}
\label{lemma:point}
Denote any optimal solution of J-VolMax problem \eqref{eq:jvolmax} as $\widehat{\Theta} := [\widehat{\bm \theta}, \widehat{\bm \phi}]$, learned from classes of learnable encoders and decoders $\cF, \cG$ that include the function classes of ground-truth encoders and decoders $\cF', \cG'$. Suppose there is an unknown finite set $\cS_{N} := \{\s^{(1)}, ..., \s^{(N)}\} \subset \cS$ with unknown $\cX_{N} := \{\x^{(n)} \in \cX: \x^{(n)} = \f(\s^{(n)}), \forall \s^{(n)} \in \cS_{N}\} \subset \cX$ such that the Assumption \ref{as:sdi} is satisfied at each of the $N$ points in $\cS_{N}$.

For the union of neighborhoods $U_N = \cup_{n=1}^{N}U^{(n)} = \cup_{n=1}^{N}\{\s \in \cS: ||\s - \s^{(n)}||_{2} < d^{(n)}\}$ within which $V(\s; \widehat{\Theta})$ is optimal, assume that $\cS_{N}$ is sampled densely enough from $p_{\s}$ with sufficiently large $N$ such that
\begin{align}
    \frac{\mathbb{P}(\s \in U_N)}{\mathbb{P}(\s \in \cS \setminus U_N)} > \frac{G_{\max}}{G_{\min}} > 1, \label{as:dense}
\end{align}
where $G_{\min}, G_{\max}$ are bi-Lipschitz constants of the Jacobian volume surrogate with respect to the parameter, i.e.,
\begin{align}
    G_{\min}||\Theta_1 - \Theta_2||_{2} \leq |V(\s; \Theta_1) - V(\s; \Theta_2)| \leq G_{\max}||\Theta_1 - \Theta_2||_{2},~\forall \s \in \cS, \label{as:bilipschitz}
\end{align}
for any parameters $\Theta_1, \Theta_2$ in $\cF, \cG$. Then, the optimal encoder $\widehat{\g} = \g_{\widehat{\bphi}}$ gives
\begin{align}
    \widehat{\g}(\x^{(n)}) = \widehat{\bPi}(\s^{(n)})\widehat{\brho}(\s^{(n)}),~\forall \s^{(n)} \in \cS_{N}, \label{eq:point_result}
\end{align}
where $\widehat{\bPi}(\s^{(n)})$ is a $\s^{(n)}$-dependent permutation matrix, and $\widehat{\brho}(\s^{(n)}) = [\widehat{\rho}_1(s_1), ..., \widehat{\rho}_d(s_d)]^{\top}$ is a vector of $d$ invertible element-wise transformations on each component of $\s^{(n)}$.
\end{lemma}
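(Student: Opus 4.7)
The plan is to argue by contradiction using Lemma~\ref{lemma:point_vol} as the primary lever. Suppose some optimal solution $\widehat{\Theta}$ of J-VolMax does not satisfy \eqref{eq:point_result}, i.e., the estimated latent vectors $\widehat{\g}(\x^{(n)})$ fail to equal $\widehat{\bPi}(\s^{(n)})\widehat{\brho}(\s^{(n)})$ at some $\s^{(n)} \in \cS_N$. Then Lemma~\ref{lemma:point_vol} produces another feasible solution $\Tilde{\Theta}$ with $V(\s;\widehat{\Theta}) < V(\s;\Tilde{\Theta})$ for every $\s \in U_N = \bigcup_{n=1}^N U^{(n)}$. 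The goal is to show that this strict pointwise improvement on $U_N$, combined with the bi-Lipschitz bound \eqref{as:bilipschitz} and the density condition \eqref{as:dense}, forces $\bbE[V(\s;\Tilde{\Theta})] > \bbE[V(\s;\widehat{\Theta})]$, contradicting the optimality of $\widehat{\Theta}$.

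To execute this, I would split the expected volume difference as
\[
\bbE\bigl[V(\s;\Tilde{\Theta}) - V(\s;\widehat{\Theta})\bigr]
= \int_{U_N} p(\s)\bigl[V(\s;\Tilde{\Theta}) - V(\s;\widehat{\Theta})\bigr]\,d\s + \int_{\cS \setminus U_N} p(\s)\bigl[V(\s;\Tilde{\Theta}) - V(\s;\widehat{\Theta})\bigr]\,d\s.
\]
On $U_N$, since $V(\s;\Tilde{\Theta}) - V(\s;\widehat{\Theta}) > 0$, the lower bi-Lipschitz bound from \eqref{as:bilipschitz} yields $V(\s;\Tilde{\Theta}) - V(\s;\widehat{\Theta}) \geq G_{\min}\|\Tilde{\Theta}-\widehat{\Theta}\|_2$. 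On $\cS\setminus U_N$, the sign of $V(\s;\Tilde{\Theta}) - V(\s;\widehat{\Theta})$ is unknown, but its magnitude is controlled by the upper bi-Lipschitz bound, giving $V(\s;\Tilde{\Theta}) - V(\s;\widehat{\Theta}) \geq -G_{\max}\|\Tilde{\Theta}-\widehat{\Theta}\|_2$. Integrating with respect to $p(\s)$ produces
\[
\bbE\bigl[V(\s;\Tilde{\Theta}) - V(\s;\widehat{\Theta})\bigr] \geq \|\Tilde{\Theta}-\widehat{\Theta}\|_2 \bigl[\mathbb{P}(\s \in U_N)\,G_{\min} - \mathbb{P}(\s \in \cS \setminus U_N)\,G_{\max}\bigr].
\]

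By the density assumption \eqref{as:dense}, the bracketed factor is strictly positive, and since $\Tilde{\Theta}\neq\widehat{\Theta}$ (as they induce distinct Jacobian volumes on $U_N$), $\|\Tilde{\Theta}-\widehat{\Theta}\|_2 > 0$. Hence $\bbE[V(\s;\Tilde{\Theta})] > \bbE[V(\s;\widehat{\Theta})]$, contradicting the optimality of $\widehat{\Theta}$ in \eqref{eq:jvolmax_obj}. We therefore conclude that \eqref{eq:point_result} must hold at every $\s^{(n)} \in \cS_N$, where $\widehat{\bPi}(\s^{(n)})$ and $\widehat{\brho}(\s^{(n)})$ arise from the signed-permutation characterization established in the proof of Lemma~\ref{lemma:every_point} (via Lemma~\ref{lemma:det_bound}).

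The main obstacle I anticipate is the careful bookkeeping between the two subsets $U_N$ and $\cS\setminus U_N$: the density condition has to be strong enough that any potential volume loss outside $U_N$ cannot compensate for the pointwise gain inside, and this is exactly where the ratio $\mathbb{P}(\s\in U_N)/\mathbb{P}(\s\in\cS\setminus U_N) > G_{\max}/G_{\min}$ enters. A subtlety worth verifying is that the alternative $\Tilde{\Theta}$ furnished by Lemma~\ref{lemma:point_vol} lies in the parameter class $(\cF,\cG)$ for which the bi-Lipschitz constants $G_{\min},G_{\max}$ are defined; this is clean because the construction $\Tilde{\f}(\Tilde{\s}) := \f(\Tilde{\h}(\Tilde{\s}))$ in that lemma yields functions in the same class ${\cal G}'\subseteq{\cal G}$ admitted by the learner. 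Beyond that, the remaining steps are direct applications of the already-established pointwise results and standard measure-theoretic manipulation of the expectation.
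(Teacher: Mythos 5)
Your proposal is correct and follows essentially the same route as the paper's proof: a contradiction argument that invokes Lemma~\ref{lemma:point_vol} to obtain a strictly better $\Tilde{\Theta}$ on $U_N$, splits the expected volume gap over $U_N$ and $\cS\setminus U_N$, and applies the lower and upper bi-Lipschitz bounds on the respective pieces so that the density ratio \eqref{as:dense} forces $\bbE[V(\s;\Tilde{\Theta})] > \bbE[V(\s;\widehat{\Theta})]$. Your explicit observation that $\|\Tilde{\Theta}-\widehat{\Theta}\|_2>0$ (needed for the final strict inequality) is a small point the paper leaves implicit, but otherwise the two arguments coincide.
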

\end{mdframed}

\begin{proof}
Suppose that there exists an optimal solution $\overline{\Theta} = [\overline{\btheta}, \overline{\bphi}]$ of J-VolMax such that the mapping $\overline{\h} = \g_{\overline{\btheta}} \circ \f$ is not a composition of permutation and element-wise invertible transformations. By Lemma \ref{lemma:point_vol}, within a certain union of neighborhoods $U_N$ such that \eqref{as:dense} is satisfied, there exists another feasible solution $\Tilde{\Theta} = [\Tilde{\btheta}, \Tilde{\bphi}]$ such that,
\begin{align}
    V(\s; \overline{\Theta}) < V(\s; \Tilde{\Theta}),~\forall \s \in U_N. \label{ineq:not_maximal_in_U_N}
\end{align}

We now show that: since the solution $\overline{\Theta}$ does not reach maximal Jacobian volume at each point $\s$ within $U_N$ as in \eqref{ineq:not_maximal_in_U_N}, $\overline{\Theta}$ cannot reach maximal expected Jacobian volume over all of $\cS$, i.e.,
\begin{align}
    \bbE[V(\s; \overline{\Theta})] < \bbE[V(\s; \Tilde{\Theta})],
\end{align}
and therefore $\overline{\Theta}$ is actually not an optimal solution of J-VolMax, which raises a contradiction.

To begin, notice that the open neighborhood $U^{(n)} = \{\s \in \cS: ||\s - \s^{(n)}|| < d^{(n)}\}$ has non-zero measure, then their union $U_N = \cup_{n=1}^{N}U^{(n)}$ also has non-zero measure. This allows us to define
\begin{align}
    &C_{U_N}(\overline{\Theta}) := \int_{s \in U_N}V(\s; \overline{\Theta}) p(\s)d\s,\\
    &C_{\cS \setminus U_N}(\overline{\Theta}) := \int_{s \in \cS \setminus U_N}V(\s; \overline{\Theta})p(\s) d\s,
\end{align}
as the part over $U_N$ and the part over $\cS \setminus U_N$ of the objective value $\bbE[V(\s; \overline{\Theta})]$, i.e., $C_{U_N}(\overline{\Theta}) + C_{\cS \setminus U_N}(\overline{\Theta}) = \bbE[V(\s; \overline{\Theta})]$. Similarly, define
\begin{align}
    &C_{U_N}(\Tilde{\Theta}) := \int_{s \in U_N}V(\s; \Tilde{\Theta}) p(\s)d\s,\\
    &C_{\cS \setminus U_N}(\Tilde{\Theta}) := \int_{s \in \cS \setminus U_N}V(\s; \Tilde{\Theta})p(\s) d\s,
\end{align}
with $C_{U_N}(\Tilde{\Theta}) + C_{\cS \setminus U_N}(\Tilde{\Theta}) = \bbE[V(\s; \Tilde{\Theta})]$.

Now, observe that
\begin{align}
    C_{U_N}(\Tilde{\Theta}) - C_{U_N}(\overline{\Theta})
    &= \int_{\s \in U_N} (V(\s; \Tilde{\Theta}) - V(\s; \overline{\Theta})) p(\s) d\s\\
    &= \int_{\s \in U_N} |V(\s; \Tilde{\Theta}) - V(\s; \overline{\Theta})| p(\s) d\s \label{eq:pointwise_optimality}\\
    &\geq \int_{\s \in U_N}G_{\min}||\Tilde{\Theta} - \overline{\Theta}||_{2}p(\s)d\s \label{ineq:apply_bilipschitz_1}\\
    &= G_{\min}||\Tilde{\Theta} - \overline{\Theta}||_{2}\int_{\s \in U_N}p(\s)d\s\\
    &= G_{\min}||\Tilde{\Theta} - \overline{\Theta}||_{2} \mathbb{P}(\s \in U_N), \label{ineq:first_C}
\end{align}
where \eqref{eq:pointwise_optimality} is due to the fact that $V(\s; \Tilde{\Theta}) > V(\s; \overline{\Theta}),~\forall \s \in U_N$ as from Lemma \ref{lemma:point_vol}, and \eqref{ineq:apply_bilipschitz_1} is obtained by applying bi-Lipschitz continuity of $V(\s; \cdot)$ as assumed in \eqref{as:bilipschitz}. Similarly, we have
\begin{align}
    |C_{\cS \setminus U_N}(\Tilde{\Theta}) - C_{\cS \setminus U_N}(\overline{\Theta})|
    &\leq \int_{\s \in \cS \setminus U_N}|V(\s; \Tilde{\Theta}) - V(\s; \overline{\Theta})| p(\s) d\s \label{ineq:apply_triangle_V}\\
    &\leq \int_{\s \in \cS \setminus U_N}G_{\max}||\Tilde{\Theta} - \overline{\Theta}||_{2} p(\s) d\s \label{ineq:apply_bilipschitz_2}\\
    &= G_{\max}||\Tilde{\Theta} - \overline{\Theta}||_{2}\int_{\s \in \cS \setminus U_N} p(\s) d\s \\
    &= G_{\max}||\Tilde{\Theta} - \overline{\Theta}||_{2} \mathbb{P}(\s \in \cS \setminus U_N), \label{ineq:second_C}
\end{align}
where \eqref{ineq:apply_triangle_V} is obtained via triangle inequality, and \eqref{ineq:apply_bilipschitz_2} is from the bi-Lipschitz continuity of $V(\s; \cdot)$ as in the assumption \eqref{as:bilipschitz}.

Combining \eqref{ineq:first_C}, \eqref{ineq:second_C} with the assumption that $\cS_{N}=\{\s^{(1)}, ..., \s^{(N)}\}$ is sampled densely and closely enough over $\cS$ via $p_{\s}$ such that
\begin{align}
    \frac{\mathbb{P}(\s \in U_N)}{\mathbb{P}(\s \in \cS \setminus U_N)} > \frac{G_{\max}}{G_{\min}} > 1,
\end{align}
we have the following chain of inequalities:
\begin{align}
    C_{U_N}(\Tilde{\Theta}) - C_{U_N}(\overline{\Theta})
    &\geq G_{\min}||\Tilde{\Theta} - \overline{\Theta}||_{2} \mathbb{P}(\s \in U_N)\\
    &> G_{\max}||\Tilde{\Theta} - \overline{\Theta}||_{2} \mathbb{P}(\s \in \cS \setminus U_N)\\
    &\geq |C_{\cS \setminus U_N}(\Tilde{\Theta}) - C_{\cS \setminus U_N}(\overline{\Theta})|\\
    &\geq C_{\cS \setminus U_N}(\overline{\Theta}) - C_{\cS \setminus U_N}(\Tilde{\Theta}).
\end{align}
Therefore, the following strict inequality holds:
\begin{align}
    C_{U_N}(\Tilde{\Theta}) + C_{\cS \setminus U_N}(\Tilde{\Theta}) > C_{U_N}(\overline{\Theta}) + C_{\cS \setminus U_N}(\overline{\Theta}),
\end{align}
or equivalently,
\begin{align}
    \bbE[V(\s; \Tilde{\Theta})] > \bbE[V(\s; \overline{\Theta})].
\end{align}
That is to say, $\Tilde{\Theta} = [\Tilde{\btheta}, \Tilde{\bphi}]$ is indeed a feasible solution of J-VolMax that is strictly better than $\overline{\Theta} = (\overline{\btheta}, \overline{\bphi})$ in terms of expected Jacobian volume in J-VolMax objective \eqref{eq:jvolmax_obj}. This contradicts the assumed optimality of $(\overline{\bm \theta}, \overline{\bm \phi})$ to J-VolMax problem \eqref{eq:jvolmax}. 

We hence conclude that any optimal solution with encoder $\widehat{\g}$, the Jacobian at every point $\s^{(n)} \in \cS_N$ of the function $\h^{\star} = \widehat{\g} \circ \f$ must satisfy
$$
\J_{\h^{\star}}(\s^{(n)}) = \D(\s^{(n)}){\bm \Pi}(\s^{(n)}),
$$
where $\D(\s^{(n)})$ is an invertible diagonal matrix with $[\D(\s^{(n)})]_{i, i}$ dependent on $i$-th component $s^{(n)}_{i}$ only, and ${\bm \Pi}(\s^{(n)})$ is a permutation matrix dependent on the point $\s^{(n)}$.
\end{proof}

In the following Lemma \ref{lemma:fixed_perm}, we show that under further regularity conditions, the permutation ordering in the result \eqref{eq:point_result} of Lemma \ref{lemma:point},
\begin{align*}
    \widehat{\g}(\x^{(n)}) = \widehat{\bPi}(\s^{(n)})\widehat{\brho}(\s^{(n)}),~\forall \s^{(n)} \in \cS_{N},
\end{align*}
become a constant permutation independent of $\s^{(n)}$, i.e., $\widehat{\bPi}(\s^{(n)}) = \widehat{\bPi},~\forall \s^{(n)} \in \cS_{N}$. That is,
\begin{align*}
    \widehat{\g}(\x^{(n)}) = \widehat{\bPi}\widehat{\brho}(\s^{(n)}),~\forall \s^{(n)} \in \cS_{N}.
\end{align*}

\begin{mdframed}[backgroundcolor=gray!10,topline=false, rightline=false, leftline=false, bottomline=false]
\begin{lemma}
\label{lemma:fixed_perm}
Assume that there is a finite set $\cS_{N} := \{\s^{(1)}, ..., \s^{(N)}\}$ with $\cX_{N} := \{\x \in \cX: \x = \f(\s), \forall \s \in \cS_{N}\}$ such that the Assumption \ref{as:sdi} is satisfied at each of the $N$ points in $\cS_{N}$.
Denote $\widehat{\g} \in \cG$ as the optimal encoder learned by J-VolMax.
Suppose that
\begin{align}
    \widehat{\g}(\x^{(n)}) = \widehat{\bPi}(\s^{(n)})\widehat{\brho}(\s^{(n)}),~\forall \s^{(n)} \in \cS_{N} \label{eq:prev_result}
\end{align}
for a $\s^{(n)}$-dependent permutation $\widehat{\bPi}$, and additionally these three regularity conditions hold:
\begin{enumerate}[leftmargin=*, labelsep=0.5em, itemsep=-0.5em]
    \item The functions $\f, \widehat{\g}, \widehat{\brho}$ are Lipschitz continuous with constants $L_{\f}, L_{\widehat{\g}}, L_{\widehat{\brho}} > 0$.
    \item There is a constant $\gamma > 0$ such that for any permutation matrix $\bPi \in \cP_d$ and $\bPi \neq \widehat{\bPi}(\s^{(n)})$,
    \begin{align}
        ||\widehat{\g}(\x^{(n)}) - \bPi\widehat{\brho}(\s^{(n)})||_{2} \geq \gamma,~\forall n \in [N]. \label{as:suboptimal_perm}
    \end{align}
    \item For $\cN^{(n)} = \{\s \in \cS: ||\s - \s^{(n)}||_{2} < r^{(n)}\}$ with $r^{(n)} < \frac{\gamma}{2(L_{\f}L_{\widehat{\g}} + L_{\widehat{\brho}})}$, the union set of the neighborhoods, $\cN := \bigcup_{n=1}^{N} \cN^{(n)}$, is a connected subset of $\cS$.
\end{enumerate}
Then, the permutation ordering $\widehat{\bPi}(\s^{(n)})$ in \eqref{eq:prev_result} of the estimated latent components is constant, i.e., $\widehat{\bPi}(\s^{(n)}) = \widehat{\bPi}$ for a fixed permutation $\widehat{\bPi} \in \cP_{d}$. Consequently,
\begin{align}
    \widehat{\g}(\x^{(n)}) = \widehat{\bPi}\widehat{\brho}(\s^{(n)}),\forall n \in [N].
\end{align}
\end{lemma}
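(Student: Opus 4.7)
The goal is to upgrade the $\s$-dependent permutation $\widehat{\bPi}(\s^{(n)})$ from the premise \eqref{eq:prev_result} to a single permutation $\widehat{\bPi} \in \cP_d$ shared by all $n \in [N]$. The plan has two stages. A \emph{local} step uses the Lipschitz constants, together with the margin assumption \eqref{as:suboptimal_perm}, to show that whenever two neighborhoods $\cN^{(n)}$ and $\cN^{(n')}$ overlap, their associated permutations must coincide. A \emph{global} step then leverages the topological connectedness of $\cN = \bigcup_{n=1}^{N} \cN^{(n)}$ to propagate this pairwise agreement along chains of overlapping neighborhoods to all pairs of indices.

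\textbf{Local step.} Fix $n, n'$ with $\cN^{(n)} \cap \cN^{(n')} \neq \emptyset$. Any shared point $\s^{\star}$ satisfies $||\s^{(n)} - \s^{\star}||_2 < r^{(n)}$ and $||\s^{(n')} - \s^{\star}||_2 < r^{(n')}$, so the triangle inequality gives $||\s^{(n)} - \s^{(n')}||_2 < r^{(n)} + r^{(n')} < \gamma/(L_{\f}L_{\widehat{\g}} + L_{\widehat{\brho}})$ by the choice of radii. I would then bound the probe quantity $||\widehat{\g}(\x^{(n)}) - \widehat{\bPi}(\s^{(n')})\widehat{\brho}(\s^{(n)})||_2$ by inserting and subtracting $\widehat{\g}(\x^{(n')})$, using the premise $\widehat{\g}(\x^{(n')}) = \widehat{\bPi}(\s^{(n')})\widehat{\brho}(\s^{(n')})$ to eliminate the intermediate term, and then applying the triangle inequality together with Lipschitz continuity of $\widehat{\g} \circ \f$ (constant $L_{\widehat{\g}} L_{\f}$), Lipschitz continuity of $\widehat{\brho}$ (constant $L_{\widehat{\brho}}$), and the fact that a permutation matrix is an $\ell_2$-isometry. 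This chain of estimates produces $||\widehat{\g}(\x^{(n)}) - \widehat{\bPi}(\s^{(n')})\widehat{\brho}(\s^{(n)})||_2 \leq (L_{\f}L_{\widehat{\g}} + L_{\widehat{\brho}})||\s^{(n)} - \s^{(n')}||_2 < \gamma$. Assumption \eqref{as:suboptimal_perm} states that any permutation different from $\widehat{\bPi}(\s^{(n)})$ incurs error at least $\gamma$ at $\s^{(n)}$, so this strict bound forces $\widehat{\bPi}(\s^{(n')}) = \widehat{\bPi}(\s^{(n)})$.

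\textbf{Global step and main obstacle.} Define an auxiliary graph $G$ on vertex set $[N]$ with an edge between $n$ and $n'$ whenever $\cN^{(n)} \cap \cN^{(n')} \neq \emptyset$. Topological connectedness of $\cN$ forces $G$ to be connected: otherwise a partition of $[N]$ into graph-components $A, B$ would yield $\cN = \bigl(\bigcup_{n \in A} \cN^{(n)}\bigr) \cup \bigl(\bigcup_{n \in B} \cN^{(n)}\bigr)$ as a separation into two nonempty disjoint open subsets of $\cS$ (each a union of open balls), contradicting connectedness of $\cN$. Graph-connectedness then supplies a chain of overlapping neighborhoods joining any two indices, and iterating the local step along the chain yields a common permutation $\widehat{\bPi}$ with $\widehat{\g}(\x^{(n)}) = \widehat{\bPi}\widehat{\brho}(\s^{(n)})$ for every $n \in [N]$. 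The most delicate part of the argument is the calibration of the radii: the specific bound $r^{(n)} < \gamma / (2(L_{\f}L_{\widehat{\g}} + L_{\widehat{\brho}}))$ is chosen so that the two Lipschitz contributions evaluated at $r^{(n)} + r^{(n')}$ come in strictly under $\gamma$, leaving exactly enough slack to trigger the margin assumption; any looser choice would break the implication.
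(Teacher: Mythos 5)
Your proposal is correct and follows essentially the same route as the paper's proof: the identical Lipschitz-plus-margin estimate with the radius calibration $r^{(n)} < \gamma/(2(L_{\f}L_{\widehat{\g}} + L_{\widehat{\brho}}))$ forces agreement of permutations locally, and connectedness of $\cN$ propagates this globally. The only difference is packaging --- the paper shows the argmin-permutation map is locally constant on each $\cN^{(n)}$ and invokes the fact that a locally constant map from a connected set into a discrete space is constant, whereas you compare the centers of overlapping balls directly and chain through the overlap graph; these are equivalent formulations of the same argument.
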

\end{mdframed}

\begin{proof}
Define a function $\ell_{\bPi}(\s)$ parametrized by a permutation matrix $\bPi \in \cP_{d}$ as
\begin{align}
    \ell_{\bPi}(\s) := ||\widehat{\g}(\x) - \bPi\widehat{\brho}(\s)||_{2}.
\end{align}
From the Lipschitz continuity of $\f, \widehat{\g}, \widehat{\brho}$, we have the corresponding Lipschitz continuity property of $\ell_{\bPi}(\cdot)$: for any $\s, \s' \in \cS$,
\begin{align}
    \big|\ell_{\bPi}(\s) - \ell_{\bPi}(\s')\big|
    &= \big| || \widehat{\g}(\f(\s)) - \bPi\widehat{\brho}(\s) ||_{2} - || \widehat{\g}(\f(\s')) - \bPi\widehat{\brho}(\s') ||_{2} \big|\\
    &\leq ||(\widehat{\g}(\f(\s)) - \bPi\widehat{\brho}(\s)) - (\widehat{\g}(\f(\s')) - \bPi\widehat{\brho}(\s')) ||_{2} \label{ineq:triangle_1}\\
    &= ||(\widehat{\g}(\f(\s)) - \widehat{\g}(\f(\s'))) + (\bPi\widehat{\brho}(\s') - \bPi\widehat{\brho}(\s)) ||_{2} \\
    &\leq ||\widehat{\g}(\f(\s)) - \widehat{\g}(\f(\s'))||_{2} + ||\bPi\widehat{\brho}(\s)- \bPi\widehat{\brho}(\s')||_{2} \label{ineq:triangle_2}\\
    &= ||\widehat{\g}(\f(\s)) - \widehat{\g}(\f(\s'))||_{2} + ||\widehat{\brho}(\s)- \widehat{\brho}(\s')||_{2} \label{eq:perm_ortho}\\
    &\leq L_{\widehat{\g}}L_{\f}||\s - \s'||_{2} + L_{\widehat{\brho}}||\s - \s'||_{2} \label{ineq:lipschitz}\\
    &= (L_{\widehat{\g}}L_{\f} + L_{\widehat{\brho}})||\s - \s'||_{2},
\end{align}
where we applied the triangle inequality in \eqref{ineq:triangle_1} and \eqref{ineq:triangle_2}, the fact that $\bPi \in \cP_{d}$ is an orthogonal matrix in \eqref{eq:perm_ortho}, and the Lipschitz continuity of $\f, \widehat{\g}, \widehat{\brho}$ in \eqref{ineq:lipschitz}. Then, we have the following chain of inequalities for any $\s \in \cN^{(n)}$:
\begin{align}
    \ell_{\widehat{\bPi}(\s^{(n)})}(\s)
    &\leq \ell_{\widehat{\bPi}(\s^{(n)})}(\s^{(n)}) + (L_{\widehat{\g}}L_{\f} + L_{\widehat{\brho}})||\s - \s^{(n)}||_{2} \label{ineq:apply_lipschitz_1}\\
    &< \ell_{\widehat{\bPi}(\s^{(n)})}(\s^{(n)}) + (L_{\widehat{\g}}L_{\f} + L_{\widehat{\brho}}) r^{(n)} \label{ineq:radius_1}\\
    &= (L_{\widehat{\g}}L_{\f} + L_{\widehat{\brho}}) r^{(n)} \label{ineq:lipchitz_1},
\end{align}
where we have used the Lipschitz property of $\ell_{\widehat{\bPi}(\s^{(n)})}$ for \eqref{ineq:apply_lipschitz_1}, the defined radius of $\cN^{(n)}$ for \eqref{ineq:radius_1}, and the fact that $\ell_{\widehat{\bPi}(\s^{(n)})}(\s^{(n)}) = 0$ due to \eqref{eq:prev_result} for \eqref{ineq:lipchitz_1}. Similarly, for any $\s \in \cN^{(n)}$ and any permutation matrix $\bPi \neq \widehat{\bPi}(\s^{(n)})$,
\begin{align}
    \ell_{\bPi}(\s)
    &\geq \ell_{\bPi}(\s^{(n)}) - (L_{\widehat{\g}}L_{\f} + L_{\widehat{\brho}})||\s - \s^{(n)}||_{2} \label{ineq:apply_lipschitz_2}\\
    &> \ell_{\bPi}(\s^{(n)}) - (L_{\widehat{\g}}L_{\f} + L_{\widehat{\brho}})r^{(n)} \label{ineq:radius_2}\\
    &> \gamma - (L_{\widehat{\g}}L_{\f} + L_{\widehat{\brho}})r^{(n)} \label{ineq:lipchitz_2},
\end{align}
where we again used the Lipschitz property of $\ell_{\widehat{\bPi}(\s^{(n)})}$ for \eqref{ineq:apply_lipschitz_2}, the defined radius of $\cN^{(n)}$ for \eqref{ineq:radius_2}, and the error gap of non-optimal permutations in assumption \eqref{as:suboptimal_perm} for \eqref{ineq:lipchitz_2}. Combining \eqref{ineq:lipchitz_1} and \eqref{ineq:lipchitz_2} with the assumption $r^{(n)} < \frac{\gamma}{2(L_{\widehat{\g}}L_{\f} + L_{\widehat{\brho}})}$, we have
\begin{align}
    \ell_{\widehat{\bPi}(\s^{(n)})}(\s) < (L_{\widehat{\g}}L_{\f} + L_{\widehat{\brho}}) r^{(n)} < \gamma - (L_{\widehat{\g}}L_{\f} + L_{\widehat{\brho}})r^{(n)} < \ell_{\bPi}(\s).
\end{align}
This implies that for any points $\s$ in the $\s^{(n)}$-centered neighborhood $\s \in \cN^{(n)}$, the optimal permutation for $\ell_{\bPi}(\s)$ is still the optimal permutation $\widehat{\bPi}(\s^{(n)})$ at the center point $\s^{(n)}$; that is,
\begin{align}
    \widehat{\bPi}(\s^{(n)}) = \arg\min_{\bPi \in \cP_d}||\widehat{\g}(\x) - \bPi\widehat{\brho}(\s)||_{2},~\forall \s \in \cN^{(n)}.
\end{align}
As a result, $\widehat{\bPi}(\cdot): \cN_{N} \mapsto \cP_{d}$, which maps a latent vector $\s \in \cN$ to a permutation matrix in $\cP_{d}$, is a locally constant mapping over the union set $\cN$. Since the locally constant $\widehat{\bPi}(\cdot)$ maps to a discrete space $\cP_d$, the map $\widehat{\bPi}(\cdot)$ is indeed a continuous map. Combining the continuity of $\widehat{\bPi}(\cdot)$, a map from $\cN$ into a discrete space $\cP_d$, with the fact that $\cN$ is connected, we can conclude that $\widehat{\bPi}(\cdot)$ is indeed constant. Hence, $\widehat{\bPi}(\s) = \widehat{\bPi}$ for any $\s \in \cN$, which also implies $\widehat{\bPi}(\s^{(n)}) = \widehat{\bPi}$ for any $n \in [N]$.

In conclusion, there is a single permutation matrix $\widehat{\bPi} \in \cP_d$ such that
\begin{align}
    \widehat{\g}(\x^{(n)}) = \widehat{\bPi}\widehat{\brho}(\s^{(n)}),\forall n \in [N].
\end{align}
\end{proof}

Lastly, we combine the results from Lemma \ref{lemma:point_vol}, \ref{lemma:point}, \ref{lemma:fixed_perm} with a Rademacher complexity-based generalization bound to derive Theorem \ref{thm:finitethm}.

\begin{mdframed}[backgroundcolor=gray!10,topline=false, rightline=false, leftline=false, bottomline=false]
\finitethm*
\end{mdframed}

\begin{proof}
Given that the J-VolMax criterion \eqref{eq:jvolmax} provides us with an optimal solution $(\widehat{\f}, \widehat{\g})$ reaching maximal Jacobian volume at each $\s^{(n)}$, such that the estimated latent components $\widehat{\g}(\x^{(n)}) = \widehat{\s}^{(n)}$ identifies $\s^{(n)} \in \cS_N$, up to permutation and component-wise transformation, i.e., 
\begin{align}
    \widehat{\g}(\x^{(n)}) = \widehat{\bPi}\widehat{\brho}(\s^{(n)}),~\forall n \in [N],
\end{align}
we now show a finite-sample analysis on how well $\widehat{\g}$ can estimate the ground-truth latent sources over all $\s \in \cS$, up to the said permutation $\widehat{\bPi}$ and the invertible element-wise mappings $\widehat{\brho}(\cdot)$ associated with $\s^{(n)} \in \cS_{N}$. To proceed, let us define a loss function
\begin{align}
    \ell({\g}, (\x, \s)) = ||{\g}(\x) - \widehat{\bPi}\widehat{\brho}(\s)||_{2},
\end{align}
which is $L_{\ell}$-Lipschitz, with the constant depending on Lipschitz continuity of $\widehat{\g}, \f, \widehat{\brho}$ as $L_{\ell} = L_{\widehat{\g}}L_{\f} + L_{\widehat{\brho}}$ (as derived in Lemma \ref{lemma:fixed_perm}). The defined loss function can be assumed to be upper-bounded by a finite constant as $\ell({\g}, (\x, \s)) \leq M$. Note that the encoder from J-VolMax $\widehat{\g}$ minimizes the following empirical risk:
\begin{align}
    \textstyle \widehat{\cL}(\g) := \frac{1}{N}\sum_{n=1}^{N}\ell(\g, (\x^{(n)}, \s^{(n)})),
\end{align}
i.e., $\widehat{\cL}(\widehat{\g}) = 0$. We are now in a position to use a generalization bound that characterizes the difference between the given empirical risk $\widehat{\cL}({\bm \phi})$ and the population risk
\begin{align}
    \cL(\g) := \bbE_{\s \sim p_{\s}}[\ell(\g, (\x, \s))] = \bbE_{\s \sim p_{\s}}[||\g(\x) - \widehat{\bPi}\widehat{\brho}(\s)||_{2}].
\end{align}
Applying an (empirical) Rademacher complexity-based generalization bound \cite[Theorem 26.5]{shalev2014uml} with the contraction lemma \cite[Lemma 26.9]{shalev2014uml} gives the following with probability at least $1-\delta$:
\begin{align}
    \cL(\g) \leq \widehat{\cL}(\g) + 2L_{\ell}\cR_N(\cG) + 4M\sqrt{\frac{2\ln(4/\delta)}{N}},~\forall \g \in \cG.
\end{align}
As a result, the following bound holds for any optimal solution $\widehat{\g} \in \cG$: with probability at least $1-\delta$,
\begin{align}
    \cL(\widehat{\g}) = \bbE_{\s \sim p_{\s}}[||\widehat{\g}(\x) - \widehat{\bPi}\widehat{\brho}(\s)||_{2}] \leq 2L_{\ell}\cR_N(\cG) + 4M\sqrt{\frac{2\ln(4/\delta)}{N}}.
\end{align}
\end{proof}

\clearpage

\section{Additional Remarks on Related Works}\label{app:remarks_relatedworks}
{\bf IMA.} The IMA framework also has the flavor of exploiting influence diversity \cite{gresele2021ima}, but the settings appear to be more restrictive. There, the ground-truth decoder $\f$ is assumed to have an orthogonal Jacobian (e.g., Möbius transformations \cite{gresele2021ima}), which reflects linearly uncorrelated influences from the latent components. In addition, IMA also lacks underpinning of global identifiability. {Albeit, similar orthogonal Jacobian-based regularization have found empirical successes for disentanglement in computer vision \cite{wei2021orojar}.}

{\bf Sparse Jacobian-based NMMI.} Our proposed SDI condition naturally covers some cases of sparse Jacobian conditions employed in \cite{moran2022sparsevae, zheng2022ss, zheng2023generalss, brady2023objcentric}. A notable example is when $m = 2d$, the SDI assumption boils down to a sparsity pattern in $\J_{\f}$ where each row touches a corner of the weighted $L_1$ ball ${\cal B}_1^{\bm w(s)}$---and thus the gradients are scaled unit vectors (with $d-1$ zeros).
Nonetheless, when $m > 2d$, the SDI condition also covers infinitely many cases of where $\bm J_{\bm f}$ is completely dense. We note that although Jacobian $L_1$ regularization appears in both J-VolMax and sparse Jacobian criteria {(see \cite{rhodes2021l1reg, zheng2022ss, zheng2023generalss})}, the reasons of having the $L_1$ regularization are very different: the methods in \cite{rhodes2021l1reg, zheng2022ss, zheng2023generalss} use the $L_1$ norm as a surrogate to attain Jacobian sparsity, but J-VolMax uses the regularization to confine $\nabla f_i(\bm s)$'s in an $L_1$-norm ball.

{
\textbf{Object-centric Representation Learning (OCRL).} While the goal of NMMI is to recover each individual latent variable, OCRL \cite{brady2023objcentric, lachapelle2023additive, kori2024objcentric, brady2025interaction} seeks a weaker form of identifiability---identifying blocks of multiple latent of variables corresponding to objects/concepts in the observed data. Notably, \cite{brady2023objcentric, lachapelle2023additive, brady2025interaction} propose specific forms of decoders for block-wise identifiability, which imply sparsity structures in Jacobian or higher-derivatives of $\f$. The two settings align when the block size in object-centric learning becomes $1$. For example, the \textit{compositional generator} in \cite{brady2023objcentric} and \textit{additive decoder} in \cite{lachapelle2023additive} would yield a decoder Jacobian of disjoint $\ell \times 1$ strips. We note that while \cite{brady2023objcentric} similarly imposes a Jacobian-based regularizer in their training loss as DICA, \cite{lachapelle2023additive} imposes the additive decoder structure at the model architecture level. Interestingly, the additive decoder structure in \cite{lachapelle2023additive} was shown to have equivalent effects in disentanglement as a block-diagonal Hessian penalty proposed in \cite{peebles2020hessian}. Further exploration of the intersection between DICA and object-centric learning remains an exciting future direction.
}

\section{Experiment Details}

\subsection{Further Details on Implementation and Evaluation}
\label{app:exp_implement}

\textbf{On Warm-up Heuristic.} The use of warm-up heuristic with regularization schedulers help alleviate the numerical instability that comes with optimizing $\log\det$ of Jacobian of the decoding neural network $\f_{\btheta}$, which can quickly explode if not controlled. Specifically, by gradually introducing the $\log\det$ term, we prioritize optimizing for data reconstruction in the warm-up period, since it is a hard constraint in the J-VolMax formulation \eqref{eq:jvolmax_recons}. Moreover, by minimizing $||\J_{\f_{\btheta}}(\g_{\bphi}(\x^{(n)}))||_{1}$ during warm-up, we prevent the Jacobian from exploding as a result of maximizing $c_{\rm vol}(t)$ while keeping the norm term $||\J_{\f_{\btheta}}(\g_{\bphi}(\x^{(n)}))||_{1}$ at a reasonable magnitude. This also encourages a smoother encoder $\f_{\btheta}$ with small enough Lipschitz constant; recall that this is important for identifiability of J-VolMax under finite SDI-satisfying samples, as pointed out by Theorem \ref{thm:finitethm}.

\textbf{Efficient Computation of Jacobian Volume Regularizer.} As mentioned in Section \ref{sec:exp}, the computational cost of the log-det volume surrogate $c_{\rm vol}$ in \eqref{eq:cvol} is $\mathcal{O}(m^3)$. This might make $c_{\rm vol}$ impractical to use in high-dimensional data setting, where the dimension of the Jacobian matrix is large. An alternative volume surrogate is to use \cite{berman2004ice, fu2016robust}
\begin{align}
    \hat{c}_{\rm tr}
    &= \tr((d\I - \mathbf{11}^{\top}) \J_{\f_{\theta}}(\g_{\bphi}(\x^{(n)})^{\top}\J_{\f_{\theta}}(\g_{\bphi}(\x^{(n)}))\\
    &= \sum_{i=1}^{d-1}\sum_{j=i+1}^{d}\left|\left|\frac{\partial \f_{\btheta}(\g_{\bphi}(\x^{(n)}))}{\partial \hat{s}_{i}^{(n)}} - \frac{\partial \f_{\btheta}(\g_{\bphi}(\x^{(n)}))}{\partial \hat{s}_{j}^{(n)}}\right|\right|^{2}_{2}
\end{align}
which corresponds to the sum of Euclidean distances between all pairs of partial derivative vectors $\partial \f_{\btheta} / \partial \hat{s}_i$. Maximizing $c_{\rm tr}$ would force the partial derivatives to be spread out, akin to the effect from maximizing log-det surrogate $c_{\rm vol}$, while reducing the computational cost to $\mathcal{O}(m^{2})$. In Table \ref{tab:trace}, we test the wall-clock time needed for backpropagation per epoch of the logdet-based and trace-based regularization, using a setting similar to Mixture C in our synthetic experiment (with $d = 3, m = 40$). One can see that while trace-based DICA formulation is approximately $25\%$ faster than logdet-based DICA, $R^2$ score only slightly decreases. Lastly, we remark that there are other methods for reducing the computational cost for optimizing Jacobian determinant of a neural network, such as \cite{gresele2020relative}.

\begin{table}[t!]
    \centering
    \caption{Compare performance and wall-clock time for gradient computation (per epoch) of trace-based and logdet-based surrogate for Jacobian volume.}
    \label{tab:trace}
    \begin{tabular}{l|c|c|c|c}
        & \texttt{Logdet-based DICA} & \texttt{Trace-based DICA} & \texttt{Sparse} & \texttt{Base}\\
        \hline
        $R^2$ score & $0.94 \pm 0.08$ & $0.91 \pm 0.07$ & $0.79 \pm 0.12$ & $0.63 \pm 0.04$\\
        \hline
        Time (ms) & $31.31 \pm 0.30$ & $24.89 \pm 0.26$ & $18.95 \pm 0.18$ & $6.90 \pm 0.18$
    \end{tabular}
\end{table}

\textbf{Evaluation with $R^2$.} To evaluate the $R^2$ score, we use nonlinear kernel ridge regression with radial basis function kernel \cite{murphy2022pml}, which is an universal function approximator. We train the regression model on a train set, and use the prediction of the trained regressor on a test set to calculate the coefficient of determination. This gives us the nonlinear $R^2$ score.

\textbf{Compute Resources.} All experiments use one NVIDIA A40 48GB GPU, hosted on a server using Intel Xeon Gold 6148 CPU @ 2.40GHz with 260GB of RAM.

\subsection{Synthetic Simulations}
For each of the simulation, we generate $30000$ samples from the described synthetic data generation processes, of which $90\%$ are for training and $10\%$ are for evaluating the $MCC$ and $R^2$ scores. We use two ReLU fully-connected neural networks with one $64$-neuron hidden layer for the encoder and the decoder. The autoencoder is trained via Adam method \cite{kingma2015adam} with learning rate $10^{-4}$ for $200$ iterations, among which the first $20$ epochs are for warm-up. The regularization hyperparameters are chosen by validating from $\{10^{-2}, 10^{-3}, 10^{-4}, 10^{-5}\}$, resulting in $\lambda_{\rm vol} = 10^{-4}, \lambda_{\rm norm} = 10^{-4}, \lambda_{\rm sp} = 10^{-4}$. The neural networks are initialized via He initialization with uniform distribution \cite{he2015init}.

\subsection{Single-cell Transcriptomics Analysis}
\label{app:exp_sergio}

\textbf{Data Generation with SERGIO Simulator.} We extract top $5$ TFs that regulate the most number of genes in TRRUST database. Furthermore, we only keep $30\%$ of genes with a single regulator, so as to have a more challenging mixture; this gives us $m = 178$ gene expressions.

To construct the gene regulatory mechanism $\f$ for SERGIO generation, we use the extracted high-confident interactions from TRRUST as primary regulating edges with coefficients randomly sampled from $U(1.5, 1.8)$ if the TF is the gene's activator, and from $U(-1.8, -1.5)$ if the TF is the gene's repressor. In addition, to model the potential spurious cross-talks interactions between the TFs and the genes, we add secondary weak regulating edges with coefficients with random signs and their magnitudes uniformly sampled from $U(0.1, 1.0)$.

We simulate $20000$ cells of the same cell type, and use the gene expression data of these cells as the observation dataset to train with J-VolMax learning criterion.

\textbf{Training.} We use two ReLU fully-connected neural networks with one 64-neuron hidden layer for encoder and decoder. We use Adam optimizer \cite{kingma2015adam} with learning rate $10^{-4}$, and train the autoencoder for $4000$ epochs, with the first $1000$ epochs for warm-up. The regularization hyperparameters are chosen by validation from $\{10^{-2}, 10^{-3}, 10^{-4}, 10^{-5}\}$ to be $\lambda_{\rm vol} = 10^{-3}, \lambda_{\rm norm} = 10^{-4}, \lambda_{\rm sp} = 10^{-4}$. The neural networks are initialized using He initialization with normal distribution \cite{he2015init}. 

\section{Additional Experiment: Unsupervised Concept Discovery in MNIST}

\textbf{Setting.} In this section, we explore further experiments beyond nonlinear mixture model identification tasks, to demonstrate potential applicability of DICA in disentanglement and representation learning. Specifically, we apply DICA to the MNIST dataset \cite{lecun1998mnist} to train an autoencoder using J-VolMax loss function. We use convolutional neural networks (CNNs) for both encoder and decoder, with the architectures reported in Table \ref{tab:cnn}. We choose $d = 10$ as the latent dimension, and the observation dimension is $m = 32 \times 32 = 1024$. The regularization hyperparameters are $\lambda_{\rm vol} = 10^{-4}, \lambda_{\rm sp} = 10^{-4}$, and we optimizing using Adam with learning rate $10^{-3}$ for $100$ iterations, of which the first $50$ epochs are for warm-up. To prevent numerical instability regarding the log-determinant of Jacobian due to high-dimensional data of $m = 1024$, instead of optimizing $\log\det(\cdot)$ directly, we optimize $\log\det(\cdot + \tau \I)$ with $\tau > 0$, so as to avoid zero determinant that leads to exploding log-determinant.

To visualize how the learned latent factors affect the observed images, we encode a test image corresponding to a digit to achieve a latent vector $\widehat{\s} = [s_1, ..., s_{10}] \in \bbR^{10}$. Then, we vary each component $s_i$ in the range of $\pm 4~{\rm std}$ to achieve a new latent vector $\widehat{\s}_{\rm new}$ with a certain component being increased/decreased. The new latent vector $\widehat{\s}_{\rm new}$ are used as input of the trained decoder to obtain a new image $\x_{\rm new}$. Therefore, we can examine how a specific latent component $s_i$ can affect the observed image $\x_{\rm new}$.

\begin{table}[t!]
\centering
\caption{Architectures of encoder and decoder used in MNIST experiment (all use stride $2$, pad $1$, out\_pad $1$)}
\label{tab:cnn}
\begin{tabular}{>{\centering\arraybackslash}m{6.5cm} >{\centering\arraybackslash}m{6.5cm}}
\multicolumn{1}{c}{\textbf{\textit{Encoder}}} & \multicolumn{1}{c}{\textbf{\textit{Decoder}}} \\
\toprule
\textbf{Input:} $\x \in \bbR^{32 \times 32 \times 1}$ &
\textbf{Input:} $\widehat{\s} \in \mathbb{R}^{10 \times 1 \times 1}$ \\
$3 \times 3$ Conv, 256 ReLU & $2 \times 2$ ConvTrans, 32 ReLU \\
$3 \times 3$ Conv, 128 ReLU & $3 \times 3$ ConvTrans, 64 ReLU \\
$3 \times 3$ Conv, 64 ReLU & $3 \times 3$ ConvTrans, 128 ReLU \\
$3 \times 3$ Conv, 32 ReLU & $3 \times 3$ ConvTrans, 256 ReLU \\
$2 \times 2$ Conv, $10$ & $3 \times 3$ ConvTrans, 1 \\
\bottomrule
\end{tabular}
\end{table}

\textbf{Results.} We reported four visualizations corresponding to varying $s_8, s_9, s_1, s_7$ from images with digit $3, 0, 2, 5$, correspondingly in Fig.~\ref{fig:mnist}. One can observe that as a learned latent component increases/decreases, the semantic meaning (i.e., digit) of the image changes in a relatively uniform manner towards a new digit. This suggests that the latent components induced by J-VolMax are somewhat correlated with semantic meanings of the images.

We note that not all latent components we obtained correspond to a clear semantic meaning, and the variation of latent components can sometimes significantly distort the images beyond normal hand-written images. We hypothesize that this is due to the used autoencoder architecture not being suitable for disentanglement purposes, as well as the optimization algorithm not well-designed for this task, since we directly implement J-VolMax criterion without adaptations for image data. Nonetheless, the preliminary results are encouraging, and we speculate that with better-designed architecture and algorithm, the J-VolMax criterion can significantly improve in the challenging task of disentangling meaningful latent components of image data.

\begin{figure}[t!]
    \centering
    \begin{subfigure}{0.45\textwidth}
        \includegraphics[width=\textwidth]{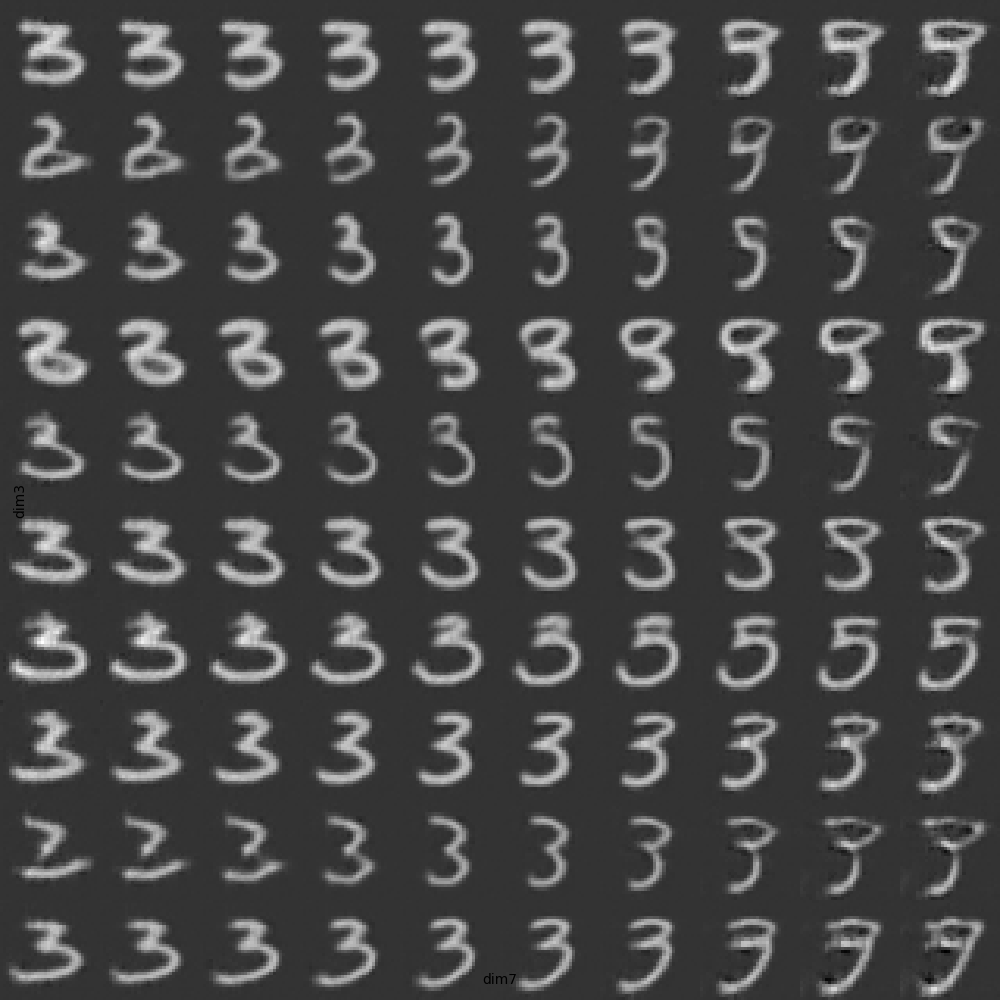}
        \caption{\centering [Anchor digit $3$] As $s_8$ \textit{increases}, digit $3$ increasingly looks like digit $9$.}
    \end{subfigure}
    \hfill
    \begin{subfigure}{0.45\textwidth}
        \includegraphics[width=\textwidth]{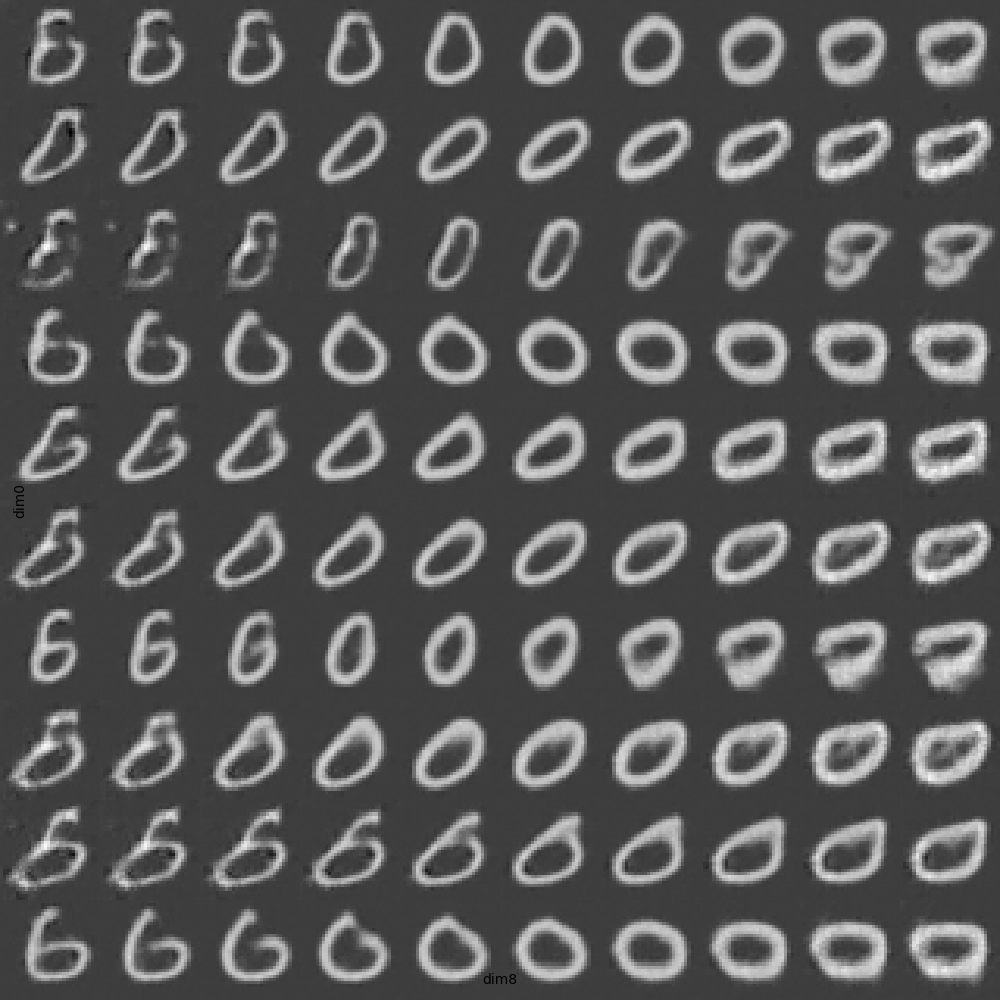}
        \caption{\centering [Anchor digit $0$] As $s_9$ \textit{decreases}, digit $0$ increasingly looks like digit $6$.}
    \end{subfigure}

    \begin{subfigure}{0.45\textwidth}
        \includegraphics[width=\textwidth]{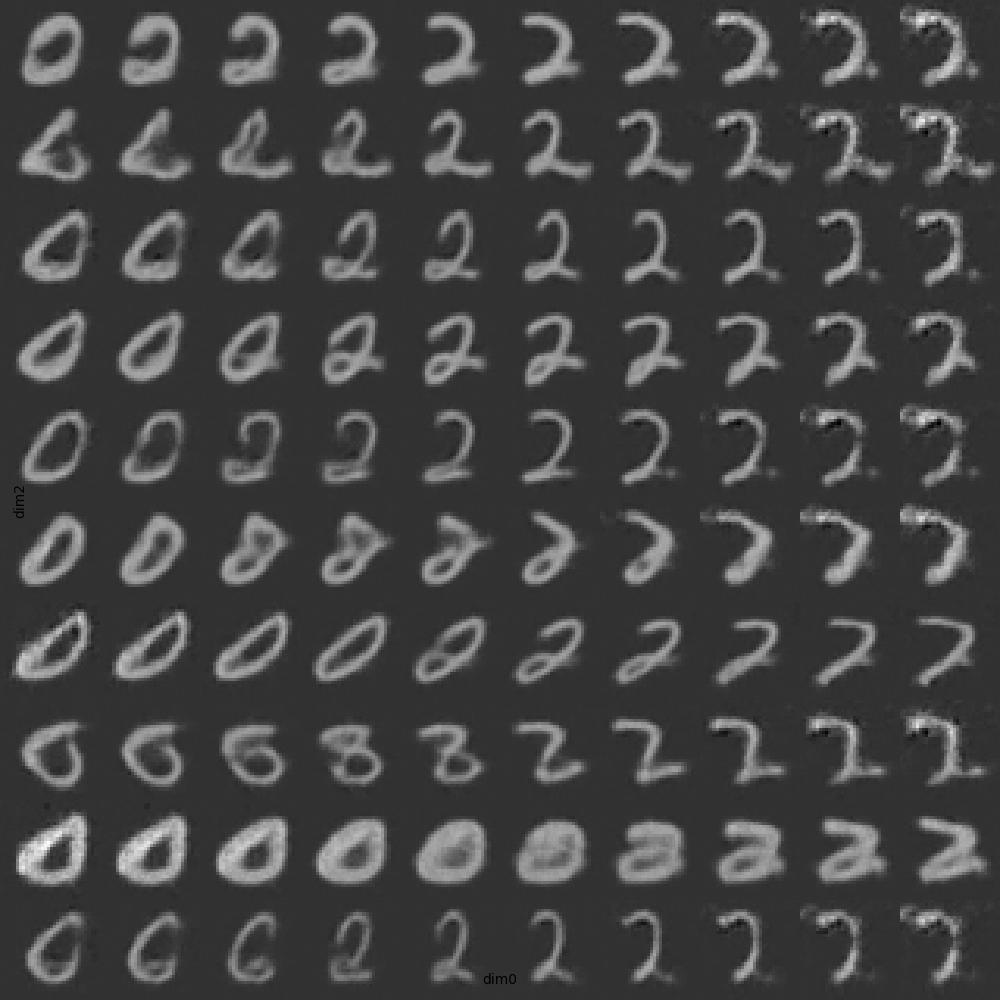}
        \caption{\centering [Anchor digit $2$] As $s_1$ \textit{decreases}, digit $2$ increasingly looks like digit $0$.}
    \end{subfigure}
    \hfill
    \begin{subfigure}{0.45\textwidth}
        \includegraphics[width=\textwidth]{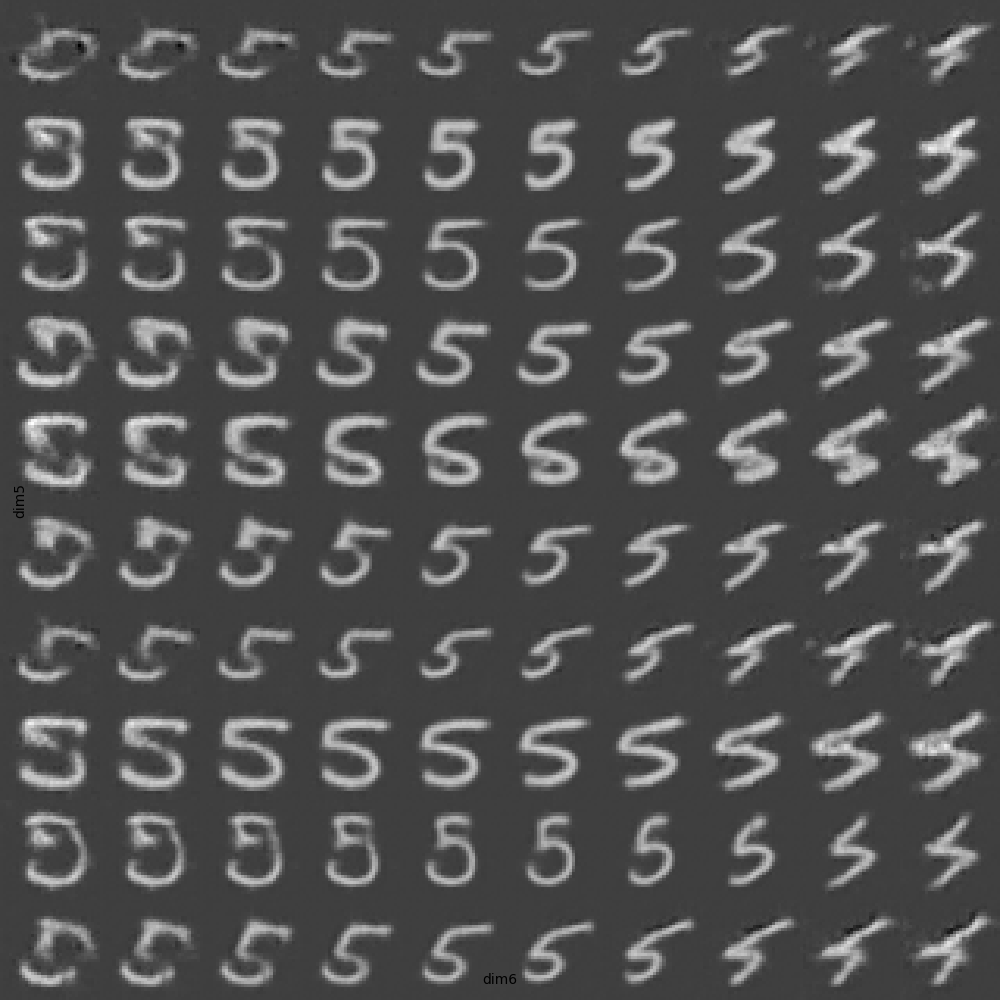}
        \caption{\centering [Anchor digit $5$] As $s_7$ \textit{increases}, digit $5$ increasingly looks like digit $4$.}
    \end{subfigure}
    \caption{Some resulting images obtained by varying a certain component $s_i$ by $\pm4~{\rm std}$ (increasing from left to right) from the latent vector of an anchor image. Each row corresponds to one of $10$ different anchor images sampled from test set, and each column is the resulting image by varing from the corresponding anchor image. We can see that some latent components correlate to the semantic meaning (i.e., digit) of output images: as some $s_i$ increases/decreases, the semantic digit of all $10$ anchor images change uniformly towards another digit.}
    \label{fig:mnist}
\end{figure}

\end{document}